\pdfoutput=1
\documentclass[twoside,11pt]{article}

\usepackage[abbrvbib, preprint]{jmlr2e}

\usepackage{amsmath}
\usepackage{booktabs}
\usepackage{hyperref}
\usepackage{cleveref}
\usepackage{algorithm}
\usepackage{algorithmic}
\usepackage{color}
\usepackage{xcolor}
\usepackage{enumitem}

\newtheorem{assumption}{Assumption}
\crefname{assumption}{assumption}{assumptions}
\Crefname{assumption}{Assumption}{Assumptions}

\newcommand{\norm}[1]{\|#1\|}
\newcommand{\inner}[2]{\langle #1, #2 \rangle}

\begin{document}

\title{Guided Flow Matching for Forward and Inverse PDE Problems with Sparse Observations: Algorithm and Theory}

\author{\name Xifeng Zhang \email <cnxifeng9819@163.com>\\
\addr School of Mathematical Science\\
Capital Normal University\\
Beijing, 100048, China
\AND
\name Jin Zhao \thanks{Corresponding author.} \email <zjin@cnu.edu.cn>\\
\addr Academy for Multidisciplinary Studies\\
Capital Normal University\\
Beijing, 100048, China
}

\maketitle

\begin{abstract}
    Reconstructing PDE solutions from sparse observations is a core challenge in scientific computing. We present FM4PDE, a flow-matching generative framework that learns the joint distribution of PDE coefficients (or initial states) and solutions (or final states), enabling both forward simulation and inverse recovery with limited paired data. At inference, sampling is guided by a composite loss that enforces agreement with sparse measurements and reduces the PDE residual; we support deterministic, stochastic, and hybrid samplers. We provide error guarantees for these guided procedures. For the deterministic optimizer, a coercivity condition ensures trajectory boundedness and a phase-wise contraction yields logarithmic complexity in the target accuracy. For the stochastic sampler, we introduce adaptive guidance and assume dissipativity of the velocity field to obtain uniform moment bounds independent of the noise-floor parameter. This leads to polynomial-time error bounds, and a matching lower bound shows constant guidance induces an unavoidable positive bias, motivating adaptivity. A hybrid deterministic–stochastic analysis is also provided. Experiments on static and time-dependent benchmark PDEs demonstrate competitive accuracy and faster inference than diffusion-based generative models.
\end{abstract}

\begin{keywords}
flow matching, partial differential equations, sparse observations, PDE constraints, error
   analysis
\end{keywords}

\section{Introduction}
\label{sec:introduction}

Partial differential equations (PDEs) are fundamental to the mathematical modeling of physical, biological, and engineering systems. Both forward problems---computing the solution given governing equations, initial conditions, and boundary conditions---and inverse problems---inferring unknown parameters from observations of the system state---are central tasks in computational science \citep{strauss2007partial,evans2022partial}. In many practical settings, however, the system state can only be observed at a sparse set of spatial or temporal locations due to sensor limitations, measurement costs, or accessibility constraints. Reconstructing the full solution field from such sparse observations is therefore an important and challenging problem.

Classical numerical methods and supervised learning approaches such as Physics-Informed Neural Networks \citep{raissi2019pinns} and neural operators \citep{li2020fno,lu2021deeponet,kovachki2023neur} have achieved strong performance when full-field data or dense observations are available. These methods, however, are not directly designed for the sparse-observation setting, where the reconstruction problem is inherently ill-posed and a probabilistic treatment is natural.

Generative models provide a principled framework for such problems by learning the data distribution and generating plausible reconstructions conditioned on partial observations. Flow matching \citep{lipman2022flow,lipman2024flow,liu2022flow} has emerged as a particularly attractive generative paradigm: it learns a velocity field that transports a simple source distribution (e.g., Gaussian noise) to the target data distribution along smooth trajectories, offering stable training and efficient sampling compared with diffusion-based alternatives \citep{ho2020denoising,song2020score}. Guided sampling, in which a task-specific loss function steers the generation process \citep{ho2022classifier}, enables conditional generation without retraining.

\subsection{Related Works}
\label{related_works}

\paragraph{Physics-Informed and Operator Learning.} PINNs \citep{raissi2019pinns} embed PDE constraints directly into the neural network loss function, enabling solutions consistent with physical laws even with limited labeled data. Fourier Neural Operators (FNO) \citep{li2020fno} learn operator mappings in the Fourier domain for efficient resolution-invariant PDE solving. DeepONet \citep{lu2021deeponet} approximates nonlinear operators via a branch-trunk architecture. The general neural operator framework \citep{kovachki2023neur} extends these ideas to mappings between infinite-dimensional function spaces with provable approximation guarantees, and several further directions have been explored \citep{boulle2024operator,lanthaler2023operator,stepaniants2023learning,dalton2024boundary}. While these methods perform well with access to full solution data, they are not specifically designed to reconstruct global fields from sparse observations.

\paragraph{Generative Models for PDE Solving.} DiffusionPDE \citep{huang2024diffusionpde} learns the joint distribution of PDE coefficients and solutions using diffusion models, enabling simultaneous inference and solving under partial observations. CoCoGen \citep{jacobsen2025cocogen} applies score-based generative modeling with ControlNet-style conditioning \citep{zhang2023controlnet} for physics-informed sampling and field reconstruction. Flow-matching-based approaches such as PCFM \citep{utkarsh2025PCFM} and PBFM \citep{baldan2025PBFM} focus on enforcing physical consistency during the generative process. These methods demonstrate promising empirical results, but diffusion-based approaches typically incur substantial computational cost in both training and sampling. Moreover, rigorous convergence analysis for guided generative PDE solvers remains limited. On the theoretical side, convergence guarantees for score-based diffusion sampling have been established under various assumptions \citep{chen2022sampling,benton2023nearly}, but analogous results for guided flow matching---particularly in the PDE-constrained setting---have not been developed.

\subsection{Contributions}

We propose FM4PDE (Flow Matching for PDE-Solving), a guided flow matching framework for reconstructing global PDE solutions from sparse observations. Our main contributions are as follows.

\begin{enumerate}[label=(\roman*), topsep=4pt, itemsep=2pt]

\item \textbf{Framework and algorithm.} FM4PDE learns the joint distribution of paired PDE coefficients (or initial states) and solutions (or final states) via flow matching. During inference, a composite guidance loss $\mathcal{L} = \zeta_{\mathrm{obs}}\mathcal{L}_{\mathrm{obs}} + \zeta_{\mathrm{pde}}\mathcal{L}_{\mathrm{pde}}$, incorporating sparse observation fidelity and PDE residual constraints, steers the sampling process. We develop three sampling strategies---deterministic, stochastic, and hybrid---and present them in a unified algorithmic framework (Section~\ref{sec:methods}).

\item \textbf{Theoretical analysis: deterministic optimizer.} We identify the deterministic guided sampler as a mode-seeking optimizer and prove trajectory boundedness via a coercivity condition on the loss function combined with a dissipativity condition on the velocity field. Under a Polyak--{\L}ojasiewicz condition and a velocity-loss interaction assumption, we establish a phase-dependent loss contraction: the guidance coefficient $b_t = 1/t - 1$ drives exponential descent in Phase~A ($t < t_*$), yielding a contraction factor of $\epsilon^{2\mu}$ (where $\epsilon$ is the initial time parameter). The resulting complexity is $O(\log(1/\varepsilon))$---logarithmic in target accuracy (Section~\ref{sec:thm}).

\item \textbf{Theoretical analysis: stochastic sampler with adaptive guidance.} For the stochastic sampler, we introduce adaptive guidance $\zeta_k = c_\zeta\delta_k$ (where $\delta_k = 1 - t_k$) coupled with a dissipativity assumption on the velocity field. This combination yields uniform moment bounds that are independent of the noise floor parameter $\delta_{\min}$, breaking a circular dependence between iterate boundedness and loss descent that arises in constant-guidance analyses. The error bound achieves $O(\varepsilon^{-2}\log(1/\varepsilon))$ complexity. We also prove a matching lower bound showing that constant guidance incurs an unavoidable positive bias $V_{ss} \ge \delta_{\min}/40 > 0$, establishing the necessity of adaptive guidance (Section~\ref{sec:thm} and Appendix~\ref{apx:lower_bound}).

\item \textbf{Hybrid framework.} We combine deterministic and stochastic phases: the deterministic optimizer provides fast initial convergence via Phase~A contraction in $O(\log(1/\varepsilon))$ steps, followed by the stochastic sampler with adaptive guidance for exploration and robustness. The overall complexity is $O(\varepsilon^{-2}\log(1/\varepsilon))$ (Section~\ref{sec:thm}).

\item \textbf{Experiments.} We evaluate FM4PDE on a range of benchmark PDEs, both static (Darcy flow, Poisson, Helmholtz) and time-dependent (Navier--Stokes, Burgers, reaction-diffusion, shallow water), for forward and inverse problems under sparse observations. FM4PDE achieves competitive accuracy with significantly faster inference compared with diffusion-based generative approaches (Section~\ref{sec:experiments}).

\end{enumerate}

\subsection{Paper Organization}

The remainder of the paper is organized as follows. Section~\ref{sec:methods} introduces the flow matching framework and presents the FM4PDE algorithm, including the training procedure, the composite guidance loss, and the three sampling strategies. Section~\ref{sec:thm} establishes the theoretical properties of the guided samplers: trajectory boundedness, convergence rates, and the lower bound for constant guidance. Section~\ref{sec:experiments} presents experimental results on benchmark PDEs. Section~\ref{sec:discussion} discusses the effects of observation sparsity, step sizes, and sampling strategy choices. Section~\ref{sec:conclusion} concludes the paper. Additional theoretical analysis is provided in the Appendix.

\section{Methodology}
\label{sec:methods}

In this section, we propose a pre-trained Flow Matching framework to solve forward and inverse PDE problems. By learning the joint distribution of coefficients and solutions, our method reconstructs these fields via ODE solvers, using sparse observations and PDE constraints as guidance.

\subsection{Preliminaries: Flow Matching}

Flow Matching has recently gained significant attention as a powerful paradigm in generative modeling \citep{lipman2022flow, lipman2024flow}. In this section, we provide a concise overview of its fundamental framework. Given a source distribution $p$ and a target distribution $q$, the primary goal of Flow Matching is to learn a continuous mapping that transports samples from the source to the target distribution. Conceptually, this corresponds to learning a time-dependent flow that gradually evolves $p$ into $q$ along smooth trajectories in the data space. Generally, a standard Flow Matching model consists of four main components: data preparation, probability path design, model training, and sampling.

\paragraph{Preparing Data.} Let $X_0 \sim p$ and $X_1 \sim q$. The first step is to define the source and target distributions, and to determine how samples $X_0$ and $X_1$ are coupled. A common and simple setting assumes that $X_0$ follows a standard Gaussian distribution $\mathcal{N}(\mathbf{0}, I)$ and is independent of $X_1$. Under this assumption, their joint distribution can be written as
\begin{equation*}
    (X_0, X_1) \sim \Pi(\boldsymbol{x}_0, \boldsymbol{x}_1) = \mathcal{N}(\boldsymbol{x}_0; \mathbf{0}, I)\, q(\boldsymbol{x}_1).
\end{equation*}

\paragraph{Designing Probability Paths.} A probability path is a family of intermediate, time-dependent probability densities $\{ p_t \}_{0 \leq t \leq 1}$ that smoothly connects the source and target distributions. Specifically, $X_t \sim p_t$ denotes the distribution of samples at time $t$. For any path $p_t$, if $X_t = \psi_t(X_0) \sim p_t$ for all $t \in [0,1)$, then $p_t$ is said to be generated by a vector field $u_t$ corresponding to $\psi_t$, and the pair $(u_t, p_t)$ satisfies the continuity equation:
\begin{equation*}
    \frac{\mathrm{d}}{\mathrm{d} t} p_t(\boldsymbol{x}) + \mathrm{div}\!\left(p_t u_t\right)(\boldsymbol{x}) = 0,
\end{equation*}
where $\mathrm{div}(v)(\boldsymbol{x}) = \sum_{i=1}^d \partial_{\boldsymbol{x}^i} v^i(\boldsymbol{x})$ for a vector field $v(\boldsymbol{x}) = (v^1(\boldsymbol{x}), \ldots, v^d(\boldsymbol{x}))$.

Flow Matching aims to learn the vector field $u_t$ that transforms $p$ into $q$. Instead of modeling the marginal quantities $p_t(\boldsymbol{x})$ and $u_t(\boldsymbol{x})$ directly, it is often more tractable to construct conditional forms, namely the conditional vector field $u_t(\boldsymbol{x} \mid \boldsymbol{x}_1)$ and the conditional path $p_{t \mid 1}(\boldsymbol{x} \mid \boldsymbol{x}_1)$ for a given target sample $\boldsymbol{x}_1$. In this case, $p_t(\boldsymbol{x})$ can be viewed as a mixture over $\boldsymbol{x}_1$, and the unconditional and conditional vector fields coincide.

A widely adopted path construction is the conditionally optimal transport (or linear interpolation) path:
\begin{equation}\label{eq:cond_path}
    X_{t \mid 1} = t\boldsymbol{x}_1 + (1 - t)X_0 \sim p_{t \mid 1} = \mathcal{N}(t\boldsymbol{x}_1, (1-t)^2 I)
\end{equation}
with its corresponding conditional vector field:
\begin{equation*}
    u_t(\boldsymbol{x}|\boldsymbol{x}_1) = \frac{\boldsymbol{x}_1 - \boldsymbol{x}}{1 - t}.
\end{equation*}
This simple yet effective formulation provides a closed-form expression for the flow field, enabling stable and efficient training.

\paragraph{Training.}
The Flow Matching model parameterizes the vector field $u_t$ using a neural network $u_t^{\theta}$. The training objective minimizes the discrepancy between the predicted and true conditional vector fields:
\begin{equation}\label{eq:cond_loss}
    \mathcal{L}_{\mathrm{CFM}}(\theta) = \mathbb{E}_{t, X_t, X_1}\big[\|u_t^{\theta}(X_t) - u_t(X_t \mid X_1)\|^2\big],
\end{equation}
where $t \sim \mathcal{U}(0,1)$, $X_0 \sim p$, $X_1 \sim q$, and $X_t = tX_1 + (1 - t)X_0$. This conditional loss is equivalent in gradient to the following unconditional formulation:
\begin{equation*}
    \mathcal{L}_{\mathrm{FM}}(\theta) = \mathbb{E}_{t, X_t}\big[\|u_t^{\theta}(X_t) - u_t(X_t)\|^2\big],
\end{equation*}
which simplifies implementation while maintaining identical optimization dynamics. As shown in the Training Phase of Figure~\ref{fig:fm4pde_framework}, we train the network using a standard linear probability path-based Flow Matching workflow and the objective function in Equation~\eqref{eq:cond_loss}.

\paragraph{Sampling.}
Once trained, samples from the target distribution are generated by solving the following ordinary differential equation (ODE) defined by the learned flow:
\begin{equation}\label{eq:sample_ode}
    \frac{\mathrm{d}}{\mathrm{d} t} X_t = u_t^{\theta}(X_t), \quad \text{with } X_0 \sim p,~ t \in [0,1].
\end{equation}
This corresponds to a deterministic sampling strategy, where $X_t$ is obtained by numerically solving the ODE using standard solvers such as Euler, Runge-Kutta, or the midpoint method. In practice, the midpoint method often achieves the best balance between accuracy and stability.

To enhance sample diversity and flexibility, we can alternatively employ stochastic sampling schemes. One direct approach is to interpret the sampling procedure from an SDE perspective, where Flow Matching defines the drift term of a continuous-time stochastic process. Under this view, the sampling dynamics correspond to an Euler--Maruyama discretization of an SDE with learned drift and injected diffusion noise, providing a principled way to introduce stochasticity into deterministic flow models \citep{singh2024stochastic,lai2025principles}. Alternatively, one can first perform a deterministic update by integrating the learned velocity field to time 1, and then interpolate back to the intermediate time step \citep{baldan2025PBFM}. These stochastic corrections enrich sample diversity and help the model better capture multi-modal structures in the target distribution.

\paragraph{Flow Matching and Score Functions.} Let $\alpha_t, \sigma_t: [0, 1] \to [0, 1]$ be smooth functions satisfying
\begin{equation*}
    \alpha_0 = 0 = \sigma_1, \alpha_1 = 1 = \sigma_0 \text{ and } \dot{\alpha}_t - \dot{\sigma}_t > 0 \text{ for } t \in (0, 1). 
\end{equation*}
One useful quantity admitting a simple form in the Gaussian case is the score, defined as the gradient of the log probability. The score of the conditional path in Equation~\eqref{eq:cond_path} follows the expression
\begin{equation*}
    \nabla \log p_{t \mid 1}(\boldsymbol{x} \mid \boldsymbol{x}_1) = -\dfrac{1}{\sigma_t} (\boldsymbol{x} - \alpha_t \boldsymbol{x}_1).
\end{equation*}
Then, the conditional velocity for Gaussian paths can be written in the form
\begin{equation}\label{eq:ut_with_score}
	u_t(\boldsymbol{x} \mid \boldsymbol{x}_1) = a_t \boldsymbol{x} + b_t \nabla_{\boldsymbol{x}} \log p_{t \mid 1}(\boldsymbol{x} \mid \boldsymbol{x}_1),
\end{equation}
where
\begin{equation*}
	a_t = \dfrac{\dot{\alpha}_t}{\alpha_t}, \quad b_t = - \dfrac{\dot{\sigma}_t \sigma_t \alpha_t - \dot{\alpha}_t \sigma_t^2}{\alpha_t}.
\end{equation*}
In particular, for linear paths, $\alpha_t = t$, $\sigma_t = 1-t$, $a_t = 1/t$ and $b_t = 1/t-1$. 

\subsection{Solving PDEs with Flow Matching}

\begin{figure}[H]
    \centering
    \includegraphics[width=1\textwidth]{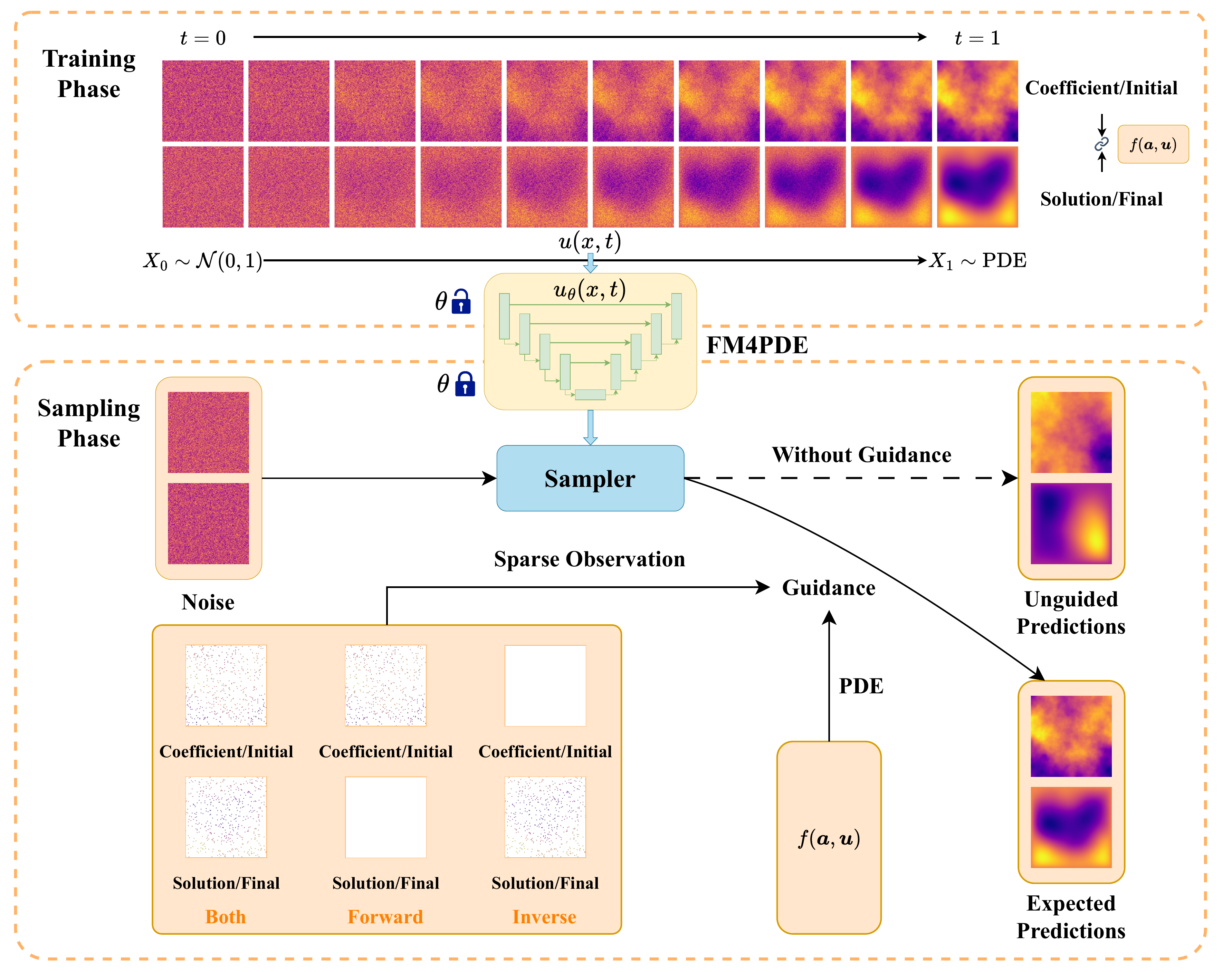}
    \caption{Framework of FM4PDE. The framework consists of two phases: (i) a training phase, where a Flow Matching network is trained on paired PDE coefficients (or initial states) and solutions (or final states); and (ii) a sampling phase, where the pretrained model is frozen and used for inference.}
    \label{fig:fm4pde_framework}
\end{figure}

\paragraph{Algorithm Overview.} The algorithm consists of two distinct phases. In the training phase, we train a Flow Matching neural network utilizing a dataset composed of paired coefficients (or initial states) and solutions (or final states), which are intrinsically linked by the underlying PDE equations. In the sampling phase, the parameters of the pre-trained model are frozen, and the proposed sampling algorithm is employed for inference. In the absence of guidance, FM4PDE generates data that inherently satisfies PDE constraints. When sparse observations are available, we integrate a PDE-based loss with a guidance mechanism to steer the generation process, yielding global reconstruction results that align with the observations. Depending on the specific pattern of the provided observations, the framework is capable of solving both forward problems, where observations are limited to coefficients or initial states, and inverse problems, where observations consist solely of solutions or final states.

\paragraph{Training Phase.} We focus on two classes of PDEs: static and dynamic systems. Let $f$ denote the PDE operator, let $\Omega$ represent a bounded domain with boundary $\partial \Omega$, and let $\boldsymbol{c} \in \Omega$ denote the spatial coordinates. First, static systems, such as Darcy flow or the Poisson equation, are governed by a time-independent function. The problem is defined as:
\begin{equation*}
    \begin{aligned}
        f(\boldsymbol{c}; \boldsymbol{a}, \boldsymbol{u}) &= 0 && \text{in } \Omega \subset \mathbb{R}^d, \\
        \boldsymbol{u}(\boldsymbol{c}) &= \boldsymbol{g}(\boldsymbol{c}) && \text{on } \partial \Omega,
    \end{aligned}
\end{equation*}
where $\boldsymbol{a} \in \mathcal{A}$ represents the PDE coefficient field, and $\boldsymbol{u} \in \mathcal{U}$ is the solution field. The term $\boldsymbol{u}|_{\partial\Omega} = \boldsymbol{g}$ specifies the boundary constraint. Our objective is to recover both $\boldsymbol{a}$ and $\boldsymbol{u}$ from sparse observations of either $\boldsymbol{a}$, $\boldsymbol{u}$, or both.

Second, we consider dynamic systems, such as the Navier-Stokes equations, formulated as
\begin{equation*}
    \begin{aligned}
        f(\boldsymbol{c}, \tau; \boldsymbol{a}, \boldsymbol{u}) &= 0 && \text{in } \Omega \times (0, \infty), \\
        \boldsymbol{u}(\boldsymbol{c}, \tau) &= \boldsymbol{g}(\boldsymbol{c}, \tau) && \text{on } \partial \Omega \times (0, \infty), \\
        \boldsymbol{u}(\boldsymbol{c}, \tau) &= \boldsymbol{a}(\boldsymbol{c}, \tau) && \text{on } \bar{\Omega} \times \{0\},
    \end{aligned}
\end{equation*}
where $\bar{\Omega} = \Omega \cup \partial \Omega$. Here, $\tau$ represents the physical time coordinate. The parameter $\boldsymbol{a} = \boldsymbol{u}_0 \in \mathcal{A}$ denotes the initial condition, while $\boldsymbol{u}$ is the time-dependent solution field. In this setting, we aim to simultaneously recover the initial state $\boldsymbol{a}$ and the solution at a specific terminal time $\mathcal{T}$, denoted as $\boldsymbol{u}_{\mathcal{T}} := \boldsymbol{u}(\cdot, \mathcal{T})$, from sparse observations.

For each specified PDE, we construct a corresponding dataset and train a Flow Matching neural network $u_t^{\theta}$ on paired samples of the form $\boldsymbol{x}_1 = (\boldsymbol{a}, \boldsymbol{u}) \in \mathcal{X}$, which means the target distribution is the joint distribution over PDE coefficients (or initial conditions) and solution fields that satisfy the governing PDE and the associated boundary or initial conditions. This phase is shown in Figure~\ref{fig:fm4pde_framework}.

\paragraph{Sampling Phase.} Let $\{t_k\}_{k=0}^{N}$ denote the time grid for the sampling phase, satisfying $t_0 < t_1 < \cdots < t_N$. Here, $N$ represents the total number of discretization steps. Accordingly, we define the time step size at iteration $k$ as $\Delta t_k := t_{k+1} - t_k$. Once $u_t^{\theta}$ is trained, the model is capable of generating samples that adhere to the underlying PDE constraints using standard Flow Matching sampling procedures, even in the absence of conditional guidance.

To enable effective sparse recovery for both forward and inverse problems, we introduce a guidance mechanism that corrects the generated trajectories during sampling. After training the Flow Matching velocity field $u_t^{\theta}$, we incorporate sparse observations and the governing PDE to guide the sampling process during inference. Let $\hat{\boldsymbol{x}}_t$ denote the generated state at time $t$, and let $\mathcal{P}$ be a projection operator that extracts values at the observation locations.
We define the observation loss as
\begin{equation}\label{eq:loss_obs}
    \mathcal{L}_{\mathrm{obs}}\left(\hat{\boldsymbol{x}}_t, \boldsymbol{x}_{\mathrm{obs}}\right)
    = \frac{1}{n}\bigl\|\mathcal{P} (\hat{\boldsymbol{x}}_t) - \boldsymbol{x}_{\mathrm{obs}}\bigr\|_2^2
    = \frac{1}{n}\sum_{j=1}^{n}
    \bigl(\hat{\boldsymbol{x}}_t(\boldsymbol{o}_j) - \boldsymbol{x}_{\mathrm{obs}}(\boldsymbol{o}_j)\bigr)^2,
\end{equation}
where $n$ is the number of observation points $\{\boldsymbol{o}_j\}_{j=1}^n$. In practice, the PDE residual is evaluated using finite difference discretizations \citep{leveque2007finite}. To enforce physical consistency, we further introduce a PDE residual loss
\begin{equation}\label{eq:loss_pde}
    \mathcal{L}_{\mathrm{pde}}\left(\hat{\boldsymbol{x}}_t; f\right) = \frac{1}{m}\left\|f(\hat{\boldsymbol{x}}_t)\right\|_2^2 
\end{equation}
where $m$ is the total number of spatial--temporal grid points, $f(\hat{\boldsymbol{x}}_t) = f(\boldsymbol{c};\hat{\boldsymbol{x}}_t)$ for static PDEs and $f(\hat{\boldsymbol{x}}_t) = f(\boldsymbol{c}, \tau;\hat{\boldsymbol{x}}_t)$ for dynamic PDEs.

Depending on the sampling strategy, guidance can be incorporated in two distinct ways: deterministic and stochastic. The deterministic approach initiates with a standard Euler step using the learned vector field $u_t^{\theta}$, yielding a predictor state $\tilde{\boldsymbol{x}}_{k+1} = \boldsymbol{x}_k + \Delta t_k u_t^{\theta}(\boldsymbol{x}_k)$. To enforce consistency with the observations and physical constraints, we introduce an observation loss $\mathcal{L}_{\mathrm{obs}}$ and a PDE residual loss $\mathcal{L}_{\mathrm{pde}}$. These terms are used to approximate the conditional score function at time $t$, as shown in Equation~\eqref{eq:guided_score}:
\begin{equation}\label{eq:guided_score}
	\nabla_{\boldsymbol{x}_t} \log p(\boldsymbol{x}_t \mid \boldsymbol{x}_1 ; \boldsymbol{x}_{\mathrm{obs}}, f) \approx \nabla_{\boldsymbol{x}_t} \log p(\boldsymbol{x}_t \mid \boldsymbol{x}_1) - \zeta_{\mathrm{obs}} \nabla_{\boldsymbol{x}_t} \mathcal{L}_{\mathrm{obs}} - \zeta_{\mathrm{pde}} \nabla_{\boldsymbol{x}_t} \mathcal{L}_{\mathrm{pde}},
\end{equation}
where $\zeta_{\mathrm{obs}}$ and $\zeta_{\mathrm{pde}}$ denote the guidance weights. By substituting Equation~\eqref{eq:guided_score} into the vector field formulation (Equation~\eqref{eq:ut_with_score}), we derive a corrected velocity field given by $u_t^{\theta}(\boldsymbol{x}_k) - b_t (\zeta_{\mathrm{obs}} \nabla_{\boldsymbol{x}_t} \mathcal{L}_{\mathrm{obs}} + \zeta_{\mathrm{pde}} \nabla_{\boldsymbol{x}_t} \mathcal{L}_{\mathrm{pde}})$. Evaluating these gradients at the predictor state $\tilde{\boldsymbol{x}}_{k+1}$ leads to the final update rule:
\begin{equation*}
    \boldsymbol{x}_{k+1} = \tilde{\boldsymbol{x}}_{k+1} - \Delta t_k b_{t_k} (\zeta_{\mathrm{obs}} \nabla_{\boldsymbol{x}_k} \mathcal{L}_{\mathrm{obs}}(\tilde{\boldsymbol{x}}_{k+1}) + \zeta_{\mathrm{pde}} \nabla_{\boldsymbol{x}_k} \mathcal{L}_{\mathrm{pde}}(\tilde{\boldsymbol{x}}_{k+1})).
\end{equation*}

At $t = 0$, the definition implies $\alpha_0 = 0$, which induces a singularity where $b_0 \to \infty$. To mitigate this numerical instability in the deterministic sampling algorithm, we employ two strategies. First, an $\varepsilon$-stabilization technique can be employed to regularize the divergence of $b_0$. Specifically, we compute $b_0 = -\frac{\dot{\sigma}_t \sigma_t \alpha_t - \dot{\alpha}_t \sigma_t^2}{\alpha_t + \varepsilon}$, where $\varepsilon > 0$ is a small constant, ensuring that $b_0$ remains a large yet finite positive value. This modification is justified by the fact that at $t = 0$, the sample is dominated by noise and deviates significantly from the true solution, thereby necessitating a strong guidance force for correction. Alternatively, one may discard the guidance term during the initial integration step. In this scheme, the guided deterministic solver commences at $t = \epsilon$ with $\epsilon > 0$, while the trajectory from $t=0$ to $t=\epsilon$ is advanced using a standard unguided Euler step.

Furthermore, to prevent excessively large gradients from causing algorithm failure, we employ gradient clipping. Let $\boldsymbol{g}_k=\zeta_{\mathrm{obs}} \nabla_{\boldsymbol{x}_k} \mathcal{L}_{\mathrm{obs}}(\tilde{\boldsymbol{x}}_{k+1}) + \zeta_{\mathrm{pde}} \nabla_{\boldsymbol{x}_k} \mathcal{L}_{\mathrm{pde}}(\tilde{\boldsymbol{x}}_{k+1})$. Given a large constant $G_c > 0$, we define the clipped gradient as $\boldsymbol{g}_k^{clip} = \boldsymbol{g}_k \cdot \min \{1, G_c / \|\boldsymbol{g}_k\|_2\}$, thus changing the update step of the deterministic solution to $\boldsymbol{x}_{k+1} = \tilde{\boldsymbol{x}}_{k+1} - \Delta t_k b_t \boldsymbol{g}_k^{clip}$.

In the stochastic sampling regime, we adopt the interpolation-based scheme proposed in \citep{singh2024stochastic,lai2025principles}. Specifically, at each integration step $k$, we first estimate the terminal solution at $t=1$ by projecting the current state along the learned velocity field $u_t^{\theta}$, i.e., $\hat{\boldsymbol{x}}_1^{(k)} = \boldsymbol{x}_k + (1 - t_k) u_t^{\theta}(\boldsymbol{x}_k)$. Subsequently, the observation loss $\mathcal{L}_{\mathrm{obs}}$ and the PDE residual $\mathcal{L}_{\mathrm{pde}}$ are evaluated on this estimated terminal state $\hat{\boldsymbol{x}}_1^{(k)}$. The state update is then performed by interpolating between a freshly sampled noise vector $\xi_k \sim \mathcal{N}(\mathbf{0}, I)$ and the guided prediction, formulated as:
\begin{equation*}
    \begin{aligned}
        \tilde{\boldsymbol{x}}_{k+1} &= (1 - t_{k+1}) \xi_k + t_{k+1} \hat{\boldsymbol{x}}_1^{(k)}, \\
        \boldsymbol{x}_{k+1} &= \tilde{\boldsymbol{x}}_{k+1} - c_{\zeta} (1 - t_k) \left(\zeta_{\mathrm{obs}} \nabla_{\boldsymbol{x}_k} \mathcal{L}_{\mathrm{obs}}\left(\hat{\boldsymbol{x}}_1^{(k)}\right) + \zeta_{\mathrm{pde}} \nabla_{\boldsymbol{x}_k} \mathcal{L}_{\mathrm{pde}}\left(\hat{\boldsymbol{x}}_1^{(k)}\right)\right),
    \end{aligned}
\end{equation*}
where $c_{\zeta} > 0$ is a scaling factor for the guidance strength. The choice of $c_{\zeta}$ is discussed in the theoretical analysis in Section~\ref{sec:thm}.

It is worth noting that when $t_{k+1}=1$, the relationship $\tilde{\boldsymbol{x}}_{k+1} = \hat{\boldsymbol{x}}_1^{(k)}$ holds, implying that no additional noise is introduced. Consequently, the final update is equivalent to performing an endpoint prediction followed by a single step of gradient descent. In practice, we can also directly output $\hat{\boldsymbol{x}}_1^{(k)}$ from the final iteration as the result. As it inherently represents the prediction of the final state, its great properties are theoretically guaranteed by \Cref{thm:stoc}.

The deterministic and stochastic samplers can also be combined. Specifically, a time threshold $t^*$ is set: the deterministic optimizer is used when $t < t^*$, followed by the stochastic sampler for $t \ge t^*$. This hybrid procedure is presented in \Cref{alg:fm4pde}. The one-step updates of the stochastic sampler and the deterministic optimizer are provided in \Cref{alg:fm4pde_stoc_sampler} and \Cref{alg:fm4pde_det_optimizer}, respectively.

\begin{algorithm}[H]
    \caption{Guided Flow Matching for PDE-solving}\label{alg:fm4pde}
    \begin{algorithmic}
        \REQUIRE {Pretrained Flow Matching $u_t^{\theta}(\boldsymbol{x})$, Affine conditional flow scheduler $(\alpha_t, \sigma_t)$, Sparse observations $\boldsymbol{x}_{\mathrm{obs}}$, PDE function $f$, Geometric grid parameter $\eta$, Guidance weights $\zeta_{\mathrm{obs}}$ and $\zeta_{\mathrm{pde}}$, Total number of grid points $m$, Number of observed points $n$, Threshold time $t^*$, Number of steps $N$, Step size parameter $c_{\zeta}$.}
        \STATE Initialize $\boldsymbol{x}_0 \sim \mathcal{N}(\mathbf{0}, I)$.
        \FOR{$k \in \{ 0, 1, \ldots, N-1 \}$}
            \IF{$t_k < t^*$}
                \STATE Phase I: Deterministic with strong contraction
                \STATE Use geometric grid: $\Delta t_k = \eta t_k$
                \STATE $\boldsymbol{x}_{k+1} = \text{DeterministicOptimizer}(u_t^{\theta}(\boldsymbol{x}), \boldsymbol{x}_{\mathrm{obs}}, f, \zeta_{\mathrm{obs}}, \zeta_{\mathrm{pde}}, m, n, \Delta t_k)$
            \ELSE 
                \STATE Phase II: Stochastic for exploration
                \STATE Use standard grid: $\Delta t_k$ such as equal step size
                \STATE $\boldsymbol{x}_{k+1} = \text{StochasticSampler}(u_t^{\theta}(\boldsymbol{x}), (\alpha_t, \sigma_t), \boldsymbol{x}_{\mathrm{obs}}, f, \zeta_{\mathrm{obs}}, \zeta_{\mathrm{pde}}, m, n, \Delta t_k, c_{\zeta})$
            \ENDIF
        \ENDFOR
        \RETURN $\hat{\boldsymbol{x}}_1 = \boldsymbol{x}_{N}$
    \end{algorithmic}
\end{algorithm}

\begin{algorithm}[H]
    \caption{Deterministic Guided Optimizer at $t_k$}\label{alg:fm4pde_det_optimizer}
    \begin{algorithmic}[1]
        \REQUIRE Pretrained Flow Matching $u_t^{\theta}(\boldsymbol{x})$, Affine conditional flow scheduler $(\alpha_t, \sigma_t)$, Sparse observations $\boldsymbol{x}_{\mathrm{obs}}$, PDE function $f$, Guidance weights $\zeta_{\mathrm{obs}}$ and $\zeta_{\mathrm{pde}}$, Total number of grid points $m$, Number of observed points $n$, Step size of the ODE solver $\Delta t_k$.

        \IF{$t_k = 0$}
            \STATE $b_{t_k} = 0$ \COMMENT{Without guidance}
        \ELSE
            \STATE $b_{t_k} = - (\dot{\sigma}_{t_k} \sigma_{t_k} \alpha_{t_k} - \dot{\alpha}_{t_k} \sigma_{t_k}^2)/\alpha_{t_k}$ \COMMENT{Compute guidance coefficient}
        \ENDIF
        \STATE $\tilde{\boldsymbol{x}}_{k+1} = \boldsymbol{x}_{k} + \Delta t_k \cdot u_{t_k}^{\theta}(\boldsymbol{x}_k)$ \COMMENT{Predict next state}
        \STATE $\mathcal{L}_{\mathrm{obs}} = \frac{1}{n} \| \mathcal{P}(\tilde{\boldsymbol{x}}_{k+1}) - \boldsymbol{x}_{\mathrm{obs}} \|_2^2$ \COMMENT{\COMMENT{Evaluate the observation loss via Equation~\eqref{eq:loss_obs}}}
        \STATE $\mathcal{L}_{\mathrm{pde}} = \frac{1}{m} \| f(\tilde{\boldsymbol{x}}_{k+1}) \|_2^2$ \COMMENT{Evaluate the PDE loss via Equation~\eqref{eq:loss_pde}}
        \STATE $\boldsymbol{g}_k = \zeta_{\mathrm{obs}} \cdot \nabla_{\boldsymbol{x}_{k}} \mathcal{L}_{\mathrm{obs}} + \zeta_{\mathrm{pde}} \cdot \nabla_{\boldsymbol{x}_{k}} \mathcal{L}_{\mathrm{pde}}$
        \STATE $\boldsymbol{g}_k^{clip} = \boldsymbol{g}_k \cdot \min \{ 1, G_c / \|\boldsymbol{g}_k\| \}$
        \STATE $\boldsymbol{x}_{k+1} = \tilde{\boldsymbol{x}}_{k+1} - b_{t_k} \cdot \boldsymbol{g}_k^{clip} \cdot \Delta t_k$ \COMMENT{Apply guidance}

        \RETURN $\boldsymbol{x}_{k+1}$
    \end{algorithmic}
\end{algorithm}

\begin{algorithm}[H]
    \caption{Stochastic Guided Sampler with Re-initialization at $t_k$}\label{alg:fm4pde_stoc_sampler}
    \begin{algorithmic}[1]
        \REQUIRE {Pretrained Flow Matching $u_t^{\theta}(\boldsymbol{x})$, Sparse observations $\boldsymbol{x}_{\mathrm{obs}}$, PDE function $f$, Guidance weights $\zeta_{\mathrm{obs}}$ and $\zeta_{\mathrm{pde}}$, Total number of grid points $m$, Number of observed points $n$, Step size parameter $c_{\zeta}$.}

        \STATE Sample $\xi_k \sim \mathcal{N}(\mathbf{0},I)$ \COMMENT{Sample fresh noise}
        \STATE $\hat{\boldsymbol{x}}_1^{(k)} = \boldsymbol{x}_k + (1 - t_k) \cdot u_{t_k}^{\theta}(\boldsymbol{x}_k)$ \COMMENT{Predict endpoint}
        \STATE $\tilde{\boldsymbol{x}}_{k+1} = (1 - t_{k+1})\xi_k + t_{k+1} \hat{\boldsymbol{x}}_1^{(k)}$ \COMMENT{Interpolate at $t_{k+1}$ with noise}
        \STATE $\mathcal{L}_{\mathrm{obs}} = \frac{1}{n} \| \mathcal{P}(\hat{\boldsymbol{x}}_1^{(k)}) - \boldsymbol{x}_{\mathrm{obs}} \|_2^2$ \COMMENT{Evaluate the observation loss via Equation~\eqref{eq:loss_obs}}
        \STATE $\mathcal{L}_{\mathrm{pde}} = \frac{1}{m} \|f(\hat{\boldsymbol{x}}_1^{(k)})\|_2^2$ \COMMENT{Evaluate the PDE loss via Equation~\eqref{eq:loss_pde}}
        \STATE $\zeta_k = c_{\zeta} (1 - t_k)$
        \STATE $\boldsymbol{x}_{k+1} = \tilde{\boldsymbol{x}}_{k+1} - \zeta_k (\zeta_{\mathrm{obs}} \cdot  \nabla_{\boldsymbol{x}_k} \mathcal{L}_{\mathrm{obs}} + \zeta_{\mathrm{pde}} \cdot \nabla_{\boldsymbol{x}_k} \mathcal{L}_{\mathrm{pde}})$ \COMMENT{Apply guidance}

        \RETURN $\boldsymbol{x}_{k+1}$
    \end{algorithmic}
\end{algorithm}

\begin{remark}[Time Grid Strategy]\label{rem:grid_strategy}
    As outlined in Algorithm~\ref{alg:fm4pde}, the sampling procedure is divided into two phases, each presenting distinct numerical challenges that necessitate tailored time grid strategies.
    \begin{itemize}
        \item The deterministic phase ($t < t^*$) employs a geometric time grid defined by $\Delta t_k = \eta t_k$. This strategy is critical to address the singularity of the guidance coefficient $b_t = 1/t - 1$, which diverges as $t \to 0$ and introduces stiffness into the system. The geometric spacing ensures that the effective step size remains bounded, i.e., $b_t \Delta t_k = \eta(1-t_k) = O(1)$, thereby preserving numerical stability near the initial time.
        \item In the stochastic phase ($t \ge t^*$), a standard uniform grid (equal step size) is sufficient. For the theoretical analysis presented in Section~\ref{sec:thm}, we require that this standard grid adheres to Assumption~\ref{ass:time_grid}.
    \end{itemize}
\end{remark}

\begin{remark}[Choices of Guidance Weights]\label{rem:guidance_weight}
    The hyperparameters $\zeta_{\mathrm{obs}}$ and $\zeta_{\mathrm{pde}}$ govern the trade-off between fidelity to sparse observations and adherence to global physical laws. Their specific values are problem-dependent, and the relative magnitude of these weights critically determines the balance between the observation and PDE residuals. 
\end{remark}

\section{Theoretical Analysis}\label{sec:thm}

In this section, we prove that our proposed algorithms can provide effective guidance and analyze the error of the guidance loss. We first discuss the error behavior of the bootstrapping loss when results are generated solely by a deterministic optimizer over the entire time horizon. We then focus on stochastic samplers, and finally examine the theoretical properties of hybrid strategies. For notational simplicity, we write the Euclidean norm $\|\cdot\|_2$ as $\|\cdot\|$ in this section.

\subsection{Assumptions}

In order to carry out theoretical analysis, we need some assumptions. Since the assumptions required by different algorithms (deterministic, stochastic, and hybrid) are not identical, we first present a set of general conditions that will be used throughout the paper, and then state the algorithm-specific assumptions immediately after the general ones. Denote $\delta_k := 1 - t_k$ , $\Delta t_k := t_{k+1} - t_k$.

\begin{assumption}[Velocity Field Regularity]\label{ass:velocity}
    The velocity field $u_t^{\theta}: [0, 1] \times \mathbb{R}^d \to \mathbb{R}^d$ is twice continuously differentiable in $\boldsymbol{x}$ and satisfies:
    \begin{enumerate}[label=(\alph*)]
        \item \label{ass:velocity_linear_growth} $\|u_t^{\theta}(\boldsymbol{x})\| \le B_u(1 + \|\boldsymbol{x}\|)$ for all $\boldsymbol{x} \in \mathbb{R}^d$, $t \in [0,1]$,
        \item \label{ass:velocity_grad_bound} $\|\nabla u_t^{\theta}(\boldsymbol{x})\| \le B_J$ for all $\boldsymbol{x}\in \mathbb{R}^d, t \in [0,1]$,
        \item \label{ass:velocity_L_smooth} $\|\nabla u_t^{\theta}(\boldsymbol{x}) - \nabla u_t^{\theta}(\boldsymbol{y})\| \le L_u \norm{\boldsymbol{x} - \boldsymbol{y}}$ for all $\boldsymbol{x}\in \mathbb{R}^d, \boldsymbol{y}\in \mathbb{R}^d, t \in [0,1]$,
        \item \label{ass:velocity_time_smooth} $\|u_t^{\theta}(\boldsymbol{x}) - u_s^{\theta}(\boldsymbol{x})\| \le L_t(1 + \|\boldsymbol{x}\|)|t - s|$ for all $\boldsymbol{x}\in \mathbb{R}^d, t \in [0,1], s \in [0,1]$.
    \end{enumerate}
\end{assumption}

\begin{assumption}[Dissipativity of Velocity Field]\label{ass:dissipative}
    There exist constants $\kappa > 0$, $R_0 \ge 0$, and $C_{drift} \ge 0$ such that for all $\boldsymbol{x} \in \mathbb{R}^d$ with $\|\boldsymbol{x}\| \ge R_0$ and all $t \in [0,1]$:
    \begin{equation*}
        \inner{\boldsymbol{x}}{u_t^{\theta}(\boldsymbol{x})} \le -\kappa \|\boldsymbol{x}\|^2 + C_{drift}.
    \end{equation*}
\end{assumption}

\begin{remark}[Global Dissipativity Form]\label{rem:global_dissip}
    Assumption~\ref{ass:dissipative} implies a global bound. For $\|\boldsymbol{x}\| < R_0$, by Assumption~\ref{ass:velocity}\ref{ass:velocity_linear_growth}:
    \begin{equation*}
        \inner{\boldsymbol{x}}{u_t^\theta(\boldsymbol{x})} \le \|\boldsymbol{x}\| \cdot B_u(1+\|\boldsymbol{x}\|) \le R_0 B_u(1+R_0).
    \end{equation*}
    Thus, for \emph{all} $\boldsymbol{x} \in \mathbb{R}^d$ and $t \in [0,1]$:
    \begin{equation}\label{eq:global_dissip}
        \inner{\boldsymbol{x}}{u_t^\theta(\boldsymbol{x})} \le -\kappa\|\boldsymbol{x}\|^2 + C'_{drift},
    \end{equation}
    where $C'_{drift} := \kappa R_0^2 + R_0 B_u(1+R_0) + C_{drift}$.
\end{remark}

Regarding the guiding loss $\mathcal{L}(\boldsymbol{x}) = \zeta_{\mathrm{obs}} \mathcal{L}_{\mathrm{obs}} (\boldsymbol{x}) + \zeta_{\mathrm{pde}} \mathcal{L}_{\mathrm{pde}} (\boldsymbol{x})$, we make the following assumptions.

\begin{assumption}[Loss Function Properties]\label{ass:loss}
    The guidance loss $\mathcal{L}:\mathbb{R}^d \to [0,\infty)$ satisfies:
    \begin{enumerate}[label=(\alph*)]
        \item \label{ass:L_Lsmooth} $\mathcal L$ is continuously differentiable and its gradient is $L_{\mathcal L}$-Lipschitz which yields $\mathcal{L}$ is $L_{\mathcal L}$-smooth, i.e., there exists a constant $L_{\mathcal{L}} > 0$, such that $\forall~\boldsymbol{x},\boldsymbol{y}\in\mathbb{R}^d$, $\|\nabla \mathcal{L}(\boldsymbol{x})-\nabla \mathcal{L}(\boldsymbol{y})\| \le L_{\mathcal L}\|\boldsymbol{x}-\boldsymbol{y}\|$.
        \item \label{ass:PL} $\norm{\nabla \mathcal{L}(\boldsymbol{x})}^2 \ge 2\mu \mathcal{L}(\boldsymbol{x})$ for some $\mu > 0$, which is known as the Polyak-{\L}ojasiewicz (PL) condition \citep{karimi2016linear}.
        \item \label{ass:x_star} There exists $\boldsymbol{x}^* \in \mathbb{R}^d$ with $\mathcal{L}(\boldsymbol{x}^*) = 0$ and $\norm{\boldsymbol{x}^*} < \infty$.
    \end{enumerate}
\end{assumption}

\begin{remark}[Derived Gradient Growth Bound]\label{rem:gradient_growth}
    Assumptions~\ref{ass:loss}\ref{ass:L_Lsmooth} and \ref{ass:x_star} together imply a gradient growth bound. First, under Assumption~\ref{ass:loss}\ref{ass:L_Lsmooth} and \ref{ass:x_star}, according to the fundamental theorem of calculus and the properties of smooth functions, we can obtain $\nabla\mathcal{L}(\boldsymbol{x}^*) = 0$. Furthermore,
    \begin{equation*}
        \norm{\nabla\mathcal{L}(\boldsymbol{x})} = \norm{\nabla\mathcal{L}(\boldsymbol{x}) - \nabla\mathcal{L}(\boldsymbol{x}^*)} \le L_{\mathcal{L}}\norm{\boldsymbol{x} - \boldsymbol{x}^*} \le L_{\mathcal{L}}(\|\boldsymbol{x}\| + \norm{\boldsymbol{x}^*}).
    \end{equation*}
    Thus, $\norm{\nabla\mathcal{L}(\boldsymbol{x})} \le G_0(1 + \|\boldsymbol{x}\|)$ holds with $G_0 := L_{\mathcal{L}}\max\{1, \norm{\boldsymbol{x}^*}\}$.
\end{remark}

There are some additional assumptions for the deterministic optimizer. To circumvent the singularity at $t=0$, where $b_0 \to \infty$, the time grid is required to start at a strictly positive time $t_0 = \epsilon > 0$, such that $\epsilon = t_0 < t_1 < \cdots < t_N$ with $\epsilon$.

\begin{assumption}
    \label{ass:interaction_coercivity}
    The loss function is assumed to satisfy the following growth condition and coercivity property:
    \begin{itemize}
        \item[(a)] \label{ass:interaction} Velocity-Loss Interaction: The velocity field and loss gradient satisfy the quadratic growth condition:
        \begin{equation}
            |\inner{\nabla \mathcal{L}(\boldsymbol{x})}{u_t(\boldsymbol{x})}| \le \beta_1 \norm{\nabla \mathcal{L}(\boldsymbol{x})}^2 + \beta_2
        \end{equation}
        for some constants $\beta_1, \beta_2 \ge 0$ with $\beta_1 < 1$.
        \item[(b)] \label{ass:coercivity} Loss Coercivity: The loss function satisfies the strong coercivity condition: there exist constants $c_{coer} > 0$ and $R_0 \ge 0$ such that for all $\|\boldsymbol{x}\| \ge R_0$:
    \begin{equation}
        \inner{\boldsymbol{x}}{\nabla \mathcal{L}(\boldsymbol{x})} \ge c_{coer} \|\boldsymbol{x}\|^2.
    \end{equation}
    \end{itemize}
\end{assumption}

\begin{assumption}[Initial Loss Bound]
    \label{ass:initial}
    The initial loss satisfies one of the following:
    \begin{enumerate}[label=(\alph*)]
        \item Uniform bound: $\mathcal{L}(\boldsymbol{x}_\epsilon) \le M_0$ for some constant $M_0 > 0$;
        \item Polynomial growth: $\mathbb{E} [\mathcal{L}(\boldsymbol{x}_\epsilon)] \le M_0 \cdot \epsilon^{-\alpha}$ for some $\alpha < 2\mu$.
    \end{enumerate}
\end{assumption}

For stochastic sampler, Algorithm \ref{alg:fm4pde_stoc_sampler}, we work on the time grid $t_0 < t_1 < \cdots < t_N \le 1 - \delta_{\min}$, and need the following assumptions.

\begin{assumption}[Well-Conditioning Near Terminal Time]\label{ass:jacobian}
    There exist $\epsilon_0 > 0$ and $\lambda_{\min} > 0$ such that for all $t, s \in [1-\epsilon_0, 1]$ and $\boldsymbol{x} \in \mathbb{R}^d$, the symmetrized product:
    \begin{equation*}
        M_{t,s}^{sym}(\boldsymbol{x}) := \tfrac{1}{2}\left[J_t(\boldsymbol{x})J_s(\boldsymbol{x})^\top + J_s(\boldsymbol{x})J_t(\boldsymbol{x})^\top\right] \succeq \lambda_{\min} I.
    \end{equation*}
\end{assumption}



\begin{assumption}[Time Grid Refinement]
    \label{ass:time_grid}
    The time grid $t_0 < t_1 < \cdots < t_N \le 1 - \delta_{\min}$ satisfies:
    \begin{enumerate}[label=(\alph*)]
        \item \label{ass:time_grid_bound} Bounded step size: $\max_k \Delta t_k \le \bar{\Delta}$ for some $0 < \bar{\Delta} \leq \epsilon_0/2$,
        \item \label{ass:time_grid_refinement} For all $k$ with $t_k \ge 1 - \epsilon_0$, we have $\Delta t_k \le c_\Delta \delta_k$ for some $c_\Delta \in (0,1/2)$,
        \item \label{ass:time_grid_positive_floor} Positive noise floor: $t_N \le 1 - \delta_{\min}$ for some $\delta_{\min} > 0$.
    \end{enumerate}
\end{assumption}


\subsection{Deterministic Sampler Analysis}

In this subsection, we first investigate the theoretical properties of the deterministic scheme. We first denote some notations as follows.
\begin{itemize}
    \item $\Phi_{t_k}(\boldsymbol{x}) := \boldsymbol{x}_k + \Delta t_k u_{t_k}^{\theta}(\boldsymbol{x}_k)$,
    \item $J_{t_k}(\boldsymbol{x}_k) := \nabla_{\boldsymbol{x}_k} \Phi_{t_k}(\boldsymbol{x}) = I + \Delta t_k \nabla u_{t_k}^{\theta}(\boldsymbol{x}_k)$,
    \item $\bar{\boldsymbol{g}}_k := \nabla \mathcal{L}(\tilde{\boldsymbol{x}}_{k+1})$ (endpoint gradient),
    \item $\boldsymbol{g}_k := J_{t_k}(\boldsymbol{x})^\top \bar{\boldsymbol{g}}_k(\boldsymbol{x})$ (backpropagated gradient),
    \item $\boldsymbol{g}_k^{clip} := \boldsymbol{g}_k \cdot \min \{ 1, G_c / \| \boldsymbol{g}_k \|\}$ with $G_c > 0$ (the clipped gradient),
    \item $V_k := \mathbb{E}[\mathcal{L}(\boldsymbol{x}_k)] = \mathcal{L}(\boldsymbol{x}_k)$ (expected loss at step $k$).
\end{itemize}

The deterministic optimizer, Algorithm \ref{alg:fm4pde_det_optimizer}, executes the following update steps:
\begin{equation*}
    \begin{aligned}
        \tilde{\boldsymbol{x}}_{k+1} &= \boldsymbol{x}_k + \Delta t_{k} \cdot u_{t_k}^{\theta}(\boldsymbol{x}_k), \\
        \boldsymbol{x}_{k+1} &= \tilde{\boldsymbol{x}}_{k+1} - \Delta t_k \cdot b_{t_k} \cdot \boldsymbol{g}_k^{clip}.
    \end{aligned}
\end{equation*}

Combining these, the update can be written as:
\begin{equation*}
    \boldsymbol{x}_{k+1} = \boldsymbol{x}_k + \Delta t_{k} \cdot u_{t_k}^{\theta}(\boldsymbol{x}_k) - \Delta t_k \cdot b_{t_k} \cdot \boldsymbol{g}_k^{clip}.
\end{equation*}
where $\Delta t_k = t_{k+1} - t_{k} = \eta t_{k}$, $k \in \{0, 1, \ldots, N - 1\}$. Then, by \Cref{ass:velocity,ass:loss}, we have
\begin{equation*}
    \begin{aligned}
        \| \boldsymbol{g}_k \| &= \| \nabla_{\boldsymbol{x}_k} \mathcal{L}(\tilde{\boldsymbol{x}}_{k+1}) \| = \| (I + \Delta t_k \nabla u_{t_k}^{\theta}(\boldsymbol{x}_k))^{\top} \nabla \mathcal{L}(\tilde{\boldsymbol{x}}_{k+1}) \| \\
        &\leq \| I + \Delta t_k \nabla u_{t_k}^{\theta}(\boldsymbol{x}_k) \| \|\nabla \mathcal{L}(\tilde{\boldsymbol{x}}_{k+1}) \| \\
        &\leq (1 + \Delta t_k B_J) G_0 (1 + \|\tilde{\boldsymbol{x}}_{k+1}\|) \\
        &\leq (1 + \Delta t_k B_J) G_0 (1 + \|\boldsymbol{x}_{k}\| + \Delta t_k \|u_{t_k}^{\theta}(\boldsymbol{x}_k)\|) \\
        &\leq (1 + \Delta t_k B_J) G_0 (1 + \|\boldsymbol{x}_{k}\| + \Delta t_k B_u (1 + \| \boldsymbol{x}_{k} \|)) \\
        &\leq (1 + \Delta t_k B_J) (1 + \Delta t_k B_u) G_0 (1 + \| \boldsymbol{x}_{k} \|) \\
        &\leq \tilde{G}_0 (1 + \|\boldsymbol{x}_k\|),
    \end{aligned}
\end{equation*}
where $B_g := \max \{ B_J, B_u \}$ and $\tilde{G}_0 = G_0 (1 + B_g)^2$. 
\begin{lemma}\label{lem:det_discrete_traj}
    Under \Cref{ass:velocity,ass:dissipative,ass:loss,ass:interaction_coercivity}, for all $\eta$ satisfying $\eta < \kappa_{\mathrm{eff}}/2C_{\beta}$ where $C_{\beta}$ is defined in the proof, the discrete trajectory satisfies $\|\boldsymbol{x}_k\| \leq R_{disc}$ for all k, where
    \begin{equation*}
        R_{disc} := \max\{R, \|\boldsymbol{x}_0\|\}, \quad R^2 := \max \left\{2 C_d \eta^2 + 2 (1 + C_d \eta^2) R_0^2, \dfrac{2 C_{drift} + C_{\beta}}{\kappa_{\mathrm{eff}}} + 1 \right\}.
    \end{equation*}
    Here $\kappa_{\mathrm{eff}} = \min\{\kappa, c_{coer}\}$, and $C_{\beta}, C_d, C_g$ are constant dependent on $B_u, \tilde{G}_0, B_J, L_{\mathcal{L}}$ (defined in the proof).
\end{lemma}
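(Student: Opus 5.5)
We prove the bound with a discrete Lyapunov argument on $\|\boldsymbol{x}_k\|^2$, combined with induction on $k$. Write the update as $\boldsymbol{x}_{k+1} = \boldsymbol{x}_k + \Delta t_k \boldsymbol{d}_k$, where $\boldsymbol{d}_k := u_{t_k}^{\theta}(\boldsymbol{x}_k) - b_{t_k}\boldsymbol{g}_k^{clip}$. On the geometric grid, $\Delta t_k = \eta t_k$ and $\Delta t_k b_{t_k} = \eta(1-t_k) \le \eta$. Hence both pieces of the step are $O(\eta)$ times quantities of linear growth in $\|\boldsymbol{x}_k\|$. Expanding the square gives
\begin{equation*}
\|\boldsymbol{x}_{k+1}\|^2 = \|\boldsymbol{x}_k\|^2 + 2\Delta t_k\inner{\boldsymbol{x}_k}{u_{t_k}^{\theta}(\boldsymbol{x}_k)} - 2\Delta t_k b_{t_k}\inner{\boldsymbol{x}_k}{\boldsymbol{g}_k^{clip}} + \|\Delta t_k \boldsymbol{d}_k\|^2 .
\end{equation*}
The quadratic term is controlled by Assumption~\ref{ass:velocity}\ref{ass:velocity_linear_growth} and the bound $\|\boldsymbol{g}_k\|\le \tilde G_0(1+\|\boldsymbol{x}_k\|)$ derived above. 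Together these give $\|\Delta t_k \boldsymbol{d}_k\|^2 \le C_d\eta^2(1+\|\boldsymbol{x}_k\|^2)$, with $C_d$ depending on $B_u$ and $\tilde G_0$; here $\Delta t_k\le \eta$ since $t_k\le1$.

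Next we treat the two inner products when $\|\boldsymbol{x}_k\|\ge R_0$. The velocity term is handled directly by Assumption~\ref{ass:dissipative}, giving at most $-\kappa\|\boldsymbol{x}_k\|^2 + C_{drift}$. The guidance term needs more work, because the gradient is evaluated at $\tilde{\boldsymbol{x}}_{k+1}$, backpropagated through $J_{t_k}$, and clipped. Write $\boldsymbol{g}_k = \nabla\mathcal{L}(\boldsymbol{x}_k) + \boldsymbol{r}_k$. The remainder $\boldsymbol{r}_k$ collects $\nabla\mathcal{L}(\tilde{\boldsymbol{x}}_{k+1})-\nabla\mathcal{L}(\boldsymbol{x}_k)$, of size at most $L_{\mathcal L}\Delta t_k B_u(1+\|\boldsymbol{x}_k\|)$, and $\Delta t_k \nabla u^\top \bar{\boldsymbol{g}}_k$, of size at most $\Delta t_k B_J\tilde G_0(1+\|\boldsymbol{x}_k\|)$. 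Hence $\|\boldsymbol{r}_k\|\le C_g\eta(1+\|\boldsymbol{x}_k\|)$. Coercivity (Assumption~\ref{ass:interaction_coercivity}(b)) gives $\inner{\boldsymbol{x}_k}{\nabla\mathcal L(\boldsymbol{x}_k)}\ge c_{coer}\|\boldsymbol{x}_k\|^2$.

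Clipping multiplies $\boldsymbol{g}_k$ by a factor $s_k\in(0,1]$, so it preserves the sign of $\inner{\boldsymbol{x}_k}{\boldsymbol{g}_k}$. The guidance term is therefore still nonpositive up to the remainder. In the worst case we simply drop its negative part, which leaves the velocity dissipation $\kappa$ alone, and this is consistent with $\kappa_{\mathrm{eff}}=\min\{\kappa,c_{coer}\}$. The remainder cross term $2\Delta t_k b_{t_k}|\inner{\boldsymbol{x}_k}{\boldsymbol{r}_k}|$ is absorbed into $C_\beta\eta\,\Delta t_k(1+\|\boldsymbol{x}_k\|^2)$ after Young's inequality. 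Assumption~\ref{ass:interaction_coercivity}(a) can be used here to bound the mixed term $\inner{\nabla\mathcal L}{u}$ that appears when $\|\Delta t_k\boldsymbol{d}_k\|^2$ is expanded. Collecting terms gives
\begin{equation*}
\|\boldsymbol{x}_{k+1}\|^2 \le \bigl(1 - 2\Delta t_k\kappa_{\mathrm{eff}} + \Delta t_k C_\beta\eta\bigr)\|\boldsymbol{x}_k\|^2 + \Delta t_k(2C_{drift}+C_\beta).
\end{equation*}
Here the $\eta^2$ terms have been written as $\Delta t_k\cdot\eta\cdot(\cdot)$ where possible. Any $\eta^2$ term not proportional to $\Delta t_k$ is kept separately; this is the source of the constant $2C_d\eta^2$ in $R^2$.

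The condition $\eta<\kappa_{\mathrm{eff}}/2C_\beta$ makes the contraction coefficient at most $1-\Delta t_k\kappa_{\mathrm{eff}}$. The induction then splits into three cases. If $\|\boldsymbol{x}_k\|^2\ge (2C_{drift}+C_\beta)/\kappa_{\mathrm{eff}}+1$, the norm does not increase. If $R_0\le\|\boldsymbol{x}_k\|$ but the norm is below that threshold, the right-hand side is at most the threshold. If $\|\boldsymbol{x}_k\|<R_0$, we use the crude bound $\|\boldsymbol{x}_{k+1}\|^2\le 2\|\boldsymbol{x}_k\|^2+2\|\Delta t_k\boldsymbol{d}_k\|^2\le 2C_d\eta^2+2(1+C_d\eta^2)R_0^2$. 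In all three cases $\|\boldsymbol{x}_{k+1}\|\le \max\{R,\|\boldsymbol{x}_k\|\}$, and induction gives $\|\boldsymbol{x}_k\|\le R_{disc}$.

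The main obstacle is the guidance cross term: showing that clipping together with the evaluation at the shifted point $\tilde{\boldsymbol{x}}_{k+1}$ costs only $O(\eta)$ relative to the dissipation. The first tool to try for this is the bound $\Delta t_k b_{t_k}\le\eta$ combined with the linear-growth remainder estimate above.
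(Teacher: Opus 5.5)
Your skeleton matches the paper: expand $\norm{\boldsymbol{x}_k+\Delta t_k\boldsymbol{d}_k}^2$, bound $\boldsymbol{g}_k-\nabla\mathcal{L}(\boldsymbol{x}_k)$, use a crude bound inside $R_0$, and close with an invariant-set induction. The step where you ``simply drop'' the negative part of the guidance term breaks the argument.

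On the geometric grid the two parts of the step have very different sizes. The velocity part has size $\Delta t_k=\eta t_k$, only of order $\eta\epsilon$ at the start. The guidance part has size $\Delta t_k b_{t_k}=\eta(1-t_k)$, which stays of order $\eta$. Hence $\norm{\Delta t_k\boldsymbol{d}_k}^2$ contains $\eta^2(1-t_k)^2\norm{\boldsymbol{g}_k}^2$, which can be as large as $\eta^2\tilde{G}_0^2(1+\norm{\boldsymbol{x}_k})^2$. This term is not proportional to $\Delta t_k$, and it multiplies $\norm{\boldsymbol{x}_k}^2$, so it cannot be set aside as a constant. (The $2C_d\eta^2$ in $R^2$ comes only from the crude bound in the case $\norm{\boldsymbol{x}_k}<R_0$.)

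If the only dissipation you keep is $2\Delta t_k\kappa\norm{\boldsymbol{x}_k}^2=2\eta t_k\kappa\norm{\boldsymbol{x}_k}^2$, your coefficient of $\norm{\boldsymbol{x}_k}^2$ is at least $1-2\eta t_k\kappa+C_d\eta^2$. This exceeds $1$ whenever $t_k<C_d\eta/(2\kappa)$. Compounding over the roughly $\eta^{-1}\log(1/\epsilon)$ such steps gives only a bound growing like a negative power of $\epsilon$, whereas $R$ in the statement is independent of $\epsilon$. So your recursion with coefficient $1-2\Delta t_k\kappa_{\mathrm{eff}}+\Delta t_k C_\beta\eta$ does not follow. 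Relatedly, with your remainder bound $\norm{\boldsymbol{r}_k}\le C_g\eta(1+\norm{\boldsymbol{x}_k})$, the cross term $2\Delta t_k b_{t_k}|\inner{\boldsymbol{x}_k}{\boldsymbol{r}_k}|$ is $O(\eta^2)$, not $O(\eta\Delta t_k)$. Getting $O(\eta\Delta t_k)$ needs $\norm{\boldsymbol{r}_k}\le C_g\Delta t_k(1+\norm{\boldsymbol{x}_k})$.

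The missing idea is that coercivity of $\mathcal{L}$ must be \emph{used}, not discarded. It gives $-2\Delta t_k b_{t_k}\inner{\boldsymbol{x}_k}{\nabla\mathcal{L}(\boldsymbol{x}_k)}\le-2\eta(1-t_k)c_{coer}\norm{\boldsymbol{x}_k}^2$. The total dissipation is then $2\eta\bigl(t_k\kappa+(1-t_k)c_{coer}\bigr)\norm{\boldsymbol{x}_k}^2\ge 2\eta\kappa_{\mathrm{eff}}\norm{\boldsymbol{x}_k}^2$, uniformly in $k$ and $\epsilon$. This convex combination, not merely $\kappa\ge\kappa_{\mathrm{eff}}$, is where $\kappa_{\mathrm{eff}}=\min\{\kappa,c_{coer}\}$ comes from.

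Once you have an $O(\eta)$ contraction, everything else is $O(\eta^2)$. That includes the square term $C_d\eta^2(1+\norm{\boldsymbol{x}_k}^2)$ and the remainder cross term, bounded via $b_{t_k}\Delta t_k\le\eta$. These are absorbed under $\eta<\kappa_{\mathrm{eff}}/(2C_\beta)$, giving $\norm{\boldsymbol{x}_{k+1}}^2\le(1-\eta\kappa_{\mathrm{eff}})\norm{\boldsymbol{x}_k}^2+2\eta C_{drift}+C_\beta\eta^2$ outside $R_0$. Your three-case induction then goes through.

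If you keep the clipping factor $s_k$ explicit, the coercive gain scales with $s_k$ while the guidance part of the square scales with $s_k^2\le s_k$. So the comparison still works term by term; the paper instead argues with the unclipped update. Finally, \Cref{ass:interaction_coercivity}(a) plays no role here. The mixed term in $\norm{\boldsymbol{d}_k}^2$ is handled by $\norm{\boldsymbol{a}-\boldsymbol{b}}^2\le 2\norm{\boldsymbol{a}}^2+2\norm{\boldsymbol{b}}^2$, and in any case it involves $\boldsymbol{g}_k$ rather than $\nabla\mathcal{L}(\boldsymbol{x}_k)$.
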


\begin{proof}
    We prove the lemma using the method of invariant sets. Although Algorithm~\ref{alg:fm4pde_det_optimizer} uses $\boldsymbol{g}_k^{clip}$, we prove the bound with the unclipped $\boldsymbol{g}_k$. This is valid because:
    \begin{enumerate}
        \item[(i)] All upper bounds on $\|\boldsymbol{g}_k\|$ remain valid for $\boldsymbol{g}_k^{clip}$ since $\|\boldsymbol{g}_k^{clip}\| \le \|\boldsymbol{g}_k\|$, 
        \item[(ii)] The coercivity inner product satisfies $\inner{\boldsymbol{x}_k}{\boldsymbol{g}_k^{clip}} = \min\{1, G_c/\norm{\boldsymbol{g}_k}\} \cdot \inner{\boldsymbol{x}_k}{\boldsymbol{g}_k}$, which preserves the sign of $\inner{\boldsymbol{x}_k}{\boldsymbol{g}_k}$. 
    \end{enumerate}
    Thus the invariant set proved below for the unclipped update is also invariant for the clipped update, and $R_{disc}$ is independent of $G_c$.

    Denoting
    \begin{equation*}
        \boldsymbol{d}_k := u_{t_k}^{\theta}(\boldsymbol{x}_k) - b_{t_k} \boldsymbol{g}_k,
    \end{equation*}
    we have
    \begin{equation}\label{eq:update_squre}
        \|\boldsymbol{x}_{k+1}\|^2 = \| \boldsymbol{x}_k + \Delta t_{k} \boldsymbol{d}_k \|^2 = \|\boldsymbol{x}_k\|^2 + 2 \Delta t_{k} \langle \boldsymbol{x}_k, \boldsymbol{d}_k \rangle + \Delta t_{k}^2 \| \boldsymbol{d}_k \|^2.
    \end{equation}

    Note that,
    \begin{equation*}
        \begin{aligned}
            \|\boldsymbol{d}_{k}\|^2 &= \|u_{t_k}^{\theta}(\boldsymbol{x}_k) - b_{t_k} \boldsymbol{g}_k\|^2 \leq 2 \| u_{t_k}^{\theta}(\boldsymbol{x}_k) \|^2 + 2 b_{t_k}^2 \| \boldsymbol{g}_k \|^2 \\
                                       &\leq 2 B_u^2 (1 + \|\boldsymbol{x}_k\|)^2 + 2 b_{t_k}^2 \tilde{G}_0^2 (1 + \|\boldsymbol{x}_k\|)^2 \\
                                       &\leq C_d (1 + b_{t_k}^2) (1 + \|\boldsymbol{x}_k\|^2) \\
                                       &\leq C_d (1 + b_{t_k})^2 (1 + \|\boldsymbol{x}_k\|^2),
        \end{aligned}
    \end{equation*}
    where $C_d := \max \{ 4 B_u^2, 4 \tilde{G}_0^2 \}$. Under the geometric step size $\Delta t_k = \eta t_k$, we have
    \begin{equation*}
        \Delta t_k \leq \eta, \quad b_{t_k} \Delta t_k = \eta (1 - t_k) \leq \eta, \quad \Delta t_k (1 + b_{t_k}) = \eta.
    \end{equation*}

    Thus,
    \begin{equation*}
        \Delta t_k^2 \|\boldsymbol{d}_k\|^2 \leq C_d \Delta t_k^2 (1 + b_{t_k})^2 (1 + \|\boldsymbol{x}_k\|^2) \leq C_d \eta^2 + C_d \eta^2 \|\boldsymbol{x}_k\|^2.
    \end{equation*}

    If $\|\boldsymbol{x}_k\| \leq R_0$, then
    \begin{equation}\label{eq:det_bound_in}
        \|\boldsymbol{x}_{k+1}\|^2 \leq 2 \|\boldsymbol{x}_k\|^2 + 2 \Delta t_k^2 \|\boldsymbol{d}_k\|^2 \leq 2 C_d \eta^2 + 2 (1 + C_d \eta^2) R_0^2 =: R_{\mathrm{in}}^2.
    \end{equation}

    Furthermore, if $\|\boldsymbol{x}_k\| > R_0$, for the second term in Equation~\eqref{eq:update_squre}, we have
    \begin{equation*}
        \begin{aligned}
            \langle \boldsymbol{x}_k, \boldsymbol{d}_k \rangle
            &= \langle \boldsymbol{x}_k, u_{t_k}^{\theta}(\boldsymbol{x}_k) \rangle - \langle \boldsymbol{x}_k, b_{t_k} \boldsymbol{g}_k \rangle \\
            &= \langle \boldsymbol{x}_k, u_{t_k}^{\theta}(\boldsymbol{x}_k) \rangle - b_{t_k}\langle \boldsymbol{x}_k, \boldsymbol{g}_k - \nabla \mathcal{L}(\boldsymbol{x}_k) \rangle - b_{t_k}\langle \boldsymbol{x}_k, \nabla \mathcal{L}(\boldsymbol{x}_k) \rangle \\
            &\leq \langle \boldsymbol{x}_k, u_{t_k}^{\theta}(\boldsymbol{x}_k) \rangle - b_{t_k} \langle \boldsymbol{x}_k, \nabla \mathcal{L}(\boldsymbol{x}_k) \rangle + b_{t_k} \|\boldsymbol{x}_k\| \|\boldsymbol{g}_k - \nabla \mathcal{L}(\boldsymbol{x}_k)\|.
        \end{aligned}
    \end{equation*}

    For $\boldsymbol{g}_k - \nabla \mathcal{L}(\boldsymbol{x}) = \Delta t_k \nabla u_t^{\theta}(\boldsymbol{x}_k) \nabla \mathcal{L} (\tilde{\boldsymbol{x}}_{k+1}) + \left( \nabla \mathcal{L} (\tilde{\boldsymbol{x}}_{k+1}) - \nabla\mathcal{L}(\boldsymbol{x}_k) \right)$, we have
    \begin{equation*}
        \|\Delta t_k \nabla u_t^{\theta}(\boldsymbol{x}_k) \nabla \mathcal{L} (\tilde{\boldsymbol{x}}_{k+1})\| \leq \Delta t_k \|\nabla u_t^{\theta}(\boldsymbol{x}_k)\| \|\nabla\mathcal{L}(\tilde{\boldsymbol{x}}_{k+1})\| \leq \Delta t_k B_J \|\nabla\mathcal{L}(\tilde{\boldsymbol{x}}_{k+1})\|,
    \end{equation*}
    and
    \begin{equation*}
        \|\nabla\mathcal{L}(\tilde{\boldsymbol{x}}_{k+1})-\nabla\mathcal{L}(\boldsymbol{x}_k)\| \le L_{\mathcal{L}}\|\tilde{\boldsymbol{x}}_{k+1}-\boldsymbol{x}_k\| = L_{\mathcal{L}}\,\Delta t_k\,\|u_{t_k}^\theta(\boldsymbol{x}_k)\|.
    \end{equation*}
    
    Under the growth bounds $\|\nabla\mathcal{L}(\boldsymbol{x})\|\le G_0(1+\|\boldsymbol{x}\|)$ and $\|u_{t}^\theta(\boldsymbol{x})\|\le B_u(1+\|\boldsymbol{x}\|)$, combining above yields
    \begin{equation*}
        \begin{aligned}
            \|\boldsymbol{g}_k-\nabla\mathcal{L}(\boldsymbol{x}_k)\| &\leq \Delta t_k\,B_J\,\|\nabla\mathcal{L}(\tilde{\boldsymbol{x}}_{k+1})\| + L_{\mathcal{L}}\,\Delta t_k\,\|u_{t_k}^\theta(\boldsymbol{x}_k)\| \\
            &\leq \Delta t_k B_J G_0 (1 + \|\tilde{\boldsymbol{x}}_{k+1}\|) + L_{\mathcal{L}} \Delta t_k B_u(1 + \|\boldsymbol{x}_k\|) \\
            &\leq \Delta t_k B_J G_0 (1 + \|\boldsymbol{x}_k\| + \Delta t_k \|u_{t_k}^\theta(\boldsymbol{x}_k)\|) + L_{\mathcal{L}} \Delta t_k B_u(1 + \|\boldsymbol{x}_k\|) \\
            &\leq \Delta t_k [G_0 B_J (1 + B_u) + L_{\mathcal{L}} B_u] (1 + \|\boldsymbol{x}_k\|) \\
            &=: C_g \Delta t_k (1 + \|\boldsymbol{x}_k\|).
        \end{aligned}
    \end{equation*}

    Thus, with the dissipativity $\langle \boldsymbol{x}, u_t^{\theta}(\boldsymbol{x}) \rangle \leq -\kappa \|\boldsymbol{x}\|^2 + C_{drift} $ and $\langle \boldsymbol{x}, \nabla \mathcal{L} (\boldsymbol{x}) \rangle \geq c_{coer} \|\boldsymbol{x}\|^2$, we have when $\|\boldsymbol{x}_k\| > R_0$,
    \begin{equation*}
        \begin{aligned}
            \langle \boldsymbol{x}_k, \boldsymbol{d}_k \rangle 
            &\leq \langle \boldsymbol{x}_k, u_{t_k}^{\theta}(\boldsymbol{x}_k) \rangle - b_{t_k} \langle \boldsymbol{x}_k, \nabla \mathcal{L}(\boldsymbol{x}_k) \rangle + b_{t_k} \|\boldsymbol{x}_k\| \|\boldsymbol{g}_k - \nabla \mathcal{L}(\boldsymbol{x}_k)\| \\
            &\leq -\kappa \|\boldsymbol{x}_k\|^2 + C_{drift} - c_{coer} b_{t_k} \|\boldsymbol{x}_k\|^2 + b_{t_k} \Delta t_k C_g (\| \boldsymbol{x}_k \| + \|\boldsymbol{x}_k\|^2).
        \end{aligned}
    \end{equation*}

    Since $r^2 - r + 1 \geq 0$, which implies $r + r^2 \leq 1 + 2r^2 \leq 2 (1 + r^2)$, we have
    \begin{equation*}
        \begin{aligned}
            \langle \boldsymbol{x}_k, \boldsymbol{d}_k \rangle
            &\leq -\kappa \|\boldsymbol{x}_k\|^2 + C_{drift} - c_{coer} b_{t_k} \|\boldsymbol{x}_k\|^2 + 2 b_{t_k} \Delta t_k C_g (1 + \|\boldsymbol{x}_k\|^2) \\
            &= -( \kappa + c_{coer}b_{t_k} ) \|\boldsymbol{x}_k\|^2 + 2 b_{t_k} \Delta t_k C_g \|\boldsymbol{x}_k\|^2 + C_{drift}  + 2 b_{t_k} \Delta t_k C_g.\\
        \end{aligned}
    \end{equation*}

    Thus,
    \begin{equation*}
        \begin{aligned}
            \langle \boldsymbol{x}_k, \boldsymbol{d}_k \rangle
            &\leq -( \kappa + c_{coer}b_{t_k} ) \|\boldsymbol{x}_k\|^2 + 2 b_{t_k} \Delta t_k C_g \|\boldsymbol{x}_k\|^2 + C_{drift}  + 2 b_{t_k} \Delta t_k C_g \\
            &\leq -( \kappa + c_{coer}b_{t_k} ) \|\boldsymbol{x}_k\|^2 + 2 \eta C_g \|\boldsymbol{x}_k\|^2 + C_{drift}  + 2 \eta C_g,
        \end{aligned}
    \end{equation*}
    and
    \begin{equation*}
        \begin{aligned}
            2 \Delta t_k \langle \boldsymbol{x}_k, \boldsymbol{d}_k \rangle
            &\leq -2 \Delta t_k ( \kappa + c_{coer}b_{t_k} ) \|\boldsymbol{x}_k\|^2 + 4 \Delta t_k \eta C_g \|\boldsymbol{x}_k\|^2 + 2 \Delta t_k C_{drift} + 4 \Delta t_k \eta C_g \\
            &\leq -2 \Delta t_k ( \kappa + c_{coer} b_{t_k} ) \|\boldsymbol{x}_k\|^2 + 4 C_g \eta^2 \|\boldsymbol{x}_k\|^2 + 2 \eta C_{drift} + 4 \eta^2 C_g.
        \end{aligned}
    \end{equation*}

    Thus,
    \begin{equation*}
        \begin{aligned}
            \|\boldsymbol{x}_{k+1}\|^2
            &\leq \|\boldsymbol{x}_{k}\|^2 - 2 \Delta t_k ( \kappa + c_{coer} b_{t_k} ) \|\boldsymbol{x}_k\|^2 + 4 C_g \eta^2 \|\boldsymbol{x}_k\|^2 \\
            &\quad + 2 \eta C_{drift} + 4 \eta^2 C_g + C_d \eta^2 + C_d \eta^2 \|\boldsymbol{x}_k\|^2 \\
            &\leq \left[ 1 - 2 \Delta t_k ( \kappa + c_{coer} b_{t_k} ) + (4 C_g + C_d) \eta^2 \right] \|\boldsymbol{x}_k\|^2 \\
            &\quad + 2 \eta C_{drift} + (4 C_g + C_d) \eta^2 \\
            &=: (1 - \alpha_k + \beta) \|\boldsymbol{x}_k\|^2 + \gamma.
        \end{aligned}
    \end{equation*}

    For the contraction rate, we compute:
    \begin{equation*}
        \begin{aligned}
            \alpha_k
            &= 2\Delta t_k (\kappa + b_{t_k} c_{coer}) = 2\eta t_k \kappa + 2\eta (1 - t_k) c_{coer} \\
            &\geq 2\eta \cdot \min\{\kappa, c_{coer}\} = 2\eta \kappa_{\mathrm{eff}} =: \alpha_{min},
        \end{aligned}
    \end{equation*}
    where $\kappa_{\mathrm{eff}} := \min \{ \kappa, c_{coer}\}$, and the inequality uses the fact that $t\kappa + (1-t)c_{coer} \ge \min\{\kappa, c_{coer}\}$ for all $t \in [0,1]$. Thus,
    \begin{equation*}
        \|\boldsymbol{x}_{k+1}\|^2 \leq (1 - \alpha_{min} + \beta ) \|\boldsymbol{x}_k\|^2 + \gamma.
    \end{equation*}

    Define $C_{\beta} := 4 C_g + C_d$, by $\eta < \frac{\kappa_{\mathrm{eff}}}{2 C_{\beta}}$, we have $\alpha_{min} - \beta = 2\eta \kappa_{\mathrm{eff}} - C_{\beta} \eta^2 > \eta \kappa_{\mathrm{eff}} > 0$. Note that
    \begin{equation}\label{eq:det_bound_out}
        \begin{aligned}
            \frac{\gamma}{\alpha_{min} - \beta} = \frac{2\eta C_{drift} + C_{\beta} \eta^2}{2\eta \kappa_{\mathrm{eff}} - C_{\beta} \eta^2} < \dfrac{2 \eta C_{drift} + C_{\beta} \eta^2}{\eta \kappa_{\mathrm{eff}}} < \dfrac{2 C_{drift} + C_{\beta}}{\kappa_{\mathrm{eff}}} + 1 := R_{\mathrm{out}}^2.
        \end{aligned}
    \end{equation}

    Defining the invariant set $R^2 := \max \{ R_{\mathrm{in}}^2, R_{\mathrm{out}}^2 \}$.
    \begin{itemize}
        \item[(i)] If $\|\boldsymbol{x}_k\|^2 > R^2$ and $\|\boldsymbol{x}_k\|^2 \geq R_0^2$: Coercivity applies, then
        \begin{equation*}
            \|\boldsymbol{x}_{k+1}\|^2 \leq (1 - \alpha_{min} + \beta ) \|\boldsymbol{x}_k\|^2 + \gamma \leq \|\boldsymbol{x}_k\|^2,
        \end{equation*}
        since for $\|\boldsymbol{x}_k\|^2 > R^2 \geq R_{\mathrm{out}}^2$, $(\alpha_{min} - \beta) W_k > (\alpha_{min} - \beta) R_{\mathrm{out}}^2 > \gamma$.
        \item[(ii)] If $\|\boldsymbol{x}_k\|^2 < R_0^2$: By Equation~\eqref{eq:det_bound_in}, $\|\boldsymbol{x}_{k+1}\|^2 \leq R_{\mathrm{in}}^2 \leq R^2$.
        \item[(iii)] If $R_0^2 < \|\boldsymbol{x}_k\|^2 \leq R^2$: Coercivity applies, since $R^2 > R_{\mathrm{out}}^2$:
        \begin{equation*}
            \begin{aligned}
                \|\boldsymbol{x}_{k+1}\|^2
                &\leq (1 - \alpha_{min} + \beta ) R^2 + \gamma \leq R^2 - \eta \kappa_{\mathrm{eff}} R^2 + \gamma \\
                &\leq R^2 - \eta (2 C_{drift} + C_{\beta} + \kappa_{\mathrm{eff}}) + 2 \eta C_{drift} + C_{\beta} \eta^2 \\
                &=R^2 - \eta \kappa_{\mathrm{eff}} + C_{\beta} \eta (1 - \eta) < R^2.
            \end{aligned}
        \end{equation*}
    \end{itemize}

    Thus, starting from any $\|\boldsymbol{x}_0\|^2$, the sequence $\{\|\boldsymbol{x}_k\|^2\}$ eventually enters $[0, R^2]$ and remains there. Hence $\|\boldsymbol{x}_k\| \le R_{disc}$ for all $k$, where $R_{disc} := \max\{R, \|\boldsymbol{x}_0\|\}$.
\end{proof}

\begin{definition}[Admissible Step Size: Geometric Grid]
    \label{def:det_step_size}
    A step size sequence $\{\Delta t_k\}_{k=0}^{N-1}$ is admissible for Algorithm~\ref{alg:fm4pde_det_optimizer} if it satisfies:
    \begin{enumerate}[label=(\roman*)]
        \item Geometric grid structure: $\Delta t_k = \eta t_k$ for a fixed constant $\eta > 0$;
        \item Step size bound: $\eta < \eta_{max}$ where
        \begin{equation}
            \eta_{max} := \min\left\{ \frac{1}{2\mu}, \frac{\kappa_{\mathrm{eff}}}{2C_{\beta}}, 1 \right\}
        \end{equation}
        ensures both Phase A stability ($(1 - \rho_k) > 0$) and trajectory boundedness (Lemma~\ref{lem:det_discrete_traj}). Here $\kappa_{\mathrm{eff}} = \min\{\kappa, c_{coer}\}$ and $C_{\beta}$ is the constant from the trajectory bound analysis;
        \item Global upper bound: $\Delta t_k \le \Delta_{\max} := \frac{1}{2(B_J + L_{\mathcal{L}}\beta_1)}$ for consistency everywhere.
    \end{enumerate}
\end{definition}

The geometric grid achieves this:
\begin{equation*}
    b_{t_k} \cdot \Delta t_k = \left(\frac{1}{t_k} - 1\right) \cdot \eta t_k = \eta(1 - t_k) \le \eta = O(1).
\end{equation*}

Under this grid, the step size bound $\eta < \eta_{max}$ is independent of $t_k$ and $\epsilon$; and the number of steps from $\epsilon$ to $t_*$ is $N_A = \frac{\log(t_*/\epsilon)}{\log(1+\eta)} = O(\log(1/\epsilon))$.

\begin{lemma}\label{lem:det_one_step}
    Under \Cref{ass:velocity,ass:loss,ass:interaction_coercivity}, one step of Algorithm~\ref{alg:fm4pde_det_optimizer} satisfies the following bounds. Denote $\rho_k = 2 \mu \Delta t_k (b_{t_k} - \beta_1)$, under the geometric grid:
    \begin{enumerate}
        \item[(a)] Phase A ($b_{t_k} > \beta_1$): Using the PL condition,
            \begin{equation*}
                V_{k+1} \leq (1 - \rho_k) V_k + \tilde{C},
            \end{equation*}
        \item[(b)] Phase B ($b_{t_k} \leq \beta_1$): Using gradient growth bounds,
            \begin{equation*}
                V_{k+1} \leq V_k + \tilde{C}^{\prime},
            \end{equation*}
    \end{enumerate}
    where $\tilde{C}$ and $\tilde{C}^{\prime}$ are defined in the proof, depending on $\beta_1, \beta_2, \eta, G_0, R_{disc}, L_{\mathcal{L}}$ and $C_d$.
\end{lemma}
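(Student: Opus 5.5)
The plan is to use the descent lemma for the $L_{\mathcal{L}}$-smooth loss (Assumption~\ref{ass:loss}\ref{ass:L_Lsmooth}), applied to the combined update $\boldsymbol{x}_{k+1} = \boldsymbol{x}_k + \Delta t_k \boldsymbol{d}_k$ with $\boldsymbol{d}_k = u_{t_k}^\theta(\boldsymbol{x}_k) - b_{t_k}\boldsymbol{g}_k$:
\begin{equation*}
\mathcal{L}(\boldsymbol{x}_{k+1}) \le \mathcal{L}(\boldsymbol{x}_k) + \Delta t_k \inner{\nabla\mathcal{L}(\boldsymbol{x}_k)}{u_{t_k}^\theta(\boldsymbol{x}_k)} - \Delta t_k b_{t_k}\inner{\nabla\mathcal{L}(\boldsymbol{x}_k)}{\boldsymbol{g}_k} + \tfrac{L_{\mathcal{L}}}{2}\Delta t_k^2\|\boldsymbol{d}_k\|^2 .
\end{equation*}
Lemma~\ref{lem:det_discrete_traj} confines every iterate to the ball of radius $R_{disc}$. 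Hence every growth-type quantity is bounded by a constant. These include $\|\nabla\mathcal{L}(\boldsymbol{x}_k)\| \le G_0(1+R_{disc})$ (Remark~\ref{rem:gradient_growth}), $\|\boldsymbol{g}_k\| \le \tilde G_0(1+R_{disc})$, and the bound $\Delta t_k^2\|\boldsymbol{d}_k\|^2 \le C_d\eta^2(1+R_{disc}^2)$ already derived in the proof of Lemma~\ref{lem:det_discrete_traj}. The last term of the descent inequality is therefore at most $\tfrac{L_{\mathcal{L}}}{2}C_d\eta^2(1+R_{disc}^2)$.

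Next I handle the two cross terms. For the velocity term, Assumption~\ref{ass:interaction_coercivity}(a) gives $\Delta t_k\inner{\nabla\mathcal{L}}{u} \le \Delta t_k\beta_1\|\nabla\mathcal{L}(\boldsymbol{x}_k)\|^2 + \Delta t_k\beta_2$, and $\Delta t_k\beta_2 \le \eta\beta_2$. For the guidance term, I write $\boldsymbol{g}_k = \nabla\mathcal{L}(\boldsymbol{x}_k) + (\boldsymbol{g}_k - \nabla\mathcal{L}(\boldsymbol{x}_k))$ and reuse the estimate $\|\boldsymbol{g}_k - \nabla\mathcal{L}(\boldsymbol{x}_k)\| \le C_g\Delta t_k(1+\|\boldsymbol{x}_k\|)$ from the proof of Lemma~\ref{lem:det_discrete_traj}. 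This gives
\begin{equation*}
-\Delta t_k b_{t_k}\inner{\nabla\mathcal{L}(\boldsymbol{x}_k)}{\boldsymbol{g}_k} \le -\Delta t_k b_{t_k}\|\nabla\mathcal{L}(\boldsymbol{x}_k)\|^2 + (b_{t_k}\Delta t_k)\,\Delta t_k\, C_g G_0(1+R_{disc})^2,
\end{equation*}
and the error term is $O(\eta^2)$ because $b_{t_k}\Delta t_k\le\eta$ and $\Delta t_k\le\eta$. Collecting the pieces,
\begin{equation*}
V_{k+1} \le V_k - \Delta t_k(b_{t_k}-\beta_1)\|\nabla\mathcal{L}(\boldsymbol{x}_k)\|^2 + \tilde C, \qquad \tilde C := \eta\beta_2 + \eta^2 C_g G_0(1+R_{disc})^2 + \tfrac{L_{\mathcal{L}}}{2}C_d\eta^2(1+R_{disc}^2).
\end{equation*}

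In Phase A, where $b_{t_k}>\beta_1$, the coefficient $\Delta t_k(b_{t_k}-\beta_1)$ is positive. The PL inequality $\|\nabla\mathcal{L}\|^2 \ge 2\mu\mathcal{L}$ then turns the gradient term into $-\rho_k V_k$, which proves (a). The condition $\eta<1/(2\mu)$ in Definition~\ref{def:det_step_size} gives $\rho_k \le 2\mu\eta(1-t_k) < 1$, so the factor $1-\rho_k$ stays positive. In Phase B, where $b_{t_k}\le\beta_1$, the coefficient has the wrong sign. There I bound $\Delta t_k(\beta_1-b_{t_k})\|\nabla\mathcal{L}(\boldsymbol{x}_k)\|^2 \le \eta\beta_1 G_0^2(1+R_{disc})^2$ and set $\tilde C' := \tilde C + \eta\beta_1G_0^2(1+R_{disc})^2$, which proves (b).

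The step I expect to be the main obstacle is the clipping. The gradient actually used is $\boldsymbol{g}_k^{clip} = s_k\boldsymbol{g}_k$ with $s_k = \min\{1, G_c/\|\boldsymbol{g}_k\|\}\le 1$, so the descent term carries only a factor $s_k b_{t_k}$. On the bounded set, $\|\boldsymbol{g}_k\|\le \tilde G_0(1+R_{disc})$, so I will require $G_c \ge \tilde G_0(1+R_{disc})$. Then clipping is inactive along the whole trajectory and the unclipped computation above applies verbatim. Without this requirement I would replace $\rho_k$ by $s_{\min}\rho_k$ with $s_{\min} = \min\{1, G_c/(\tilde G_0(1+R_{disc}))\}$. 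A secondary subtlety is that the descent-lemma perturbation must be controlled uniformly in $t_k$ even though $b_{t_k}$ blows up near $\epsilon$. The geometric grid handles this, since every occurrence of $b_{t_k}$ appears multiplied by $\Delta t_k$.
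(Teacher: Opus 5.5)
Your proposal is correct and follows the paper's proof essentially step for step. It uses the descent lemma, the interaction bound for the velocity term, the $C_g$-perturbation of $\boldsymbol{g}_k$ controlled through $b_{t_k}\Delta t_k\le\eta$, PL in Phase A and the gradient-growth bound in Phase B, yielding exactly the paper's $\tilde C$ and $\tilde C'$; the paper likewise works with the unclipped $\boldsymbol{g}_k$ and imposes $G_c\ge\tilde G_0(1+R_{disc})$ only in Theorem~\ref{thm:det_discrete}. One small correction to your aside on clipping: without that condition the contraction would be $2\mu\Delta t_k(s_{\min}b_{t_k}-\beta_1)$, with Phase A requiring $s_{\min}b_{t_k}>\beta_1$, rather than $s_{\min}\rho_k$, because the $\beta_1\|\nabla\mathcal{L}(\boldsymbol{x}_k)\|^2$ term from the velocity interaction is not scaled by the clipping factor.
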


\begin{proof}
    By $L_{\mathcal{L}}$-smoothness,
    \begin{equation*}
        V_{k+1} = \mathcal{L}(\boldsymbol{x}_{k+1}) \leq \mathcal{L}(\boldsymbol{x}_k) + \langle \nabla \mathcal{L}(\boldsymbol{x}_k), \boldsymbol{x}_{k+1} - \boldsymbol{x}_k \rangle + \dfrac{L_{\mathcal{L}}}{2}\|\boldsymbol{x}_{k+1} - \boldsymbol{x}_k\|^2.
    \end{equation*}

    Since $\boldsymbol{x}_{k+1} - \boldsymbol{x}_k = \Delta t_k (u_{t_k}^{\theta}(\boldsymbol{x}_k) - b_{t_k} \boldsymbol{g}_k) = \Delta t_k \boldsymbol{d}_k$, we have 
    \begin{equation*}
        \dfrac{L_{\mathcal{L}}}{2} \|\boldsymbol{x}_{k+1} - \boldsymbol{x}_k\|^2 \leq \dfrac{L_{\mathcal{L}}}{2} \cdot  C_d (1 + \|\boldsymbol{x}_k\|^2) \eta^2 \leq \dfrac{L_{\mathcal{L}} C_d}{2} (1 + R_{disc}^2) \eta^2.
    \end{equation*}

    Moreover, for $\langle \nabla \mathcal{L}(\boldsymbol{x}_k), \boldsymbol{x}_{k+1} - \boldsymbol{x}_k \rangle$,
    \begin{equation*}
        \langle \nabla \mathcal{L}(\boldsymbol{x}_k), \boldsymbol{x}_{k+1} - \boldsymbol{x}_k \rangle = \Delta t_k \langle\nabla \mathcal{L}(\boldsymbol{x}_k), u_{t_k}^{\theta}(\boldsymbol{x}_k) \rangle - \Delta t_k b_{t_k} \langle\nabla \mathcal{L}(\boldsymbol{x}_k), \boldsymbol{g}_k \rangle.
    \end{equation*}

    The second term satisfies
    \begin{equation*}
        \begin{aligned}
            -\langle\nabla \mathcal{L}(\boldsymbol{x}_k), \boldsymbol{g}_k \rangle
            &= -\langle \nabla \mathcal{L}(\boldsymbol{x}_k), \boldsymbol{g}_k - \nabla \mathcal{L}(\boldsymbol{x}_k) \rangle - \|\nabla \mathcal{L}(\boldsymbol{x}_k)\|^2 \\
            &\leq \|\nabla \mathcal{L}(\boldsymbol{x}_k)\| \|\boldsymbol{g}_k - \nabla \mathcal{L}(\boldsymbol{x}_k)\| - \|\nabla \mathcal{L}(\boldsymbol{x}_k)\|^2 \\
            &\leq -\|\nabla \mathcal{L}(\boldsymbol{x}_k)\|^2 + G_0 C_g \eta (1 + R_{disc})^2.
        \end{aligned}
    \end{equation*}

    Then
    \begin{equation*}
        \begin{aligned}
            -\Delta t_k b_{t_k} \langle\nabla \mathcal{L}(\boldsymbol{x}_k), \boldsymbol{g}_k \rangle
            &\leq - \Delta t_k b_{t_k} \|\nabla \mathcal{L}(\boldsymbol{x}_k)\|^2 + G_0 C_g \eta \Delta t_k b_{t_k} (1 + R_{disc})^2 \\
            &\leq - \Delta t_k b_{t_k} \|\nabla \mathcal{L}(\boldsymbol{x}_k)\|^2 + G_0 C_g \eta^2 (1 - t_k) (1 + R_{disc})^2 \\
            &\leq - \Delta t_k b_{t_k} \|\nabla \mathcal{L}(\boldsymbol{x}_k)\|^2 + G_0 C_g \eta^2 (1 + R_{disc})^2.
        \end{aligned}
    \end{equation*}
    
    Furthermore, with velocity-loss interaction $|\langle \nabla \mathcal{L}(\boldsymbol{x}_k), u_{t_k}^{\theta}(\boldsymbol{x}_k) \rangle| \leq \beta_1 \|\nabla \mathcal{L}(\boldsymbol{x}_k)\|^2 + \beta_2$, we obtain
    \begin{equation*}
        \begin{aligned}
            \langle \nabla \mathcal{L}(\boldsymbol{x}_k), \boldsymbol{x}_{k+1} - \boldsymbol{x}_k \rangle
            &= \Delta t_k \langle\nabla \mathcal{L}(\boldsymbol{x}_k), u_{t_k}^{\theta}(\boldsymbol{x}_k) \rangle - \Delta t_k b_{t_k} \langle\nabla \mathcal{L}(\boldsymbol{x}_k), \boldsymbol{g}_k  \rangle \\
            &\leq \Delta t_k \beta_1 \|\nabla \mathcal{L}(\boldsymbol{x}_k)\|^2 + \Delta t_k \beta_2 - \Delta t_k b_{t_k} \|\nabla \mathcal{L}(\boldsymbol{x}_k)\|^2 \\
            &\quad + G_0 C_g \eta^2 (1 + R_{disc})^2 \\
            &\leq - \Delta t_k (b_{t_k} - \beta_1) \|\nabla \mathcal{L}(\boldsymbol{x}_k)\|^2 + \eta \beta_2 + G_0 C_g \eta^2 (1+R_{disc})^2 
        \end{aligned}
    \end{equation*}

    Define $t_* := \frac{1}{1+\beta_1}$. On the one hand, when $b_{t_k} > \beta_1$, which yields $t_k < t_*$, we have
    \begin{equation*}
        \langle \nabla \mathcal{L}(\boldsymbol{x}_k), \boldsymbol{x}_{k+1} - \boldsymbol{x}_k \rangle \leq - 2 \mu \Delta t_k (b_{t_k} - \beta_1) V_k + \eta \beta_2 + G_0 C_g \eta^2 (1+R_{disc})^2,
    \end{equation*}
    since the PL condition $\|\nabla \mathcal{L}(\boldsymbol{x})\|^2 \geq 2 \mu \mathcal{L}(\boldsymbol{x}) = 2 \mu V_k$. On the other hand, when $b_{t_k} \leq \beta_1$, which yields $t_k \geq t_*$, we can obtain
    \begin{equation*}
        \begin{aligned}
            \langle \nabla \mathcal{L}(\boldsymbol{x}_k), \boldsymbol{x}_{k+1} - \boldsymbol{x}_k \rangle
            &\leq- \Delta t_k (b_{t_k} - \beta_1) \|\nabla \mathcal{L}(\boldsymbol{x}_k)\|^2 + \eta \beta_2 + G_0 C_g \eta^2 (1+R_{disc})^2 \\
            &\leq - \Delta t_k (b_{t_k} - \beta_1) G_0^2 (1 + \|\boldsymbol{x}_k\|)^2 + \eta \beta_2 + G_0 C_g \eta^2 (1+R_{disc})^2 \\
            &\leq (\beta_1 \eta t_k - (1 - t_k) \eta) G_0^2 (1 + R_{disc})^2 + \eta \beta_2 + G_0 C_g \eta^2 (1+R_{disc})^2 \\
            &\leq \beta_1 \eta G_0^2 (1 + R_{disc})^2 + \eta \beta_2 + G_0 C_g \eta^2 (1+R_{disc})^2.
        \end{aligned}
    \end{equation*}
    
    Denote $\rho_k = 2 \mu \Delta t_k (b_{t_k} - \beta_1)$, we have
    \begin{equation*}
        \begin{aligned}
            V_{k+1} 
            &\leq (1 - \rho_k) V_k + \eta \beta_2 + G_0 C_g \eta^2 (1+R_{disc})^2 + \dfrac{L_{\mathcal{L}} C_d}{2} (1 + R_{disc}^2) \eta^2 \\
            &:= (1 - \rho_k) V_k + \tilde{C}.
        \end{aligned}
    \end{equation*}
    for $t_k < t_{*}$, and
    \begin{equation*}
        \begin{aligned}
            V_{k+1}
            &\leq V_k + \beta_1 \eta G_0^2 (1 + R_{disc})^2 + \eta \beta_2 + G_0 C_g \eta^2 (1+R_{disc})^2 + \dfrac{L_{\mathcal{L}} C_d}{2} (1 + R_{disc}^2) \eta^2 \\
            &:= V_k + \tilde{C}^{\prime}.
        \end{aligned}
    \end{equation*}
    for $t_k \geq t_{*}$.
\end{proof}

With the above lemmas, we can propose the following theorem.

\begin{theorem}\label{thm:det_discrete}
    Consider Algorithm~\ref{alg:fm4pde_det_optimizer} with a time grid $\epsilon = t_0 < t_1 < \cdots < t_N \le 1$ where $\Delta t_k = t_{k+1} - t_k$. Under \Cref{ass:velocity,ass:dissipative,ass:loss,ass:interaction_coercivity}, suppose $\Delta t_k$ is sufficiently small and
    \begin{equation*}
        G_c \ge \tilde{G}_0 (1 + R_{disc}),
    \end{equation*}
    where $R_{disc}$ is the trajectory bound from Lemma~\ref{lem:det_discrete_traj} (independent of $G_c$). Then:
    \begin{equation*}
        V_N = \mathcal{L}(\boldsymbol{x}_N) \le C_{det} \cdot \epsilon^{2\mu} \cdot V_0 + C'_{det},
    \end{equation*}
    where $C_{det}$ and $C'_{det}$ are defined in the proof.
\end{theorem}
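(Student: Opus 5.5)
We unroll the one-step recursions of Lemma~\ref{lem:det_one_step} across the two phases, using the geometric grid to evaluate the product of contraction factors in Phase A. Before that we must make sure clipping never activates, so that the lemma, which is proved for the unclipped $\boldsymbol{g}_k$, applies verbatim. By Lemma~\ref{lem:det_discrete_traj}, $\|\boldsymbol{x}_k\|\le R_{disc}$ for every $k$, and $R_{disc}$ does not depend on $G_c$. The bound preceding that lemma then gives $\|\boldsymbol{g}_k\|\le \tilde G_0(1+R_{disc})\le G_c$. Hence $\boldsymbol{g}_k^{clip}=\boldsymbol{g}_k$ at every step, and Lemma~\ref{lem:det_one_step} holds as stated.

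\textbf{Phase A.} Let $t_*=1/(1+\beta_1)$ and let $K$ be the first index with $t_K\ge t_*$. For $k<K$ we have
\begin{equation*}
\rho_k=2\mu\eta t_k(1/t_k-1-\beta_1)=2\mu\eta\bigl(1-(1+\beta_1)t_k\bigr).
\end{equation*}
Since $\eta<1/(2\mu)$, this gives $0\le\rho_k<1$.

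Unrolling $V_{k+1}\le(1-\rho_k)V_k+\tilde C$ yields
\begin{equation*}
V_K\le \prod_{k<K}(1-\rho_k)\,V_0+\tilde C\sum_{j<K}\prod_{j<i<K}(1-\rho_i).
\end{equation*}
We bound the product using $1-\rho\le e^{-\rho}$, so that $\prod(1-\rho_k)\le\exp\bigl(-2\mu\sum_k\Delta t_k(b_{t_k}-\beta_1)\bigr)$. The sum is a left Riemann sum of $\int_\epsilon^{t_K}(1/t-1-\beta_1)\,dt=\log(t_K/\epsilon)-(1+\beta_1)(t_K-\epsilon)$. On the geometric grid the Riemann sum of $1/t$ equals exactly $K\eta$. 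Moreover $K=\log(t_K/\epsilon)/\log(1+\eta)$ and $\log(1+\eta)\le\eta$, so $K\eta\ge\log(t_K/\epsilon)$. The remaining sum satisfies $\sum_k\Delta t_k(1+\beta_1)=(1+\beta_1)(t_K-\epsilon)\le 1+\beta_1$. Together these give
\begin{equation*}
\prod_{k<K}(1-\rho_k)\le e^{2\mu(1+\beta_1)}\Bigl(\frac{\epsilon}{t_K}\Bigr)^{2\mu}\le e^{2\mu(1+\beta_1)}\,t_*^{-2\mu}\,\epsilon^{2\mu}.
\end{equation*}
This is the source of the factor $\epsilon^{2\mu}$.

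For the additive term, each tail product is at most $1$, so the sum is at most $K\tilde C=O(\log(1/\epsilon))\tilde C$. This is where I expect trouble, because a logarithmic factor must not survive into $C'_{det}$. I would first try exploiting the geometric decay of the tail products. For $t_k$ bounded away from $t_*$, $\rho_k$ is bounded below by a constant, so the weighted sum $\sum_j\prod(1-\rho_i)$ behaves like a geometric series bounded by $O(1/\eta)$. The remaining indices near $t_*$ form only an $O(1/\eta)$-length window, since the grid spacing there is of order $\eta$. Both pieces give $\tilde C\cdot O(1/\eta)$, which is a constant because $\tilde C=O(\eta)$. If this argument fails, I would absorb the logarithmic factor into $C'_{det}$ as an explicit $\log(1/\epsilon)$ term, with $\epsilon$ fixed.

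\textbf{Phase B and assembly.} For $k\ge K$, Lemma~\ref{lem:det_one_step}(b) gives $V_N\le V_K+(N-K)\tilde C'$. Because $t_{k+1}=(1+\eta)t_k$ and $t_N\le1$, we have $N-K\le\log(1/t_*)/\log(1+\eta)+1$, which is independent of $\epsilon$. Alternatively, all $V_k$ are uniformly bounded by $\sup_{\|\boldsymbol{x}\|\le R_{disc}}\mathcal L$, so this phase contributes only a constant. Combining the two phases, $C_{det}:=e^{2\mu(1+\beta_1)}(1+\beta_1)^{2\mu}$, and $C'_{det}$ collects the Phase A accumulation $\tilde C\cdot O(1/\eta)$ plus $(N-K)\tilde C'$. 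This gives $V_N\le C_{det}\,\epsilon^{2\mu}V_0+C'_{det}$.
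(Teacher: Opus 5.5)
Your proposal is correct and follows the paper's proof in all essentials. The shared steps are: the argument that clipping is inactive via $R_{disc}$, the formula $\rho_k=2\mu\eta(1-(1+\beta_1)t_k)$ on the geometric grid, the product bound $\exp(-\sum_k\rho_k)\le e^{2\mu(1+\beta_1)}t_*^{-2\mu}\epsilon^{2\mu}$ (your $C_{det}$ is exactly the paper's $C_A$), and the Phase B count $N-K\le\log(1/t_*)/\log(1+\eta)+1$.

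The only place you diverge is the accumulated term $\tilde C\sum_j\prod_{j<i<K}(1-\rho_i)$. Your split into two regions works:
\begin{itemize}
\item the at most $\log 2/\log(1+\eta)+1$ indices with $t_j\ge t_*/2$ contribute $O(1/\eta)$;
\item for $t_i<t_*/2$ you have $\rho_i>\mu\eta$, so the remaining indices form a geometric series bounded by $1/(\mu\eta)$.
\end{itemize}
The paper's route is shorter, and it makes your hedge unnecessary. It applies your own Riemann-sum estimate to each tail product, which gives $\prod_{j<i<K}(1-\rho_i)\le e^{2\mu(1+\beta_1)}(t_{j+1}/t_K)^{2\mu}=e^{2\mu(1+\beta_1)}(1+\eta)^{-2\mu(K-j-1)}$. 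The whole sum is then at most $e^{2\mu(1+\beta_1)}/(1-(1+\eta)^{-2\mu})=O(1/\eta)$, uniformly in $\epsilon$.

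So your fallback of carrying a $\log(1/\epsilon)$ factor in $C'_{det}$ is never needed. It would also be undesirable, since the additive constant would then grow as $\epsilon\to0$.
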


\begin{proof}
    By Lemma~\ref{lem:det_discrete_traj}, the trajectory satisfies $\norm{\boldsymbol{x}_k} \le R_{disc}$ for all $k$. By the gradient growth bound (\Cref{rem:gradient_growth}) and the Jacobian bound, $\norm{\boldsymbol{g}_k} \le \tilde{G}_0(1+\norm{\boldsymbol{x}_k}) \le \tilde{G}_0(1+R_{disc}) \le G_c$. Therefore $\boldsymbol{g}_k^{clip} = \boldsymbol{g}_k$ for all $k$, and all subsequent analysis proceeds with $\boldsymbol{g}_k$ directly.
    
    In Phase A ($t_k<t_*$), we have $\rho_k>0$. To keep the recursion coefficient nonnegative, it is necessary to impose $1-\rho_k>0$, i.e., $\rho_k<1$. With the geometric grid $\Delta t_k=\eta t_k$ and $b_{t_k}=\tfrac{1}{t_{k}}-1$, we obtain
    \begin{equation*}
        \rho_k = 2 \mu \Delta t_k \left( b_{t_k}-\beta_1 \right) = 2\mu \eta t_k \left(\frac{1}{t_k}-1-\beta_1\right) = 2\mu\eta\left(1-(1+\beta_1)t_k\right) \leq 2\mu\eta.
    \end{equation*}

    Hence, $\eta < \frac{1}{2\mu}$ guarantees $\rho_k<1$ and thus $1-\rho_k>0$ throughout Phase A. Iterating the recursion from $k=0$ to $k_*-1$ with $k_*=\min\{k:t_k\ge t_*\}$ gives, 
    \begin{equation*}
        V_{k_*} \le \left( \prod_{j=0}^{k_*-1}(1-\rho_j) \right) V_0 + \sum_{k=0}^{k_*-1}\tilde{C} \prod_{j=k+1}^{k_*-1}(1-\rho_j).
    \end{equation*}

    Using $1-x \le e^{-x}$ for $x \ge 0$, we have
    %
    \begin{equation*} 
        \begin{aligned}
            V_{k_*}
            &\le \exp \left( -\sum_{j=0}^{k_*-1}\rho_j \right) V_0 + \tilde C\sum_{k=0}^{k_*-1} \exp \left( -\sum_{j=k+1}^{k_*-1}\rho_j \right) \\
            &:= \exp \left( -S_1 \right) V_0 + \tilde{C} \sum_{k=0}^{k_*-1} \exp \left( -S_2 \right)
        \end{aligned}
    \end{equation*}

    For the first term, we have
    \begin{equation*}
        S_1 = \sum_{j=0}^{k_*-1}\rho_j = \sum_{j=0}^{k_*-1}2\mu \Delta t_j\left(b_{t_j}-\beta_1\right) = \sum_{j=0}^{k_*-1} 2\mu \eta - \sum_{j=0}^{k_*-1} 2\mu \eta (1 + \beta_1) t_j.
    \end{equation*}

    Note that
    \begin{equation*}
        \sum_{j=0}^{k_*-1} 2\mu \eta = 2 \mu \eta \cdot k_{*},
    \end{equation*}
    where $k_*$ satisfies $t_{k_*} = t_0 (1+\eta)^{k_*} = \epsilon (1+\eta)^{k_*}$. Then
    \begin{equation*}
        \ln \left( \dfrac{t_{k_*}}{\epsilon} \right) = k_* \ln(1+\eta) \implies k_* = \frac{\ln(t_{k_*}/\epsilon)}{\ln(1+\eta)} \geq \frac{\ln(t_{*}/\epsilon)}{\ln(1+\eta)},
    \end{equation*}
    and therefore
    \begin{equation*}
        \sum_{j=0}^{k_*-1} 2\mu \eta \geq 2\mu \eta \frac{\ln(t_*/\epsilon)}{\ln(1+\eta)} = 2\mu \left[ \frac{\eta}{\ln(1+\eta)} \right] \ln\left(\frac{t_*}{\epsilon}\right).
    \end{equation*}
    
    Moreover, 
    \begin{equation*}
        2\mu (1+\beta_1) \sum_{j=0}^{k_*-1} \eta t_j = 2 \mu (1+\beta_1) \sum_{j=0}^{k_*-1} \Delta t_j = 2 \mu (1 + \beta_1) (t_{k_*} - \epsilon).
    \end{equation*}

    Since $\eta > \ln (1 + \eta)$ for $\eta > 0$ and $0 < t_{k_*} - \epsilon \leq 1$, we have
    \begin{equation*}
        S_1 > 2\mu \ln\left(\frac{t_*}{\epsilon}\right) - 2 \mu (1 + \beta_1) (t_{k_*} - \epsilon) > 2\mu \ln\left(\frac{t_*}{\epsilon}\right) - 2 \mu (1 + \beta_1),
    \end{equation*}
    which yields
    \begin{equation*}
        \exp (-S_1) \leq \epsilon^{2 \mu} t_*^{-2 \mu} e^{2 \mu (1 + \beta_1)} := C_{A} \epsilon{^{2 \mu}},
    \end{equation*}
    where $C_{A} := t_*^{-2 \mu} e^{2 \mu (1 + \beta_1)}$.

    For the second term,
    \begin{equation*}
        S_2 = \sum_{j=k+1}^{k_*-1}\rho_j = \sum_{j=k+1}^{k_*-1}2\mu \Delta t_j\left(b_{t_j}-\beta_1\right) = \sum_{j=k+1}^{k_*-1} 2\mu \eta - \sum_{j=k+1}^{k_*-1} 2\mu \eta (1 + \beta_1) t_j.
    \end{equation*}

    Note that
    \begin{equation*}
        \sum_{j=k+1}^{k_*-1} 2\mu \eta = 2 \mu \eta \cdot (k_{*} - k - 1),
    \end{equation*}
    where
    \begin{equation}\label{eq:k_star_k_1}
        \frac{t_{k_*}}{t_{k+1}} = \frac{t_0(1+\eta)^{k_*}}{t_0(1+\eta)^{k+1}} = (1+\eta)^{k_* - k - 1} \implies k_{*} - k - 1 = \frac{\ln(t_{k_*}/t_{k+1})}{\ln(1+\eta)},
    \end{equation}
    and therefore
    \begin{equation*}
        \sum_{j=k+1}^{k_*-1} 2\mu \eta \geq 2\mu \eta \frac{\ln(t_*/t_{k+1})}{\ln(1+\eta)} = 2\mu \left[ \frac{\eta}{\ln(1+\eta)} \right] \ln\left(\frac{t_*}{t_{k+1}}\right).
    \end{equation*}
    
    Moreover, since
    \begin{equation*}
        \sum_{j=k+1}^{k_*-1} \Delta t_j = t_{k_*} - t_{k+1},
    \end{equation*}
    we have
    \begin{equation*}
        2\mu (1+\beta_1) \sum_{j=k+1}^{k_*-1} \eta t_j = 2\mu (1+\beta_1) \sum_{j=k+1}^{k_*-1} \Delta t_j = 2 \mu (1 + \beta_1) (t_{k_*} - t_{k+1}).
    \end{equation*}

    Since $\eta > \ln (1 + \eta)$ for $\eta > 0$ and $0 < t_{k_*} - t_{k+1} \leq 1$, we have
    \begin{equation*}
        S_2 > 2\mu \ln\left(\frac{t_*}{t_{k+1}}\right) - 2 \mu (1 + \beta_1) (t_{k_*} - t_{k+1}) > 2\mu \ln\left(\frac{t_*}{t_{k+1}}\right) - 2 \mu (1 + \beta_1).
    \end{equation*}
    which yields
    \begin{equation*}
        \exp (-S_2) \leq \left( \frac{t_{k+1}}{t_{k_*}} \right)^{2\mu} e^{2 \mu (1 + \beta_1)}.
    \end{equation*}

    Recalling Equation~\eqref{eq:k_star_k_1}, we obtain
    \begin{equation*}
        \begin{aligned}
            \sum_{k=0}^{k_*-1} \exp (-S_2) 
            \leq e^{2 \mu (1 + \beta_1)}\sum_{k=0}^{k_*-1} (1+\eta)^{-2\mu(k_* - k - 1)} \leq \dfrac{e^{2 \mu (1 + \beta_1)}}{1 - (1+\eta)^{-2 \mu}}.
        \end{aligned}
    \end{equation*}

    Therefore,
    \begin{equation*}
        V_{k_*} \leq \exp \left( -S_1 \right) V_0 + \tilde{C} \sum_{k=0}^{k_*-1} \exp \left( -S_2 \right) \leq C_{A} \epsilon^{2 \mu} V_0 + C_A^{\prime},
    \end{equation*}
    where $C_A^{\prime} := \tilde{C} \dfrac{e^{2 \mu (1 + \beta_1)}}{1 - (1+\eta)^{-2 \mu}}$.
    
    In Phase B ($t_k\ge t_*$), we have $V_N \leq V_{k_*} + \tilde{C}^{\prime} (N - k_*)$. From the relation $t_N = t_{k_*} (1+\eta)^{N-k_*} \leq 1$, we obtain
    \begin{equation*}
        N - k_* \leq \frac{\ln(1/t_{k_*})}{\ln(1+\eta)} \leq \frac{\ln(1/t_*)}{\ln(1+\eta)}.
    \end{equation*}
    Substituting this back, we obtain
    \begin{equation*}
        V_N \leq V_{k_*} + \tilde{C}^{\prime} \frac{\ln(1/t_*)}{\ln(1+\eta)} = V_{k_*} + C_B^{\prime},
    \end{equation*}
    where $C_B^{\prime} = \tilde{C}^{\prime} \frac{\ln(1+\beta_1)}{\ln(1+\eta)}$. Combining both terms, we establish
    \begin{equation*}
        V_N \leq C_A \epsilon^{2 \mu} V_0 + C_{A}^{\prime} + C_B^{\prime} := C_{det} \epsilon^{2 \mu} V_0 + C_{det}^{\prime}.
    \end{equation*}
    
    This completes the proof.
\end{proof}

\subsection{Stochastic Sampler Analysis}

This section focuses on the convergence analysis of the stochastic sampler, Algorithm \ref{alg:fm4pde_stoc_sampler}. The central challenge is proving uniform boundedness of iterates. We employ a dissipativity condition on the velocity field combined with adaptive guidance, which provides direct control over moments via a Lyapunov argument. We first introduce and redefine some notations as follows.
\begin{itemize}
    \item $\Phi_t(\boldsymbol{x}) := \boldsymbol{x} + (1-t)u_t^{\theta}(\boldsymbol{x})$,
    \item $J_t(\boldsymbol{x}) := \nabla_{\boldsymbol{x}} \Phi_t(\boldsymbol{x}) = I + (1-t)\nabla u_t^{\theta}(\boldsymbol{x})$,
    \item $\mathcal{F}_k := \sigma(\boldsymbol{x}_0, \xi_0, \ldots, \xi_{k-1})$,
    \item $\zeta_k := c_\zeta \delta_k$ (adaptive guidance strength for stochastic sampler),
    \item $\bar\zeta := \zeta_N = c_\zeta \delta_{\min}$ (terminal guidance strength),
    \item $\bar{\boldsymbol{g}}_k := \nabla \mathcal{L}(\hat{\boldsymbol{x}}_1^{(k)})$ (endpoint gradient),
    \item $\boldsymbol{g}_k := J_{t_k}(\boldsymbol{x}_k)^\top \bar{\boldsymbol{g}}_k$ (backpropagated gradient),
    \item $V_k := \mathbb{E}[\mathcal{L}(\hat{\boldsymbol{x}}_1^{(k)})]$ (expected loss at step $k$).
\end{itemize}

With these notations, we present the algorithm at step $k$:
\begin{equation*}
    \begin{aligned}
        \hat{\boldsymbol{x}}_1^{(k)} &= \boldsymbol{x}_k + (1-t_k) u_{t_k}^{\theta}(\boldsymbol{x}_k), \\
        \tilde{\boldsymbol{x}}_{k+1} &= (1-t_{k+1})\xi_k + t_{k+1} \hat{\boldsymbol{x}}_1^{(k)}, \quad \xi_k \sim \mathcal{N}(\mathrm{0}, \mathrm{I}), \\
        \boldsymbol{x}_{k+1} &= \tilde{\boldsymbol{x}}_{k+1} - c_{\zeta} \delta_k \boldsymbol{g}_k.
    \end{aligned}
\end{equation*}

We begin with several foundational lemmas.
\begin{lemma}[Regularity of Prediction Map]
    \label{lem:phi_properties}
    Under \Cref{ass:velocity}, the following hold for all $t \in [0,1]$:
    \begin{enumerate}[label=(\roman*)]
        \item \label{lem:phi_lipschitz_cont} $\Phi_t$ is $L_\Phi$-Lipschitz continuous with $L_\Phi = 1 + B_J$,
        \item \label{lem:jt_lipschitz_cont} $J_t$ is $L_u$-Lipschitz continuous,
        \item \label{lem:phi_hessian_bound} For all $\boldsymbol{v}, \boldsymbol{w} \in \mathbb{R}^d$: $\norm{\nabla^2 \Phi_t(\boldsymbol{x})[\boldsymbol{v}, \boldsymbol{w}]} \le H_\Phi \norm{\boldsymbol{v}}\norm{\boldsymbol{w}}$ where $H_\Phi = L_u$.
    \end{enumerate}
\end{lemma}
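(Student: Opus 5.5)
All three claims follow directly from the definition $\Phi_t(\boldsymbol{x}) = \boldsymbol{x} + (1-t)u_t^{\theta}(\boldsymbol{x})$ together with Assumption~\ref{ass:velocity}. The two facts used throughout are $0 \le 1-t \le 1$ for $t \in [0,1]$, and $\Phi_t \in C^2$ because $u_t^\theta$ is twice continuously differentiable in $\boldsymbol{x}$.

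For \ref{lem:phi_lipschitz_cont}, I would start from
\begin{equation*}
\Phi_t(\boldsymbol{x}) - \Phi_t(\boldsymbol{y}) = (\boldsymbol{x}-\boldsymbol{y}) + (1-t)\bigl(u_t^\theta(\boldsymbol{x}) - u_t^\theta(\boldsymbol{y})\bigr).
\end{equation*}
Writing the velocity difference by the fundamental theorem of calculus along the segment gives
\begin{equation*}
u_t^\theta(\boldsymbol{x}) - u_t^\theta(\boldsymbol{y}) = \int_0^1 \nabla u_t^\theta\bigl(\boldsymbol{y}+s(\boldsymbol{x}-\boldsymbol{y})\bigr)(\boldsymbol{x}-\boldsymbol{y})\,\mathrm{d}s.
\end{equation*}
Assumption~\ref{ass:velocity}\ref{ass:velocity_grad_bound} bounds this by $B_J\|\boldsymbol{x}-\boldsymbol{y}\|$. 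Combining with $1-t \le 1$ and the triangle inequality yields $\|\Phi_t(\boldsymbol{x})-\Phi_t(\boldsymbol{y})\| \le (1+B_J)\|\boldsymbol{x}-\boldsymbol{y}\|$.

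For \ref{lem:jt_lipschitz_cont}, we have $J_t(\boldsymbol{x}) - J_t(\boldsymbol{y}) = (1-t)\bigl(\nabla u_t^\theta(\boldsymbol{x}) - \nabla u_t^\theta(\boldsymbol{y})\bigr)$, since the identity terms cancel. Assumption~\ref{ass:velocity}\ref{ass:velocity_L_smooth} and $1-t\le 1$ then give the constant $(1-t)L_u \le L_u$.

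For \ref{lem:phi_hessian_bound}, note that $\nabla^2\Phi_t(\boldsymbol{x}) = (1-t)\nabla^2 u_t^\theta(\boldsymbol{x})$, which is well defined by the $C^2$ hypothesis. The bilinear form $\nabla^2 u_t^\theta(\boldsymbol{x})[\boldsymbol{v},\boldsymbol{w}]$ is the directional derivative of $\nabla u_t^\theta(\cdot)\boldsymbol{w}$ in direction $\boldsymbol{v}$, so
\begin{equation*}
\nabla^2 u_t^\theta(\boldsymbol{x})[\boldsymbol{v},\boldsymbol{w}] = \lim_{h\to 0} h^{-1}\bigl(\nabla u_t^\theta(\boldsymbol{x}+h\boldsymbol{v}) - \nabla u_t^\theta(\boldsymbol{x})\bigr)\boldsymbol{w}.
\end{equation*}
Assumption~\ref{ass:velocity}\ref{ass:velocity_L_smooth} bounds the norm of the difference quotient by $L_u\|\boldsymbol{v}\|\|\boldsymbol{w}\|$ uniformly in $h$. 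Passing to the limit and multiplying by $1-t\le 1$ gives $H_\Phi = L_u$.

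The only point needing care is this last step. Lipschitz continuity of the Jacobian bounds the second derivative's operator norm, and the limit argument must be stated as a directional derivative so that it is valid for arbitrary $\boldsymbol{v},\boldsymbol{w}$. Otherwise the proof is routine.
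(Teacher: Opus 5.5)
Your proof is correct and follows the paper's argument essentially step for step: the triangle inequality plus the Jacobian bound for (i), cancellation of the identity terms for (ii), and the difference-quotient limit under the Lipschitz Jacobian, scaled by $1-t\le 1$, for (iii). In (i) you write $u_t^\theta(\boldsymbol{x}) - u_t^\theta(\boldsymbol{y}) = \int_0^1 \nabla u_t^\theta(\boldsymbol{y}+s(\boldsymbol{x}-\boldsymbol{y}))(\boldsymbol{x}-\boldsymbol{y})\,\mathrm{d}s$; this is the rigorous form of the paper's mean-value step, since the paper's single-point equality version does not hold for vector-valued maps.
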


\begin{proof}
    For part \ref{lem:phi_lipschitz_cont}, let $\boldsymbol{x}, \boldsymbol{y} \in \mathbb{R}^d$. Then:
    \begin{align*}
        \norm{\Phi_t(\boldsymbol{x}) - \Phi_t(\boldsymbol{y})}
        &= \norm{(\boldsymbol{x} - \boldsymbol{y}) + (1-t)(u_t(\boldsymbol{x}) - u_t(\boldsymbol{y}))} \\
        &\le \norm{\boldsymbol{x} - \boldsymbol{y}} + (1-t)\norm{u_t(\boldsymbol{x}) - u_t(\boldsymbol{y})}.
    \end{align*}
    By the mean value theorem and \Cref{ass:velocity}\ref{ass:velocity_grad_bound}, there exists $\boldsymbol{\xi}$ on the line segment joining $\boldsymbol{x}$ and $\boldsymbol{y}$ such that:
    \begin{equation*}
        \norm{u_t(\boldsymbol{x}) - u_t(\boldsymbol{y})} = \norm{\nabla u_t(\boldsymbol{\xi})(\boldsymbol{x} - \boldsymbol{y})} \le B_J\norm{\boldsymbol{x} - \boldsymbol{y}}.
    \end{equation*}
    Therefore $\norm{\Phi_t(\boldsymbol{x}) - \Phi_t(\boldsymbol{y})} \le (1 + B_J)\norm{\boldsymbol{x} - \boldsymbol{y}}$.

    For part \ref{lem:jt_lipschitz_cont}, note that $J_t(\boldsymbol{x}) - J_t(\boldsymbol{y}) = (1-t)[\nabla u_t(\boldsymbol{x}) - \nabla u_t(\boldsymbol{y})]$, so:
    \begin{equation*}
        \norm{J_t(\boldsymbol{x}) - J_t(\boldsymbol{y})} \le (1-t)L_u\norm{\boldsymbol{x} - \boldsymbol{y}} \le L_u\norm{\boldsymbol{x} - \boldsymbol{y}}.
    \end{equation*}

    For part \ref{lem:phi_hessian_bound}, the Hessian of $\Phi_t$ satisfies $\nabla^2 \Phi_t(\boldsymbol{x}) = (1-t)\nabla^2 u_t(\boldsymbol{x})$. The Lipschitz property of $\nabla u_t$ (Assumption~\ref{ass:velocity}\ref{ass:velocity_L_smooth}) implies that for any unit vector $\boldsymbol{w}$:
    \begin{equation*}
        \norm{\nabla^2 u_t(\boldsymbol{x})[\boldsymbol{v}, \boldsymbol{w}]} = \lim_{\epsilon \to 0}\frac{1}{\epsilon}\norm{\nabla u_t(\boldsymbol{x}+\epsilon\boldsymbol{w})\boldsymbol{v} - \nabla u_t(\boldsymbol{x})\boldsymbol{v}} \le L_u\norm{\boldsymbol{v}}\norm{\boldsymbol{w}}.
    \end{equation*}
    Consequently, $\norm{\nabla^2 \Phi_t(\boldsymbol{x})[\boldsymbol{v}, \boldsymbol{w}]} \le (1-t)L_u\norm{\boldsymbol{v}}\norm{\boldsymbol{w}} \le L_u\norm{\boldsymbol{v}}\norm{\boldsymbol{w}}$.
\end{proof}

\begin{lemma}[Conditional Dynamics]
    \label{lem:conditional}
    Conditioned on $\mathcal{F}_k$, we have:
    \begin{equation*}
        \begin{aligned}
            \mathbb{E}_k[\boldsymbol{x}_{k+1}] &= t_{k+1}\hat{\boldsymbol{x}}_1^{(k)} - c_\zeta \delta_k J_{t_k}(\boldsymbol{x}_k)^\top \bar{\boldsymbol{g}}_k, \\
            \mathrm{Var}(\boldsymbol{x}_{k+1} \mid \mathcal{F}_k) &= \delta_{k+1}^2 I.
        \end{aligned}
    \end{equation*}
\end{lemma}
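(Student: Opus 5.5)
The argument is a direct computation from the three-line form of the update. The key observation is that every quantity except the fresh noise $\xi_k$ is $\mathcal{F}_k$-measurable. Since $\mathcal{F}_k = \sigma(\boldsymbol{x}_0,\xi_0,\ldots,\xi_{k-1})$ and $\boldsymbol{x}_k$ is built recursively from $\boldsymbol{x}_0$ and $\xi_0,\ldots,\xi_{k-1}$ through deterministic maps, $\boldsymbol{x}_k$ is $\mathcal{F}_k$-measurable. The same then holds for $\hat{\boldsymbol{x}}_1^{(k)} = \Phi_{t_k}(\boldsymbol{x}_k)$, for $\bar{\boldsymbol{g}}_k = \nabla\mathcal{L}(\hat{\boldsymbol{x}}_1^{(k)})$, for $J_{t_k}(\boldsymbol{x}_k)$, and hence for $\boldsymbol{g}_k = J_{t_k}(\boldsymbol{x}_k)^\top\bar{\boldsymbol{g}}_k$. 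The continuity of $u^\theta_t$, $\nabla u^\theta_t$ and $\nabla\mathcal{L}$ from \Cref{ass:velocity,ass:loss} supplies the Borel measurability needed here.

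The second ingredient is independence: $\xi_k\sim\mathcal{N}(\mathbf{0},I)$ is drawn fresh at step $k$, so it is independent of $\mathcal{F}_k$. Consequently $\mathbb{E}_k[\xi_k]=\mathbf{0}$ and $\mathrm{Cov}(\xi_k\mid\mathcal{F}_k)=I$.

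Substituting the interpolation step into the guidance step gives
\begin{equation*}
\boldsymbol{x}_{k+1} = (1-t_{k+1})\xi_k + \Bigl[t_{k+1}\hat{\boldsymbol{x}}_1^{(k)} - c_\zeta\delta_k J_{t_k}(\boldsymbol{x}_k)^\top\bar{\boldsymbol{g}}_k\Bigr].
\end{equation*}
The bracketed term is $\mathcal{F}_k$-measurable and plays the role of a constant under $\mathbb{E}_k$.

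For the mean, linearity of conditional expectation together with $\mathbb{E}_k[\xi_k]=\mathbf{0}$ leaves exactly the bracket, which is the first claimed identity. For the variance, the conditionally deterministic shift drops out of the conditional covariance, so
\begin{equation*}
\mathrm{Var}(\boldsymbol{x}_{k+1}\mid\mathcal{F}_k) = (1-t_{k+1})^2\,\mathrm{Cov}(\xi_k\mid\mathcal{F}_k) = \delta_{k+1}^2 I,
\end{equation*}
using $\delta_{k+1}=1-t_{k+1}$.

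There is no real obstacle in this argument. The only point that needs care is the measurability and integrability bookkeeping, namely that $\boldsymbol{g}_k$ is a function of $\boldsymbol{x}_k$ alone and so is not affected by $\xi_k$. Integrability is not actually required for the conditional statements, because the nonrandom part is $\mathcal{F}_k$-measurable and $\xi_k$ has finite moments. It is nonetheless implied by the linear growth bounds in \Cref{ass:velocity} and \Cref{rem:gradient_growth}, combined with Gaussian moments propagated inductively.
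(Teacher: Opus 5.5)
Your proposal is correct and follows the paper's proof: you rewrite the update as $\boldsymbol{x}_{k+1} = \delta_{k+1}\xi_k + t_{k+1}\hat{\boldsymbol{x}}_1^{(k)} - c_\zeta\delta_k\boldsymbol{g}_k$, note that $\hat{\boldsymbol{x}}_1^{(k)}$ and $\boldsymbol{g}_k$ are $\mathcal{F}_k$-measurable while $\xi_k\sim\mathcal{N}(\mathbf{0},I)$ is independent of $\mathcal{F}_k$, and read off the conditional mean and covariance. Your extra measurability and integrability bookkeeping is sound and makes explicit what the paper takes for granted.
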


\begin{proof}
    According to Algorithm~\ref{alg:fm4pde_stoc_sampler}, we have
    \begin{equation*}
        \boldsymbol{x}_{k+1} = (1-t_{k+1})\xi_k + t_{k+1}\hat{\boldsymbol{x}}_1^{(k)} - c_\zeta \delta_k J_{t_k}(\boldsymbol{x}_k)^\top \bar{\boldsymbol{g}}_k = \delta_{k+1}\xi_k + t_{k+1}\hat{\boldsymbol{x}}_1^{(k)} - c_\zeta \delta_k \boldsymbol{g}_k.
    \end{equation*}
    Since $\xi_k \sim \mathcal{N}(\mathbf{0}, I)$ is sampled independently at step $k$ and is independent of $\mathcal{F}_k$, while $\hat{\boldsymbol{x}}_1^{(k)}$ and $\boldsymbol{g}_k$ are $\mathcal{F}_k$-measurable, we obtain
    \begin{equation*}
        \mathbb{E}_k[\boldsymbol{x}_{k+1}] = \delta_{k+1}\mathbb{E}[\xi_k] + t_{k+1}\hat{\boldsymbol{x}}_1^{(k)} - c_\zeta \delta_k \boldsymbol{g}_k = t_{k+1}\hat{\boldsymbol{x}}_1^{(k)} - c_\zeta \delta_k J_{t_k}^\top \bar{\boldsymbol{g}}_k.
    \end{equation*}
    For the variance, since only the random component $\delta_{k+1}\xi_k$ contributes:
    \begin{equation*}
        \mathrm{Var}(\boldsymbol{x}_{k+1} \mid \mathcal{F}_k) = \mathrm{Var}(\delta_{k+1}\xi_k) = \delta_{k+1}^2 I.
    \end{equation*}
\end{proof}

\begin{lemma}[Jensen Gap from Nonlinearity]
    \label{lem:jensen}
    Let $\boldsymbol{X}$ be a random vector with $\mathbb{E}[\boldsymbol{X}] = \boldsymbol{\mu}$ and $\mathbb{E}[\norm{\boldsymbol{X} - \boldsymbol{\mu}}^2] = \sigma^2 d$. Under \Cref{ass:velocity}, for any $t \in [0,1]$:
    \begin{equation*}
        \norm{\mathbb{E}[\Phi_t(\boldsymbol{X})] - \Phi_t(\boldsymbol{\mu})} \le \frac{H_\Phi \sigma^2 d}{2}.
    \end{equation*}
\end{lemma}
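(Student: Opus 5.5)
Fix $t$ and write $\Phi := \Phi_t$. The plan is a second-order Taylor expansion of $\Phi$ about $\boldsymbol{\mu}$ with integral remainder, followed by taking expectations so that the first-order term vanishes. Since $\Phi$ is twice continuously differentiable (by \Cref{ass:velocity}), for every realization $\boldsymbol{x}$ of $\boldsymbol{X}$ the fundamental theorem of calculus applied twice gives
\begin{equation*}
    \Phi(\boldsymbol{x}) = \Phi(\boldsymbol{\mu}) + \nabla\Phi(\boldsymbol{\mu})(\boldsymbol{x}-\boldsymbol{\mu}) + \int_0^1 (1-s)\,\nabla^2\Phi\bigl(\boldsymbol{\mu}+s(\boldsymbol{x}-\boldsymbol{\mu})\bigr)[\boldsymbol{x}-\boldsymbol{\mu},\boldsymbol{x}-\boldsymbol{\mu}]\,\mathrm{d}s.
\end{equation*}

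To bound the remainder, I apply Lemma~\ref{lem:phi_properties}\ref{lem:phi_hessian_bound} with $\boldsymbol{v}=\boldsymbol{w}=\boldsymbol{x}-\boldsymbol{\mu}$. The integrand then has norm at most $H_\Phi\norm{\boldsymbol{x}-\boldsymbol{\mu}}^2$. Since $\int_0^1(1-s)\,\mathrm{d}s = 1/2$, the remainder $R(\boldsymbol{x})$ satisfies $\norm{R(\boldsymbol{x})}\le \tfrac{H_\Phi}{2}\norm{\boldsymbol{x}-\boldsymbol{\mu}}^2$.

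If one prefers to avoid second derivatives, the same bound follows from the Lipschitz continuity of $J_t$ in Lemma~\ref{lem:phi_properties}\ref{lem:jt_lipschitz_cont}. Write
\begin{equation*}
    \Phi(\boldsymbol{x})-\Phi(\boldsymbol{\mu})-J_t(\boldsymbol{\mu})(\boldsymbol{x}-\boldsymbol{\mu}) = \int_0^1 \bigl[J_t(\boldsymbol{\mu}+s(\boldsymbol{x}-\boldsymbol{\mu}))-J_t(\boldsymbol{\mu})\bigr](\boldsymbol{x}-\boldsymbol{\mu})\,\mathrm{d}s.
\end{equation*}
The bracket has norm at most $L_u s\norm{\boldsymbol{x}-\boldsymbol{\mu}}$, and integrating $s$ over $[0,1]$ gives the same factor $1/2$. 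I would present this version because it uses only the stated Lipschitz property.

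Next I take expectations of the expansion. The term $J_t(\boldsymbol{\mu})(\boldsymbol{X}-\boldsymbol{\mu})$ is linear in $\boldsymbol{X}$ with a deterministic matrix, so its mean is zero because $\mathbb{E}[\boldsymbol{X}]=\boldsymbol{\mu}$. This leaves $\mathbb{E}[\Phi(\boldsymbol{X})]-\Phi(\boldsymbol{\mu}) = \mathbb{E}[R(\boldsymbol{X})]$. Jensen's inequality for the norm, $\norm{\mathbb{E}R}\le\mathbb{E}\norm{R}$, then gives
\begin{equation*}
    \norm{\mathbb{E}[\Phi(\boldsymbol{X})]-\Phi(\boldsymbol{\mu})} \le \tfrac{H_\Phi}{2}\,\mathbb{E}\norm{\boldsymbol{X}-\boldsymbol{\mu}}^2 = \tfrac{H_\Phi\sigma^2 d}{2}.
\end{equation*}

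The main obstacle is only integrability: I need $\mathbb{E}[\Phi(\boldsymbol{X})]$ to exist, so that expectations can be split term by term. This follows from the linear growth bound \Cref{ass:velocity}\ref{ass:velocity_linear_growth}, which gives $\norm{\Phi(\boldsymbol{x})}\le(1+B_u)(1+\norm{\boldsymbol{x}})$. Combined with the finite second moment of $\boldsymbol{X}$, this makes every term integrable.
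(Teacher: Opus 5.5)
Your proposal is correct and follows essentially the same route as the paper: a second-order Taylor expansion of $\Phi_t$ about $\boldsymbol{\mu}$ with integral remainder, the linear term vanishing in expectation, and the bound $\norm{\nabla^2\Phi_t[\boldsymbol{v},\boldsymbol{v}]}\le H_\Phi\norm{\boldsymbol{v}}^2$ combined with $\int_0^1(1-s)\,\mathrm{d}s=1/2$. Your alternative via the $L_u$-Lipschitz Jacobian gives the identical constant because $H_\Phi = L_u$. Your integrability remark, which uses the linear growth bound and the finite second moment, supplies a detail the paper leaves implicit.
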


\begin{proof}
    By the multivariate Taylor expansion with integral remainder:
    \begin{equation*}
        \Phi_t(\boldsymbol{X}) = \Phi_t(\boldsymbol{\mu}) + J_t(\boldsymbol{\mu})(\boldsymbol{X} - \boldsymbol{\mu}) + \int_0^1 (1-s) \nabla^2\Phi_t(\boldsymbol{\mu} + s(\boldsymbol{X} - \boldsymbol{\mu}))[\boldsymbol{X} - \boldsymbol{\mu}, \boldsymbol{X} - \boldsymbol{\mu}] ds.
    \end{equation*}
    Taking expectations and using $\mathbb{E}[\boldsymbol{X} - \boldsymbol{\mu}] = 0$, the linear term vanishes:
    \begin{equation*}
        \mathbb{E}[\Phi_t(\boldsymbol{X})] - \Phi_t(\boldsymbol{\mu}) = \mathbb{E}\left[\int_0^1 (1-s) \nabla^2\Phi_t(\boldsymbol{\mu} + s(\boldsymbol{X} - \boldsymbol{\mu}))[\boldsymbol{X} - \boldsymbol{\mu}, \boldsymbol{X} - \boldsymbol{\mu}] ds\right].
    \end{equation*}
    Taking norms and applying \Cref{lem:phi_properties}\ref{lem:phi_hessian_bound}:
    \begin{align*}
        \norm{\mathbb{E}[\Phi_t(\boldsymbol{X})] - \Phi_t(\boldsymbol{\mu})} 
        &\le \mathbb{E}\left[\int_0^1 (1-s) \norm{\nabla^2\Phi_t(\boldsymbol{\mu} + s(\boldsymbol{X} - \boldsymbol{\mu}))[\boldsymbol{X} - \boldsymbol{\mu}, \boldsymbol{X} - \boldsymbol{\mu}]} ds\right] \\
        &\le \mathbb{E}\left[\int_0^1 (1-s) H_\Phi \norm{\boldsymbol{X} - \boldsymbol{\mu}}^2 ds\right] \\
        &= H_\Phi \int_0^1 (1-s) ds \cdot \mathbb{E}[\norm{\boldsymbol{X} - \boldsymbol{\mu}}^2] = \frac{H_\Phi \sigma^2 d}{2}.
    \end{align*}
\end{proof}

\begin{lemma}[Temporal Regularity]
    \label{lem:temporal}
    Under \Cref{ass:velocity,ass:time_grid}, for any $\boldsymbol{x} \in \mathbb{R}^d$:
    \begin{equation*}
        \norm{\Phi_{t_{k+1}}(\boldsymbol{x}) - \Phi_{t_k}(\boldsymbol{x})} \le C_t(1+\|\boldsymbol{x}\|)\Delta t_k,
    \end{equation*}
    where $C_t := B_u + L_t$.
\end{lemma}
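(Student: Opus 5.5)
The argument is a direct computation from the definition $\Phi_t(\boldsymbol{x}) = \boldsymbol{x} + (1-t)u_t^{\theta}(\boldsymbol{x})$. Write $s = t_k$, $t = t_{k+1}$, so $t - s = \Delta t_k$. The identity term cancels, which gives
\begin{equation*}
    \Phi_{t_{k+1}}(\boldsymbol{x}) - \Phi_{t_k}(\boldsymbol{x}) = (1-t_{k+1})u_{t_{k+1}}^{\theta}(\boldsymbol{x}) - (1-t_k)u_{t_k}^{\theta}(\boldsymbol{x}).
\end{equation*}
I will add and subtract $(1-t_{k+1})u_{t_k}^{\theta}(\boldsymbol{x})$ to split this into two pieces:
\begin{equation*}
    (1-t_{k+1})\bigl[u_{t_{k+1}}^{\theta}(\boldsymbol{x}) - u_{t_k}^{\theta}(\boldsymbol{x})\bigr] + \bigl[(1-t_{k+1}) - (1-t_k)\bigr]u_{t_k}^{\theta}(\boldsymbol{x}).
\end{equation*}
The bracket in the second piece equals $-\Delta t_k$.

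Each piece is then bounded by an earlier assumption. For the first piece, I use $0 \le 1-t_{k+1} \le 1$, which holds because the grid lies in $[0,1]$. The time-Lipschitz bound of Assumption~\ref{ass:velocity}\ref{ass:velocity_time_smooth} then gives at most $L_t(1+\|\boldsymbol{x}\|)\Delta t_k$. For the second piece, the linear growth bound of Assumption~\ref{ass:velocity}\ref{ass:velocity_linear_growth} gives at most $\Delta t_k B_u(1+\|\boldsymbol{x}\|)$. The triangle inequality combines the two into $(B_u + L_t)(1+\|\boldsymbol{x}\|)\Delta t_k = C_t(1+\|\boldsymbol{x}\|)\Delta t_k$.

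No step here is a real obstacle. The only point to check is that $t_{k+1} \le 1$, so that the factor $1-t_{k+1}$ is nonnegative and at most one. This holds because Assumption~\ref{ass:time_grid} places every grid point in $[t_0, 1-\delta_{\min}] \subset [0,1]$. The step-size conditions in Assumption~\ref{ass:time_grid} are otherwise not needed; we only use that the grid is increasing, so $\Delta t_k > 0$.
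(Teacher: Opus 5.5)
Your proof is correct and is essentially the paper's argument. The paper adds and subtracts $\delta_k u_{t_{k+1}}^{\theta}(\boldsymbol{x})$ rather than your $(1-t_{k+1})u_{t_k}^{\theta}(\boldsymbol{x})$, but it then bounds the two resulting pieces with the same linear-growth and time-Lipschitz assumptions to obtain $C_t = B_u + L_t$.
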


\begin{proof}
    We compute
    \begin{align*}
        \Phi_{t_{k+1}}(\boldsymbol{x}) - \Phi_{t_k}(\boldsymbol{x})
        &= \left[\boldsymbol{x} + (1-t_{k+1})u_{t_{k+1}}(\boldsymbol{x})\right] - \left[\boldsymbol{x} + (1-t_k)u_{t_k}(\boldsymbol{x})\right] \\
        &= \delta_{k+1}u_{t_{k+1}}(\boldsymbol{x}) - \delta_k u_{t_k}(\boldsymbol{x}).
    \end{align*}
    Adding and subtracting $\delta_k u_{t_{k+1}}(\boldsymbol{x})$:
    \begin{align*}
        \Phi_{t_{k+1}}(\boldsymbol{x}) - \Phi_{t_k}(\boldsymbol{x})
        &= (\delta_{k+1} - \delta_k)u_{t_{k+1}}(\boldsymbol{x}) + \delta_k(u_{t_{k+1}}(\boldsymbol{x}) - u_{t_k}(\boldsymbol{x})) \\
        &= -\Delta t_k \cdot u_{t_{k+1}}(\boldsymbol{x}) + \delta_k(u_{t_{k+1}}(\boldsymbol{x}) - u_{t_k}(\boldsymbol{x})),
    \end{align*}
    since $\delta_{k+1} - \delta_k = -\Delta t_k$. Taking norms:
    \begin{align*}
        \norm{\Phi_{t_{k+1}}(\boldsymbol{x}) - \Phi_{t_k}(\boldsymbol{x})}
        &\le \Delta t_k \norm{u_{t_{k+1}}(\boldsymbol{x})} + \delta_k\norm{u_{t_{k+1}}(\boldsymbol{x}) - u_{t_k}(\boldsymbol{x})} \\
        &\le \Delta t_k \cdot B_u(1+\|\boldsymbol{x}\|) + 1 \cdot L_t(1+\|\boldsymbol{x}\|)\Delta t_k \quad \text{(by \Cref{ass:velocity}\ref{ass:velocity_linear_growth},\ref{ass:velocity_time_smooth})} \\
        &= [B_u + L_t](1+\|\boldsymbol{x}\|)\Delta t_k.
    \end{align*}

    This yields the result.
\end{proof}

\begin{lemma}[Uniform Second Moment Bound with Adaptive Guidance]\label{lem:moment}
    Under \Cref{ass:velocity,ass:dissipative,ass:loss,ass:time_grid}, assume the small gain condition:
    \begin{equation}\label{eq:kappa_Bu_condition}
        B_u^2 < 2\kappa.
    \end{equation}
    Let the adaptive guidance coefficient $c_\zeta$ satisfy the stability condition:
    \begin{equation}\label{eq:czeta_stability}
        c_\zeta^2 \le \frac{\kappa'}{4 L_\Phi^2 G_0^2(1+B_u)^2(1 + 1/\bar\delta_{ratio})},
    \end{equation}
    where $\bar\delta_{ratio} := \min_k (\delta_{k+1}/\delta_k)$ and $\kappa' := \min\{\kappa - B_u^2/2,\; 1\}$. Then there exists a constant $M_2 > 0$ (independent of $N$ and $\delta_{\min}$) such that:
    \begin{equation*}
        \mathbb{E}[\norm{\boldsymbol{x}_k}^2] \le M_2 \quad \text{for all } k \in \{0, 1, \ldots, N\}.
    \end{equation*}
    Explicitly:
    \begin{equation*} 
        M_2 := \max\left\{W_0,\; \frac{2C'_{loc}}{\kappa'}\right\}, \quad C'_{loc} := C_{loc} + d + 1,
    \end{equation*}
    where $C_{loc} := 2C'_{drift} + B_u^2 + \frac{B_u^4}{\kappa - B_u^2/2}$, $W_0 := \mathbb{E}[\norm{\boldsymbol{x}_0}^2]$ is the initial second moment. For the standard initialization $\boldsymbol{x}_0 \sim \mathcal{N}(\mathbf{0}, I)$, we have $W_0 = d$.
\end{lemma}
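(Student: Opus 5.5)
We prove the bound by a Lyapunov recursion on $W_k := \mathbb{E}[\norm{\boldsymbol{x}_k}^2]$, showing $W_{k+1} \le (1-\kappa'\Delta_k')W_k + C'_{loc}\Delta_k'$ for a suitable positive weight $\Delta_k'\le 1$, and then closing by induction. The induction hypothesis $W_k \le M_2$ is preserved: if $W_k \le 2C'_{loc}/\kappa'$, the right-hand side is at most $M_2$; otherwise the contraction dominates. $W_0\le M_2$ holds by definition. The constant then depends only on $\kappa, B_u, C'_{drift}, d$ and $W_0$, not on $N$ or $\delta_{\min}$.

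\textbf{Step 1: conditional second moment.} By \Cref{lem:conditional}, $\mathbb{E}_k\norm{\boldsymbol{x}_{k+1}}^2 = \norm{t_{k+1}\hat{\boldsymbol{x}}_1^{(k)} - c_\zeta\delta_k\boldsymbol{g}_k}^2 + \delta_{k+1}^2 d$. The noise term is bounded by $d$ and is absorbed into $C'_{loc}$; this is where the $+d$ comes from.

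\textbf{Step 2: the prediction map.} For the deterministic part, expand
\begin{equation*}
\norm{\Phi_{t_k}(\boldsymbol{x})}^2 = \norm{\boldsymbol{x}}^2 + 2\delta_k\inner{\boldsymbol{x}}{u_{t_k}^\theta(\boldsymbol{x})} + \delta_k^2\norm{u_{t_k}^\theta(\boldsymbol{x})}^2 .
\end{equation*}
Use the global dissipativity \eqref{eq:global_dissip} on the cross term and the growth bound $\norm{u}^2\le 2B_u^2(1+\norm{\boldsymbol{x}}^2)$ on the last term. Since $\delta_k\le 1$, this gives
\begin{equation*}
\norm{\hat{\boldsymbol{x}}_1^{(k)}}^2 \le \big(1-\delta_k(2\kappa - B_u^2)\big)\norm{\boldsymbol{x}_k}^2 + \delta_k C_{loc}.
\end{equation*}
The extra $B_u^4/(\kappa-B_u^2/2)$ term in $C_{loc}$ arises from a Young split of the linear piece $2B_u^2\norm{\boldsymbol{x}}$ against the margin $\kappa-B_u^2/2$. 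This is where the small-gain condition \eqref{eq:kappa_Bu_condition} enters: it guarantees a net contraction rate $\gtrsim \kappa'\delta_k$. Multiplying by $t_{k+1}^2\le 1$ only helps.

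\textbf{Step 3: guidance term.} Bound $\norm{\boldsymbol{g}_k} \le L_\Phi G_0(1+\norm{\hat{\boldsymbol{x}}_1^{(k)}}) \le L_\Phi G_0(1+B_u)(1+\norm{\boldsymbol{x}_k})$, using \Cref{lem:phi_properties} and \Cref{rem:gradient_growth}. Then apply $\norm{a-b}^2 \le (1+\lambda)\norm{a}^2 + (1+1/\lambda)\norm{b}^2$ with $\lambda \asymp \kappa'\delta_k$. This makes the guidance contribution of order $c_\zeta^2\delta_k^2(1+1/\lambda)L_\Phi^2G_0^2(1+B_u)^2(1+\norm{\boldsymbol{x}_k}^2) \approx c_\zeta^2\delta_k\,L_\Phi^2G_0^2(1+B_u)^2(\cdots)/\kappa'$. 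The adaptive scaling $\zeta_k=c_\zeta\delta_k$ is essential here: the $\delta_k^2$ cancels the $1/\delta_k$ from Young, leaving a term proportional to $\delta_k$. By the stability condition \eqref{eq:czeta_stability} this term is at most $\kappa'\delta_k/4$ times $\norm{\boldsymbol{x}_k}^2$, plus a constant (the $+1$ in $C'_{loc}$). The factor $1+1/\bar\delta_{ratio}$ appears when converting a bound involving $\delta_{k+1}$ back to $\delta_k$, or when handling the cross term with the noise-scaled part. Taking total expectations then yields the recursion with $\Delta_k' = \delta_k/2$, rate $\kappa'$, and constant $C'_{loc}$.

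\textbf{Main obstacle.} The delicate part is Step 3: balancing the Young parameter so that the $(1+\lambda)$ inflation of the contracted term does not destroy the $\kappa'\delta_k$ contraction, while the guidance term stays $O(\delta_k)$ rather than $O(1)$. I would first try $\lambda=\kappa'\delta_k/2$ and verify that the constants match \eqref{eq:czeta_stability}. Everything is proportional to $\delta_k$, including the constants, so the fixed point $2C'_{loc}/\kappa'$ is independent of $\delta_{\min}$, as claimed.
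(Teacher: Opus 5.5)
There is a genuine gap in Step~3. It starts with your remark in Step~2 that multiplying by $t_{k+1}^2\le 1$ ``only helps'': that factor is exactly the slack the argument needs, and you discard it.

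With a Young weight $\lambda\propto\kappa'\delta_k$, the inflation $(1+\lambda)$ is paid for out of the dissipative contraction. For $\lambda=\kappa'\delta_k/2$ you are left with $(1+\kappa'\delta_k/2)(1-\kappa'\delta_k)\le 1-\kappa'\delta_k/2$, and the guidance term picks up the factor $1+1/\lambda\approx 2/(\kappa'\delta_k)$. Using $\|\boldsymbol{g}_k\|^2\le 2L_\Phi^2G_0^2(1+B_u)^2(1+\|\boldsymbol{x}_k\|^2)$, absorbing it into $\kappa'\delta_k/4$ requires at best $c_\zeta^2\le \kappa'^2/\bigl(16L_\Phi^2G_0^2(1+B_u)^2\bigr)$. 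No other choice of $\lambda\propto\kappa'\delta_k$ rescues this. Whatever fraction of the contraction you spend on $\lambda$, you pay $1/\lambda$ on the guidance term, so the admissible $c_\zeta^2$ is quadratic in $\kappa'$.

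That is strictly stronger than \eqref{eq:czeta_stability}, which is linear in $\kappa'$. Even for $\kappa'=1$ and the worst case $\bar\delta_{ratio}=1/2$, it admits all $c_\zeta^2\le 1/\bigl(12L_\Phi^2G_0^2(1+B_u)^2\bigr)$, beyond your $1/16$. Your net rate would also be only $\kappa'\delta_k/4$, giving a fixed point of order $4C'_{loc}/\kappa'$ instead of $2C'_{loc}/\kappa'$. So the constants will not match, and you obtain neither the lemma's hypothesis nor its explicit $M_2$.

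Your scheme also never produces the factor $1+1/\bar\delta_{ratio}$. Nothing in it has $\delta_{k+1}$ in a denominator, and the noise contributes no cross term: it has conditional mean zero and gives only $\delta_{k+1}^2 d$.

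The missing idea is to keep $t_{k+1}^2$ and use $t_{k+1}^2+\delta_{k+1}\le t_{k+1}+\delta_{k+1}=1$. Expand the square and apply Young to the cross term with weight $\delta_{k+1}$:
\[
2t_{k+1}c_\zeta\delta_k\bigl|\langle\hat{\boldsymbol{x}}_1^{(k)},\boldsymbol{g}_k\rangle\bigr|\le \delta_{k+1}\|\hat{\boldsymbol{x}}_1^{(k)}\|^2+\frac{t_{k+1}^2c_\zeta^2\delta_k^2}{\delta_{k+1}}\|\boldsymbol{g}_k\|^2 .
\]
The inflation is absorbed by the interpolation's own shrinkage, so the coefficient of $\|\hat{\boldsymbol{x}}_1^{(k)}\|^2$ stays at most $1$. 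The full contraction $1-\kappa'\delta_k$ from the endpoint bound therefore survives.

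The cost is $1/\delta_{k+1}\le 1/(\bar\delta_{ratio}\delta_k)$; this is where $\bar\delta_{ratio}$ enters. The adaptive factor $\delta_k^2$ cancels it, so the total guidance coefficient is at most $(1+1/\bar\delta_{ratio})c_\zeta^2\delta_k$. Under \eqref{eq:czeta_stability} the guidance contribution is then at most $\frac{\kappa'}{2}\delta_k(1+\|\boldsymbol{x}_k\|^2)$. This gives $\mathbb{E}_k\|\boldsymbol{x}_{k+1}\|^2\le(1-\kappa'\delta_k/2)\|\boldsymbol{x}_k\|^2+(C_{loc}+d+1)\delta_k$, and the invariant-region induction closes at exactly the stated $M_2$.

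A smaller point on Step~2: the crude bound $\|u\|^2\le 2B_u^2(1+\|\boldsymbol{x}\|^2)$ gives rate $2\kappa-2B_u^2$, which need not be positive under $B_u^2<2\kappa$. You need the Young split with $\eta=\kappa-B_u^2/2$. That yields rate $\kappa-B_u^2/2\ge\kappa'$, not $2\kappa-B_u^2$.
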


\begin{proof}
    First, We bound this ratio $\bar\delta_{ratio}$ in two cases. For steps with $t_k \ge 1-\epsilon_0$: by Assumption~\ref{ass:time_grid}(b), $\Delta t_k \le c_\Delta\delta_k$, so $\delta_{k+1}/\delta_k \ge 1-c_\Delta \ge 1/2$. For steps with $t_k < 1-\epsilon_0$: $\delta_k > \epsilon_0$ and $\Delta t_k \le \bar\Delta \le \epsilon_0/2$ by Assumption~\ref{ass:time_grid}(a), so $\delta_{k+1}/\delta_k = 1 - \Delta t_k/\delta_k \ge 1 - \bar\Delta/\epsilon_0 \ge 1/2$. Combining both cases: $\bar\delta_{ratio} \ge 1/2$. Define $W_k := \mathbb{E}[\norm{\boldsymbol{x}_k}^2]$. We establish a recursive inequality using dissipativity.

    From Lemma~\ref{lem:conditional} with $\zeta_k = c_\zeta\delta_k$, we can expand the second moment recursion as follows,
    \begin{equation*}
        \mathbb{E}_k[\norm{\boldsymbol{x}_{k+1}}^2] = \delta_{k+1}^2 d + \norm{t_{k+1}\hat{\boldsymbol{x}}_1^{(k)} - c_\zeta\delta_k J_{t_k}(\boldsymbol{x}_k)^\top \bar{\boldsymbol{g}}_k}^2.
    \end{equation*}

    Then, we apply global dissipativity to the endpoint prediction. Recall $\hat{\boldsymbol{x}}_1^{(k)} = \boldsymbol{x}_k + \delta_k u_{t_k}^{\theta}(\boldsymbol{x}_k)$. We have $\norm{\hat{\boldsymbol{x}}_1^{(k)}}^2 = \norm{\boldsymbol{x}_k}^2 + 2\delta_k\inner{\boldsymbol{x}_k}{u_{t_k}^{\theta}(\boldsymbol{x}_k)} + \delta_k^2\norm{u_{t_k}(\boldsymbol{x}_k)}^2$. By the global dissipativity bound~\eqref{eq:global_dissip}: $\inner{\boldsymbol{x}_k}{u_{t_k}^{\theta}} \le -\kappa\norm{\boldsymbol{x}_k}^2 + C'_{drift}$.

    For the growth term, we use a parameterized Young's inequality. By Assumption~\ref{ass:velocity}(a), $\norm{u_t(\boldsymbol{x})}^2 \le B_u^2(1+\norm{\boldsymbol{x}})^2 = B_u^2 + 2B_u^2\norm{\boldsymbol{x}} + B_u^2\norm{\boldsymbol{x}}^2$. Applying $2B_u^2\norm{\boldsymbol{x}} \le \eta\norm{\boldsymbol{x}}^2 + B_u^4/\eta$ with $\eta := \kappa - B_u^2/2 > 0$ (by the small gain condition):
    \begin{equation*}
        \norm{u_t(\boldsymbol{x})}^2 \le (B_u^2 + \eta)\norm{\boldsymbol{x}}^2 + B_u^2 + B_u^4/\eta.
    \end{equation*}
    Substituting:
    \begin{align}
        \norm{\hat{\boldsymbol{x}}_1^{(k)}}^2
        &\le [1 - 2\delta_k\kappa + \delta_k^2(B_u^2 + \eta)]\norm{\boldsymbol{x}_k}^2 + 2\delta_k C'_{drift} + \delta_k^2(B_u^2 + B_u^4/\eta) \nonumber\\
        &\le (1 - \delta_k\kappa')\norm{\boldsymbol{x}_k}^2 + C_{loc}\delta_k. \label{eq:endpoint_contraction}
    \end{align}
    Indeed, since $\delta_k \le 1$: $2\kappa - \delta_k(B_u^2 + \eta) \ge 2\kappa - B_u^2 - \eta = 2\kappa - B_u^2 - (\kappa - B_u^2/2) = \kappa - B_u^2/2 = \kappa'$, confirming the coefficient. The constant terms satisfy $2C'_{drift} + \delta_k(B_u^2 + B_u^4/\eta) \le C_{loc}$ since $\delta_k \le 1$.

    Furthermore, we control the gradient term in the expansion
    \begin{align*}
        \mathbb{E}_k[\norm{\boldsymbol{x}_{k+1}}^2]
        &= \delta_{k+1}^2 d + t_{k+1}^2\norm{\hat{\boldsymbol{x}}_1^{(k)}}^2 + c_\zeta^2\delta_k^2\norm{\boldsymbol{g}_k}^2 - 2t_{k+1}c_\zeta\delta_k\inner{\hat{\boldsymbol{x}}_1^{(k)}}{\boldsymbol{g}_k}.
    \end{align*}

    Using Young's inequality on the cross term:
    \begin{equation*}
        -2t_{k+1}c_\zeta\delta_k\inner{\hat{\boldsymbol{x}}_1^{(k)}}{\boldsymbol{g}_k} \le \delta_{k+1}\norm{\hat{\boldsymbol{x}}_1^{(k)}}^2 + \frac{t_{k+1}^2 c_\zeta^2\delta_k^2}{\delta_{k+1}}\norm{\boldsymbol{g}_k}^2.
    \end{equation*}

    Since $t_{k+1}^2 + \delta_{k+1} \le 1$, and using~\eqref{eq:endpoint_contraction}:
    \begin{align}
        \mathbb{E}_k[\norm{\boldsymbol{x}_{k+1}}^2]
        &\le \delta_{k+1}^2 d + (1-\delta_k\kappa')\norm{\boldsymbol{x}_k}^2 + C_{loc}\delta_k + \left(1 + \frac{t_{k+1}^2}{\delta_{k+1}}\right)c_\zeta^2\delta_k^2 L_\Phi^2\norm{\bar{\boldsymbol{g}}_k}^2. \label{eq:moment_recursion_adaptive}
    \end{align}

    Since $t_{k+1} \le 1$ and $\delta_{k+1} \ge \bar\delta_{ratio}\,\delta_k$, where $\bar\delta_{ratio} \ge 1/2$ as shown above, we have $\frac{t_{k+1}^2}{\delta_{k+1}} \le \frac{1}{\delta_{k+1}} \le \frac{1}{\bar\delta_{ratio}\,\delta_k}$. Thus the gradient coefficient satisfies
    \begin{equation*}
        \left(1 + \frac{t_{k+1}^2}{\delta_{k+1}}\right)c_\zeta^2\delta_k^2 \le c_\zeta^2\delta_k^2 + \frac{c_\zeta^2\delta_k}{\bar\delta_{ratio}} \le \frac{1+\bar\delta_{ratio}}{\bar\delta_{ratio}}\,c_\zeta^2\delta_k,
    \end{equation*}
    where the last step uses $\delta_k \le 1$. By the gradient growth bound (Remark~\ref{rem:gradient_growth}) and $\|\hat{\boldsymbol{x}}_1^{(k)}\| \leq B_u(1+\|\boldsymbol{x}_k\|) + \|\boldsymbol{x}_k\|$, we have $\|\bar{\boldsymbol{g}}_k\| \leq G_0 (1 + \|\hat{\boldsymbol{x}}_1^{(k)}\|) \leq G_0 (1 + B_u) (1 + \|\boldsymbol{x}_k\|) $, and thus,
    \begin{equation*}
        \norm{\bar{\boldsymbol{g}}_k}^2 \le G_0^2(1+B_u)^2 (1+\norm{\boldsymbol{x}_k})^2 \le 2G_0^2(1+B_u)^2(1+\norm{\boldsymbol{x}_k}^2).
    \end{equation*}

    Under the stability condition \eqref{eq:czeta_stability}:
    \begin{equation*}
        \frac{1+\bar\delta_{ratio}}{\bar\delta_{ratio}}\,c_\zeta^2 L_\Phi^2 \cdot 2 G_0^2(1+B_u)^2 \le \frac{\kappa'}{2}.
    \end{equation*}
    Therefore, the gradient term contributes at most $\frac{\delta_k\kappa'}{2}\norm{\boldsymbol{x}_k}^2 + C_{grad}\delta_k$ for a constant $C_{grad}$, which can be absorbed into $C'_{loc}$, and:
    \begin{equation*}
        \mathbb{E}_k[\norm{\boldsymbol{x}_{k+1}}^2] \le \left(1 - \frac{\delta_k\kappa'}{2}\right)\norm{\boldsymbol{x}_k}^2 + C'_{loc}\delta_k.
    \end{equation*}

    Taking full expectations and applying the invariant-region argument: if $W_k \le M$ then $W_{k+1} \le (1-\delta_k\kappa'/2)M + C'_{loc}\delta_k \le M$ whenever $M \ge 2C'_{loc}/\kappa'$. Since $W_0 = \mathbb{E}[\norm{\boldsymbol{x}_0}^2]$:
    \begin{equation*}
        W_k \le \max\left\{W_0,\; \frac{2C'_{loc}}{\kappa'}\right\} =: M_2.
    \end{equation*}
\end{proof}

\begin{lemma}[Uniform Fourth Moment Bound]\label{lem:fourth_moment}
    Under the assumptions of Lemma~\ref{lem:moment} (including the stability condition~\eqref{eq:czeta_stability} and the small gain condition~\eqref{eq:kappa_Bu_condition}), there exists a constant $M_4 > 0$, depending only on problem parameters $(d, \kappa', C'_{loc}, M_2, W_0^{(4)})$ where $W_0^{(4)} := \mathbb{E}[\norm{\boldsymbol{x}_0}^4]$, and independent of $N$ and $\delta_{\min}$, such that for all $k \in \{0, 1, \ldots, N\}$
    \begin{equation*}
        \mathbb{E}[\norm{\boldsymbol{x}_k}^4] \le M_4.
    \end{equation*}
    No additional threshold on $c_\zeta$ beyond the second moment stability condition~\eqref{eq:czeta_stability} is required.
\end{lemma}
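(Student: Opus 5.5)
We will mirror the Lyapunov argument of Lemma~\ref{lem:moment}, now applied to $\norm{\boldsymbol{x}_{k+1}}^4$. Write $\boldsymbol{x}_{k+1} = \boldsymbol{m}_k + \delta_{k+1}\xi_k$, where $\boldsymbol{m}_k := t_{k+1}\hat{\boldsymbol{x}}_1^{(k)} - c_\zeta\delta_k\boldsymbol{g}_k$ is $\mathcal{F}_k$-measurable (Lemma~\ref{lem:conditional}). The first step is to compute the conditional fourth moment of a Gaussian shift exactly. Using $\mathbb{E}\xi_k = 0$, $\mathbb{E}[\xi_k\xi_k^\top]=I$, vanishing odd moments and $\mathbb{E}\norm{\xi_k}^4 = d^2+2d$, we get
\begin{equation*}
\mathbb{E}_k\norm{\boldsymbol{x}_{k+1}}^4 = \norm{\boldsymbol{m}_k}^4 + (2d+4)\delta_{k+1}^2\norm{\boldsymbol{m}_k}^2 + (d^2+2d)\delta_{k+1}^4 .
\end{equation*}
The noise therefore only adds terms of order $\delta_{k+1}^2\norm{\boldsymbol{m}_k}^2 + \delta_{k+1}^4$, both of which are $O(\delta_k)$ times quantities we can control.

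The second step is to square the deterministic one-step estimate already obtained in the proof of Lemma~\ref{lem:moment}. That proof shows, using dissipativity~\eqref{eq:endpoint_contraction}, Young's inequality on the cross term, the gradient growth bound, and the stability condition~\eqref{eq:czeta_stability}, that
\begin{equation*}
\norm{\boldsymbol{m}_k}^2 \le \left(1-\tfrac{\delta_k\kappa'}{2}\right)\norm{\boldsymbol{x}_k}^2 + C'_{loc}\delta_k .
\end{equation*}
This is the pathwise estimate before the $\delta_{k+1}^2 d$ noise term is added. Squaring it gives
\begin{equation*}
\norm{\boldsymbol{m}_k}^4 \le \left(1-\tfrac{\delta_k\kappa'}{2}\right)^2\norm{\boldsymbol{x}_k}^4 + 2C'_{loc}\delta_k\norm{\boldsymbol{x}_k}^2 + C'^2_{loc}\delta_k^2 .
\end{equation*}
Since $(1-a)^2\le 1-a$ for $a\in[0,1]$, the leading coefficient is at most $1-\delta_k\kappa'/2$. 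This is exactly why no new threshold on $c_\zeta$ is needed: the contraction comes from squaring the second-moment contraction, not from a fresh fourth-order estimate.

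The third step handles the cross term. Apply Young's inequality as $2C'_{loc}\delta_k\norm{\boldsymbol{x}_k}^2 \le \tfrac{\kappa'}{4}\delta_k\norm{\boldsymbol{x}_k}^4 + \tfrac{4C'^2_{loc}}{\kappa'}\delta_k$. Bound the noise term using $\delta_{k+1}\le\delta_k\le 1$: it is at most $(2d+4)\delta_k\norm{\boldsymbol{m}_k}^2 + (d^2+2d)\delta_k$. Here $\norm{\boldsymbol{m}_k}^2 \le \norm{\boldsymbol{x}_k}^2 + C'_{loc}$ is absorbed the same way via Young's inequality with weight $\kappa'/8$. Taking full expectations yields
\begin{equation*}
W^{(4)}_{k+1} \le \left(1-\tfrac{\delta_k\kappa'}{8}\right)W^{(4)}_k + C_4\delta_k ,
\end{equation*}
with $C_4$ depending only on $d,\kappa',C'_{loc}$. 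The invariant-region argument of Lemma~\ref{lem:moment} then gives $M_4 := \max\{W_0^{(4)}, 8C_4/\kappa'\}$, which is independent of $N$ and $\delta_{\min}$. Alternatively, one can keep the $\norm{\boldsymbol{x}_k}^2$ terms unabsorbed and bound their expectations by $M_2$; this explains the stated dependence on $M_2$.

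The main obstacle is step two. The proof of Lemma~\ref{lem:moment} bounds $\norm{\boldsymbol{m}_k}^2$ using Young's inequality with the weight $\delta_{k+1}$, which is sign-indefinite in spirit. I must check that the bound holds pathwise, not only after the conditional expectation, so that squaring is legitimate. It does, since the noise enters only additively through the exact Gaussian identity above. I also need the right-hand side to be nonnegative before squaring, which holds because $\delta_k\kappa'/2 \le 1/2$ given $\kappa'\le 1$.
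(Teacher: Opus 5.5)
Your proposal is correct and matches the paper's proof step for step: the exact Gaussian fourth-moment identity for $\boldsymbol{x}_{k+1} = \boldsymbol{m}_k + \delta_{k+1}\xi_k$, squaring the pathwise bound $\norm{\boldsymbol{m}_k}^2 \le (1-\gamma\delta_k)\norm{\boldsymbol{x}_k}^2 + C'_{loc}\delta_k$ using $(1-a)^2 \le 1-a$, and the invariant-region argument. The only difference is in the $\delta_k\norm{\boldsymbol{x}_k}^2$ cross terms: you absorb them into the quartic term by Young's inequality, which slows the rate to $\kappa'/8$ but removes the dependence on $M_2$, whereas the paper bounds them by $\mathbb{E}[\norm{\boldsymbol{x}_k}^2] \le M_2$ and keeps the rate $\gamma = \kappa'/2$, which is the alternative you already mention.
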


\begin{proof}
    The proof leverages the pathwise second moment contraction already established in Lemma~\ref{lem:moment}, combined with the second moment bound $\mathbb{E}[\norm{\boldsymbol{x}_k}^2] \le M_2$ to control cross terms. Define $\gamma := \kappa'/2 > 0$. From the proof of Lemma~\ref{lem:moment} (specifically, the chain of inequalities leading to~\eqref{eq:moment_recursion_adaptive} through the stability condition), the following pathwise bound holds for each realization of $\boldsymbol{x}_k$:
    \begin{equation}\label{eq:mk_pathwise}
        \norm{\boldsymbol{m}_k}^2 \le (1 - \gamma\delta_k)\norm{\boldsymbol{x}_k}^2 + C'_{loc}\delta_k,
    \end{equation}
    where $\boldsymbol{m}_k = t_{k+1}\hat{\boldsymbol{x}}_1^{(k)} - \zeta_k\boldsymbol{g}_k$ is the conditional mean of $\boldsymbol{x}_{k+1}$ given $\mathcal{F}_k$. This bound incorporates the endpoint contraction~\eqref{eq:endpoint_contraction}, the Young's inequality on the cross term, and the absorption of the gradient contribution via the stability condition~\eqref{eq:czeta_stability}. It is $\mathcal{F}_k$-measurable and holds for every $k$.

    We then derive the fourth moment expansion. From $\boldsymbol{x}_{k+1} = \delta_{k+1}\xi_k + \boldsymbol{m}_k$ with $\xi_k \sim \mathcal{N}(\boldsymbol{0}, I)$ independent of $\mathcal{F}_k$, standard Gaussian moment computations yield:
    \begin{equation}\label{eq:fourth_expansion}
        \mathbb{E}_k[\norm{\boldsymbol{x}_{k+1}}^4] = \delta_{k+1}^4 d(d+2) + (2d+4)\delta_{k+1}^2\norm{\boldsymbol{m}_k}^2 + \norm{\boldsymbol{m}_k}^4.
    \end{equation}

    Squaring the pathwise bound~\eqref{eq:mk_pathwise}:
    \begin{equation}\label{eq:mk4_bound}
        \norm{\boldsymbol{m}_k}^4 \le (1 - \gamma\delta_k)^2\norm{\boldsymbol{x}_k}^4 + 2(1-\gamma\delta_k)C'_{loc}\delta_k\norm{\boldsymbol{x}_k}^2 + (C'_{loc})^2\delta_k^2.
    \end{equation}

    Since $\gamma = \kappa'/2 \le 1/2$ and $\delta_k \le 1$, we have $\gamma\delta_k \in (0, 1)$. The elementary inequality $(1 - a)^2 \le 1 - a$ for all $a \in (0, 1)$, which follows from $a^2 \le a$ for $a \in [0,1]$ gives:
    \begin{equation*}
        (1 - \gamma\delta_k)^2 \le 1 - \gamma\delta_k.
    \end{equation*}

    Substituting~\eqref{eq:mk4_bound} into~\eqref{eq:fourth_expansion} and taking full expectations:
    \begin{align*}
        \mathbb{E}[\norm{\boldsymbol{x}_{k+1}}^4]
        &\le (1 - \gamma\delta_k)\mathbb{E}[\norm{\boldsymbol{x}_k}^4] + 2C'_{loc}\delta_k\,\mathbb{E}[\norm{\boldsymbol{x}_k}^2] + (C'_{loc})^2\delta_k^2 \nonumber\\
        &\quad + (2d+4)\delta_{k+1}^2\,\mathbb{E}[\norm{\boldsymbol{m}_k}^2] + d(d+2)\delta_{k+1}^4. 
    \end{align*}

    We bound each driving term using established estimates:
    \begin{enumerate}[label=(\roman*),topsep=2pt,itemsep=1pt]
        \item $\mathbb{E}[\norm{\boldsymbol{x}_k}^2] \le M_2$ by Lemma~\ref{lem:moment}, so $2C'_{loc}\delta_k\,\mathbb{E}[\norm{\boldsymbol{x}_k}^2] \le 2C'_{loc}M_2\delta_k$.
        \item $\mathbb{E}[\norm{\boldsymbol{m}_k}^2] \le M_2 + C'_{loc}$ by~\eqref{eq:mk_pathwise} and $\mathbb{E}[\norm{\boldsymbol{x}_k}^2] \le M_2$, so $(2d+4)\delta_{k+1}^2\,\mathbb{E}[\norm{\boldsymbol{m}_k}^2] \le (2d+4)(M_2+C'_{loc})\delta_k$ (using $\delta_{k+1} \le \delta_k$ and $\delta_k \le 1$).
        \item $(C'_{loc})^2\delta_k^2 \le (C'_{loc})^2\delta_k$ and $d(d+2)\delta_{k+1}^4 \le d(d+2)\delta_k$.
    \end{enumerate}

    All driving terms are $O(\delta_k)$. Defining:
    \begin{equation*}
        \tilde{C}_4 := 2C'_{loc}M_2 + (C'_{loc})^2 + (2d+4)(M_2 + C'_{loc}) + d(d+2),
    \end{equation*}
    which depends only on $(d, M_2, C'_{loc})$ and is independent of $\delta_{\min}$ and $N$, we obtain:
    \begin{equation*}
        \mathbb{E}[\norm{\boldsymbol{x}_{k+1}}^4] \le (1 - \gamma\delta_k)\mathbb{E}[\norm{\boldsymbol{x}_k}^4] + \tilde{C}_4\delta_k.
    \end{equation*}

    We apply the same argument as Lemma~\ref{lem:moment}. If $\mathbb{E}[\norm{\boldsymbol{x}_k}^4] \le M$, then:
    \begin{equation*}
        \mathbb{E}[\norm{\boldsymbol{x}_{k+1}}^4] \le (1 - \gamma\delta_k)M + \tilde{C}_4\delta_k \le M
    \end{equation*}
    whenever $M \ge \tilde{C}_4/\gamma$. For the initial condition, let $W_0^{(4)} := \mathbb{E}[\norm{\boldsymbol{x}_0}^4]$. For the standard initialization $\boldsymbol{x}_0 \sim \mathcal{N}(0, I)$, we have $W_0^{(4)} = d(d+2)$. Therefore:
    \begin{equation*}
        M_4 := \max\left\{W_0^{(4)},\; \frac{\tilde{C}_4}{\gamma}\right\} = \max\left\{W_0^{(4)},\; \frac{2\tilde{C}_4}{\kappa'}\right\}
    \end{equation*}
    satisfies $\mathbb{E}[\norm{\boldsymbol{x}_k}^4] \le M_4$ for all $k$, and depends only on $(d, \kappa', C'_{loc}, M_2, W_0^{(4)})$.
\end{proof}

\begin{corollary}[Derived Moment Bounds]\label{cor:derived}
    Under the assumptions of Lemmas~\ref{lem:moment} and~\ref{lem:fourth_moment}:
    \begin{enumerate}[label=(\alph*),topsep=2pt,itemsep=1pt]
        \item $\mathbb{E}[\norm{\hat{\boldsymbol{x}}_1^{(k)}}^2] \le M_{\hat{x},2} := 2(1 + B_u)^2(1 + M_2)$ for all $k \le N$,
        \item $\mathbb{E}[\norm{\hat{\boldsymbol{x}}_1^{(k)}}^4] \le M_{\hat{x},4} := 8(1 + B_u)^4(1 + M_4)$ for all $k \le N$,
        \item Under the assumptions of Lemmas~\ref{lem:moment} and~\ref{lem:fourth_moment}, for all $k \le N$, we have
        \begin{equation*}
            V_k = \mathbb{E}[\mathcal{L}(\hat{\boldsymbol{x}}_1^{(k)})] \le M_{loss} := \frac{L_{\mathcal{L}}}{2} \left( M_{\hat{x},2} + M_{x^*}^2 + 2 M_{x^*} \sqrt{M_{\hat{x},2}} \right).
        \end{equation*}
    \end{enumerate}
\end{corollary}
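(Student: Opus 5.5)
For parts (a) and (b), the plan is to bound the endpoint prediction pathwise and then average. Since $\hat{\boldsymbol{x}}_1^{(k)} = \boldsymbol{x}_k + \delta_k u_{t_k}^{\theta}(\boldsymbol{x}_k)$ with $\delta_k \le 1$, Assumption~\ref{ass:velocity}\ref{ass:velocity_linear_growth} gives
\[
\|\hat{\boldsymbol{x}}_1^{(k)}\| \le \|\boldsymbol{x}_k\| + B_u(1+\|\boldsymbol{x}_k\|) \le (1+B_u)(1+\|\boldsymbol{x}_k\|).
\]
This is the same inequality already used in the proof of Lemma~\ref{lem:moment}.

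\textbf{Part (a).} Square the pathwise bound and use $(1+r)^2 \le 2(1+r^2)$ to get $\|\hat{\boldsymbol{x}}_1^{(k)}\|^2 \le 2(1+B_u)^2(1+\|\boldsymbol{x}_k\|^2)$. Taking expectations and applying Lemma~\ref{lem:moment} gives $M_{\hat x,2} = 2(1+B_u)^2(1+M_2)$.

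\textbf{Part (b).} Raise the pathwise bound to the fourth power and use convexity: $(1+r)^4 \le 8(1+r^4)$, which follows from $(a+b)^4 \le 8(a^4+b^4)$. Taking expectations and applying Lemma~\ref{lem:fourth_moment} gives $M_{\hat x,4} = 8(1+B_u)^4(1+M_4)$. Both constants inherit independence of $N$ and $\delta_{\min}$ from $M_2$ and $M_4$.

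\textbf{Part (c).} Use the descent lemma for the $L_{\mathcal L}$-smooth loss, expanded around the minimizer $\boldsymbol{x}^*$ from Assumption~\ref{ass:loss}\ref{ass:x_star}. Since $\mathcal{L}(\boldsymbol{x}^*) = 0$ and $\nabla\mathcal{L}(\boldsymbol{x}^*) = 0$ (Remark~\ref{rem:gradient_growth}), we get
\[
\mathcal{L}(\boldsymbol{y}) \le \tfrac{L_{\mathcal L}}{2}\|\boldsymbol{y}-\boldsymbol{x}^*\|^2
\]
for every $\boldsymbol{y}$. Expand the square as $\|\boldsymbol{y}\|^2 + \|\boldsymbol{x}^*\|^2 + 2\|\boldsymbol{y}\|\|\boldsymbol{x}^*\|$, take $\boldsymbol{y} = \hat{\boldsymbol{x}}_1^{(k)}$, and take expectations. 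For the cross term, Jensen (or Cauchy--Schwarz) gives $\mathbb{E}\|\hat{\boldsymbol{x}}_1^{(k)}\| \le \sqrt{M_{\hat x,2}}$. Interpreting $M_{x^*}$ as an upper bound on $\|\boldsymbol{x}^*\|$, this yields exactly the stated $M_{loss}$.

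The only point needing care is the meaning of $M_{x^*}$, which the paper uses without defining; I would read it as $M_{x^*} \ge \|\boldsymbol{x}^*\|$. If $\boldsymbol{x}^*$ were random, the same computation works with $\mathbb{E}\|\boldsymbol{x}^*\|^2 \le M_{x^*}^2$ together with Cauchy--Schwarz on the cross term. Everything else is routine, so I expect no real obstacle.
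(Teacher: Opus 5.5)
Your proposal is correct and follows the paper's proof essentially step for step. For (a) and (b) it uses the same pathwise bound $\|\hat{\boldsymbol{x}}_1^{(k)}\| \le (1+B_u)(1+\|\boldsymbol{x}_k\|)$ together with $(1+r)^2 \le 2(1+r^2)$ and $(1+r)^4 \le 8(1+r^4)$, and for (c) the same descent-lemma expansion at $\boldsymbol{x}^*$ with $\mathbb{E}\|\hat{\boldsymbol{x}}_1^{(k)}\| \le \sqrt{M_{\hat{x},2}}$; your reading of $M_{x^*}$ as an upper bound on $\|\boldsymbol{x}^*\|$ matches how the paper uses it.
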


\begin{proof}
    For part (a), Using $\hat{\boldsymbol{x}}_1^{(k)} = \boldsymbol{x}_k + \delta_k u_{t_k}(\boldsymbol{x}_k)$ and $\delta_k \le 1$, we have
    \begin{equation*}
        \norm{\hat{\boldsymbol{x}}_1^{(k)}} \le \norm{\boldsymbol{x}_k} + B_u(1+\norm{\boldsymbol{x}_k}) = (1+B_u)\norm{\boldsymbol{x}_k} + B_u \le (1+B_u)(1+\norm{\boldsymbol{x}_k}).
    \end{equation*}
    
    Thus $\norm{\hat{\boldsymbol{x}}_1^{(k)}}^2 \le (1+B_u)^2(1+\norm{\boldsymbol{x}_k})^2 \le 2(1+B_u)^2(1+\norm{\boldsymbol{x}_k}^2)$. Taking expectations yields
    \begin{equation*}
        \mathbb{E}[\norm{\hat{\boldsymbol{x}}_1^{(k)}}^2] \le 2(1+B_u)^2(1+M_2) =: M_{\hat{x},2}.
    \end{equation*}

    Part (b), using $\hat{\boldsymbol{x}}_1^{(k)} = \boldsymbol{x}_k + \delta_k u_{t_k}(\boldsymbol{x}_k)$ and $\delta_k \le 1$, we have
    \begin{equation*}
        \norm{\hat{\boldsymbol{x}}_1^{(k)}} \le \norm{\boldsymbol{x}_k} + B_u(1+\norm{\boldsymbol{x}_k}) = (1+B_u)\norm{\boldsymbol{x}_k} + B_u \le (1+B_u)(1+\norm{\boldsymbol{x}_k}).
    \end{equation*}

    The fourth moment bound satisfies
    \begin{equation*}
        \mathbb{E}[\norm{\hat{\boldsymbol{x}}_1^{(k)}}^4] \leq \mathbb{E}[(1+B_u)^4(1+\norm{\boldsymbol{x}_k})^4] \leq 8(1+B_u)^4(1+\mathbb{E}\norm{\boldsymbol{x}_k}^4).
    \end{equation*}

    Thus,
    \begin{equation*}
        \mathbb{E}[\norm{\hat{\boldsymbol{x}}_1^{(k)}}^4] \leq 8(1+B_u)^4 (1+M_4) =: M_{\hat{x}, 4}.
    \end{equation*}

    For part (c), 
    By $L_{\mathcal{L}}$-smoothness, $\mathcal{L}(\boldsymbol{x}^*) = 0$ and $\nabla \mathcal{L}(\boldsymbol{x}^*) = 0$, we have
    \begin{equation*}
        \begin{aligned}
            \mathcal{L}(\hat{\boldsymbol{x}}_1^{(k)}) &\leq \mathcal{L}(\boldsymbol{x}^*) + \inner{\nabla\mathcal{L}(\boldsymbol{x}^*)}{\hat{\boldsymbol{x}}_1^{(k)} - \boldsymbol{x}^* } + \dfrac{L_{\mathcal{L}}}{2} \| \hat{\boldsymbol{x}}_1^{(k)} - \boldsymbol{x}^* \|^2 = \dfrac{L_{\mathcal{L}}}{2} \| \hat{\boldsymbol{x}}_1^{(k)} - \boldsymbol{x}^* \|^2 \\
            &\leq \dfrac{L_{\mathcal{L}}}{2} \left( \|\hat{\boldsymbol{x}}_1^{(k)}\|^2 + M_{x^*}^2 - 2 \langle \hat{\boldsymbol{x}}_1^{(k)}, \boldsymbol{x}^* \rangle \right) \\
            &\leq \dfrac{L_{\mathcal{L}}}{2} \left( \|\hat{\boldsymbol{x}}_1^{(k)}\|^2 + M_{x^*}^2 + 2 | \|\hat{\boldsymbol{x}}_1^{(k)}\| \|\boldsymbol{x}^*\| | \right) \\
            &\leq \dfrac{L_{\mathcal{L}}}{2} \left( \|\hat{\boldsymbol{x}}_1^{(k)}\|^2 + M_{x^*}^2 + 2 M_{x^*} \|\hat{\boldsymbol{x}}_1^{(k)}\| \right) \\
        \end{aligned}
    \end{equation*}

    Taking expectations gives
    \begin{equation*}
        \begin{aligned}
            V_k = \mathbb{E} [\mathcal{L}(\hat{\boldsymbol{x}}_1^{(k)})] &\leq \dfrac{L_{\mathcal{L}}}{2} \left( \mathbb{E} [\|\hat{\boldsymbol{x}}_1^{(k)}\|^2] + M_{x^*}^2 + 2 M_{x^*}\mathbb{E} [\|\hat{\boldsymbol{x}}_1^{(k)}\|] \right) \\
            &\leq \dfrac{L_{\mathcal{L}}}{2} \left( M_{\hat{x},2} + M_{x^*}^2 + 2 M_{x^*} \sqrt{M_{\hat{x},2}} \right) =: M_{loss}.
        \end{aligned}
    \end{equation*}
\end{proof}

With uniform bounds established, we now analyze the algorithm's convergence. Before this, we give a definition of phase transition. Fix $\epsilon_s \in (0, \epsilon_0)$ where $\epsilon_0$ is from Assumption~\ref{ass:jacobian}. Define
\begin{equation*}
    K_{trans} := \min\{k \in \{0,\ldots,N\} : t_k \ge 1 - \epsilon_s\}.
\end{equation*}

We call $\{0, \ldots, K_{trans}-1\}$ Phase 1, which is the high noise regime, and $\{K_{trans}, \ldots, N\}$ Phase 2, which is the gradient descent regime. Phase 1 is covered by Corollary~\ref{cor:derived}, we have $V_{K_{trans}} \le M_{loss}$. The remainder of the analysis therefore focuses on Phase 2.

\begin{lemma}[One-Step Descent Structure]\label{lem:onestep}
    For $k \ge K_{trans}$, define:
    \begin{align*}
        \boldsymbol{d}_{k+1} &:= \mathbb{E}_k[\boldsymbol{x}_{k+1}] = t_{k+1}\hat{\boldsymbol{x}}_1^{(k)} - c_\zeta \delta_k J_{t_k}(\boldsymbol{x}_k)^\top\bar{\boldsymbol{g}}_k, \\
        M_k &:= J_{t_{k+1}}(\boldsymbol{x}_k)J_{t_k}(\boldsymbol{x}_k)^\top.
    \end{align*}
    Then:
    \begin{equation*}
        \mathbb{E}_k[\hat{\boldsymbol{x}}_1^{(k+1)}] - \hat{\boldsymbol{x}}_1^{(k)} = -c_\zeta \delta_k M_k\bar{\boldsymbol{g}}_k + \boldsymbol{E}_k,
    \end{equation*}
    where the error $\boldsymbol{E}_k$ is $\mathcal{F}_k$-measurable and satisfies
    \begin{equation}\label{eq:error_bound_random}
        \norm{\boldsymbol{E}_k} \le C_{E,1}\delta_{k}^2(1+\norm{\boldsymbol{x}_k})^2 + C_{E,2} c_{\zeta}^2 \delta_{k}^2(1+\norm{\boldsymbol{x}_k})^2 + C_{E,3}\delta_{k}(1+\norm{\boldsymbol{x}_k})
    \end{equation}
    for constants $C_{E,i}$ depending on $(d, H_\Phi, L_\Phi, B_u, L_t, G_0)$ but not on $k$ or $N$.
\end{lemma}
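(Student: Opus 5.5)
The plan is to write $\hat{\boldsymbol{x}}_1^{(k+1)} = \Phi_{t_{k+1}}(\boldsymbol{x}_{k+1})$ and decompose $\mathbb{E}_k[\Phi_{t_{k+1}}(\boldsymbol{x}_{k+1})] - \Phi_{t_k}(\boldsymbol{x}_k)$ as a telescoping sum of four pieces:
\begin{equation*}
\underbrace{\mathbb{E}_k[\Phi_{t_{k+1}}(\boldsymbol{x}_{k+1})] - \Phi_{t_{k+1}}(\boldsymbol{d}_{k+1})}_{(\mathrm{I})}
+ \underbrace{\Phi_{t_{k+1}}(\boldsymbol{d}_{k+1}) - \Phi_{t_{k+1}}(\boldsymbol{x}_k)}_{(\mathrm{II})}
+ \underbrace{\Phi_{t_{k+1}}(\boldsymbol{x}_k) - \Phi_{t_k}(\boldsymbol{x}_k)}_{(\mathrm{III})}.
\end{equation*}
Term (I) is the Jensen gap. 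By Lemma~\ref{lem:conditional}, conditionally on $\mathcal{F}_k$ the variable $\boldsymbol{x}_{k+1}$ has mean $\boldsymbol{d}_{k+1}$ and covariance $\delta_{k+1}^2 I$. Lemma~\ref{lem:jensen} then gives $\|(\mathrm{I})\| \le H_\Phi d\,\delta_{k+1}^2/2 \le H_\Phi d\,\delta_k^2/2$, which fits under $C_{E,1}\delta_k^2(1+\|\boldsymbol{x}_k\|)^2$. Term (III) is controlled by Lemma~\ref{lem:temporal}: $\|(\mathrm{III})\| \le C_t(1+\|\boldsymbol{x}_k\|)\Delta t_k$. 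Since Phase 2 has $t_k \ge 1-\epsilon_s \ge 1-\epsilon_0$, Assumption~\ref{ass:time_grid}\ref{ass:time_grid_refinement} yields $\Delta t_k \le c_\Delta \delta_k$, so this goes into $C_{E,3}\delta_k(1+\|\boldsymbol{x}_k\|)$.

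Term (II) carries the main part of the structure. Write $\boldsymbol{d}_{k+1} - \boldsymbol{x}_k = \boldsymbol{h}_k - c_\zeta\delta_k J_{t_k}^\top\bar{\boldsymbol{g}}_k$, where $\boldsymbol{h}_k := t_{k+1}\hat{\boldsymbol{x}}_1^{(k)} - \boldsymbol{x}_k = t_{k+1}\delta_k u_{t_k}(\boldsymbol{x}_k) - \delta_{k+1}\boldsymbol{x}_k - \Delta t_k\,\boldsymbol{x}_k$; more simply, $\boldsymbol{h}_k = \delta_k u - (1-t_{k+1})(\boldsymbol{x}_k + \delta_k u)$. Consequently $\|\boldsymbol{h}_k\| \le C\delta_k(1+\|\boldsymbol{x}_k\|)$ with $C$ depending on $B_u$. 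Next, Taylor-expand $\Phi_{t_{k+1}}$ at $\boldsymbol{x}_k$:
\begin{equation*}
(\mathrm{II}) = J_{t_{k+1}}(\boldsymbol{x}_k)(\boldsymbol{d}_{k+1}-\boldsymbol{x}_k) + R_k, \qquad \|R_k\| \le \tfrac{H_\Phi}{2}\|\boldsymbol{d}_{k+1}-\boldsymbol{x}_k\|^2,
\end{equation*}
using Lemma~\ref{lem:phi_properties}\ref{lem:phi_hessian_bound}. The linear part produces exactly $-c_\zeta\delta_k M_k\bar{\boldsymbol{g}}_k$ plus $J_{t_{k+1}}\boldsymbol{h}_k$, and the latter is bounded by $L_\Phi C\delta_k(1+\|\boldsymbol{x}_k\|)$, so it is absorbed into the $C_{E,3}$ term. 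For the remainder, $\|\boldsymbol{d}_{k+1}-\boldsymbol{x}_k\|^2 \le 2\|\boldsymbol{h}_k\|^2 + 2c_\zeta^2\delta_k^2 L_\Phi^2\|\bar{\boldsymbol{g}}_k\|^2$. Together with the bound $\|\bar{\boldsymbol{g}}_k\| \le G_0(1+B_u)(1+\|\boldsymbol{x}_k\|)$ from the proof of Lemma~\ref{lem:moment}, this gives a $C_{E,1}\delta_k^2(1+\|\boldsymbol{x}_k\|)^2$ piece and a $C_{E,2}c_\zeta^2\delta_k^2(1+\|\boldsymbol{x}_k\|)^2$ piece.

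Summing the three contributions and setting $\boldsymbol{E}_k := (\mathrm{I}) + J_{t_{k+1}}\boldsymbol{h}_k + R_k + (\mathrm{III})$, we obtain an $\mathcal{F}_k$-measurable quantity. It is measurable because every piece is a deterministic function of $\boldsymbol{x}_k$: the conditional expectation in (I) integrates out only $\xi_k$. The resulting constants depend only on $(d,H_\Phi,L_\Phi,B_u,L_t,G_0)$ and $c_\Delta<1/2$.

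The step I expect to need the most care is the bookkeeping of $\boldsymbol{h}_k$. It is only $O(\delta_k)$, not $O(\delta_k^2)$, because the reinjected noise level shrinks the mean by the factor $t_{k+1}$ rather than leaving it at $1$. For this reason it must be placed in the first-order $C_{E,3}$ term instead of being treated as higher order. The point to verify is that $\boldsymbol{h}_k$ has no component that should instead be matched against the $-c_\zeta\delta_k M_k\bar{\boldsymbol{g}}_k$ structure; it has none, since $\boldsymbol{h}_k$ does not involve the gradient.
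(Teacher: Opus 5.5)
Your proposal is correct and follows the paper's proof essentially step for step. It uses the same Jensen-gap term via Lemma~\ref{lem:jensen} and the same split into a spatial part and a temporal part. The spatial part is Taylor-expanded at $\boldsymbol{x}_k$: its linear piece produces $-c_\zeta\delta_k M_k\bar{\boldsymbol{g}}_k$, and its quadratic remainder gives the $C_{E,1}$ and $C_{E,2}$ terms. The temporal part is controlled by Lemma~\ref{lem:temporal} and Assumption~\ref{ass:time_grid}\ref{ass:time_grid_refinement}. Your $\boldsymbol{h}_k$ is exactly the paper's $\boldsymbol{r}_k = -\delta_{k+1}\boldsymbol{x}_k + t_{k+1}\delta_k u_{t_k}(\boldsymbol{x}_k)$, which the paper also absorbs into the first-order $C_{E,3}$ term.

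The only blemish is an algebra slip: your first displayed expression for $\boldsymbol{h}_k$ carries a spurious extra $-\Delta t_k\boldsymbol{x}_k$. Your ``more simply'' formula is the correct one, and the bound $\|\boldsymbol{h}_k\| \le (1+B_u)\delta_k(1+\|\boldsymbol{x}_k\|)$ is unaffected.
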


\begin{proof}
    Since $\boldsymbol{x}_{k+1} \mid \mathcal{F}_k \sim \mathcal{N}(\boldsymbol{d}_{k+1}, \delta_{k+1}^2 I)$, Lemma~\ref{lem:conditional} and Lemma~\ref{lem:jensen} give
    \begin{equation*}
        \mathbb{E}_k[\hat{\boldsymbol{x}}_1^{(k+1)}] = \mathbb{E}_k[\Phi_{t_{k+1}}(\boldsymbol{x}_{k+1})] = \Phi_{t_{k+1}}(\boldsymbol{d}_{k+1}) + \boldsymbol{B}_{k+1},
    \end{equation*}
    where $\norm{\boldsymbol{B}_{k+1}} \le \frac{H_\Phi d\delta_{k+1}^2}{2}$. Decompose the spatial and temporal changes
    \begin{align*}
        \Phi_{t_{k+1}}(\boldsymbol{d}_{k+1}) - \hat{\boldsymbol{x}}_1^{(k)}
        &= \Phi_{t_{k+1}}(\boldsymbol{d}_{k+1}) - \Phi_{t_{k+1}}(\boldsymbol{x}_k) + \Phi_{t_{k+1}}(\boldsymbol{x}_k) - \Phi_{t_k}(\boldsymbol{x}_k) \\
        &=: \boldsymbol{S}_k + \boldsymbol{T}_k.
    \end{align*}

    Taylor expansion around $\boldsymbol{x}_k$ with respect to $\boldsymbol{d}_{k+1}$ gives
    \begin{equation*}
        \boldsymbol{S}_k = \Phi_{t_{k+1}}(\boldsymbol{d}_{k+1}) - \Phi_{t_{k+1}}(\boldsymbol{x}_k) = J_{t_{k+1}}(\boldsymbol{x}_k)(\boldsymbol{d}_{k+1} - \boldsymbol{x}_k) + \boldsymbol{E}_{spatial},
    \end{equation*}
    where the quadratic error satisfies
    \begin{equation*}
        \norm{\boldsymbol{E}_{spatial}} \le \frac{H_\Phi}{2}\norm{\boldsymbol{d}_{k+1}-\boldsymbol{x}_k}^2.
    \end{equation*}

    Decompose $\boldsymbol{d}_{k+1} - \boldsymbol{x}_k$:
    \begin{align*}
        \boldsymbol{d}_{k+1} - \boldsymbol{x}_k
        &= t_{k+1}\hat{\boldsymbol{x}}_1^{(k)} - c_\zeta \delta_k J_{t_k}^\top\bar{\boldsymbol{g}}_k - \boldsymbol{x}_k \nonumber\\
        &= -\delta_{k+1}\boldsymbol{x}_k + t_{k+1}\delta_k u_{t_k}(\boldsymbol{x}_k) - c_\zeta \delta_k J_{t_k}^\top\bar{\boldsymbol{g}}_k \nonumber\\
        &=: \boldsymbol{r}_k - c_{\zeta} \delta_k J_{t_k}^\top\bar{\boldsymbol{g}}_k,
    \end{align*}
    where $\boldsymbol{r}_k := -\delta_{k+1}\boldsymbol{x}_k + t_{k+1}\delta_k u_{t_k}(\boldsymbol{x}_k)$. The linear part gives:
    \begin{equation*}
        J_{t_{k+1}}(\boldsymbol{x}_k)(\boldsymbol{d}_{k+1} - \boldsymbol{x}_k) = J_{t_{k+1}}(\boldsymbol{x}_k)\boldsymbol{r}_k - c_{\zeta} \delta_k M_k\bar{\boldsymbol{g}}_k.
    \end{equation*}

    Using Assumption~\ref{ass:velocity}(a):
    \begin{align*}
        \norm{\boldsymbol{r}_k}
        &\le \delta_{k+1}\norm{\boldsymbol{x}_k} + \delta_k B_u(1+\norm{\boldsymbol{x}_k}) \nonumber\\
        &\le \delta_k[(1+B_u)(1+\norm{\boldsymbol{x}_k})] \quad \text{(since } \delta_{k+1} \le \delta_k\text{)} \nonumber\\
        &=: C_r \delta_k(1+\norm{\boldsymbol{x}_k}), 
    \end{align*}
    where $C_r := 1 + B_u$. Thus,
    \begin{equation*}
        \norm{J_{t_{k+1}}(\boldsymbol{x}_k)\boldsymbol{r}_k} \le L_\Phi C_r \delta_k(1+\norm{\boldsymbol{x}_k}).
    \end{equation*}

    Then we have
    \begin{equation*}
        \norm{\boldsymbol{d}_{k+1} - \boldsymbol{x}_k}^2 \le 2C_r^2\delta_k^2(1+\norm{\boldsymbol{x}_k})^2 + 2 c_{\zeta}^2 \delta_k^2 L_\Phi^2\norm{\bar{\boldsymbol{g}}_k}^2.
    \end{equation*}

    Using $\norm{\bar{\boldsymbol{g}}_k} \le G_0(1+\norm{\hat{\boldsymbol{x}}_1^{(k)}}) \le G_0(1+B_u)(1+\norm{\boldsymbol{x}_k})$:
    \begin{align*}
        \norm{\boldsymbol{d}_{k+1} - \boldsymbol{x}_k}^2
        &\le 2C_r^2\delta_k^2(1+\norm{\boldsymbol{x}_k})^2 + 2 c_{\zeta}^2 \delta_k^2 L_\Phi^2 G_0^2(1+B_u)^2(1+\norm{\boldsymbol{x}_k})^2 \\
        &\le C_d^{\prime}\delta_k^2(1+\norm{\boldsymbol{x}_k})^2 + C_d^{\prime}c_{\zeta}^2\delta_k^2(1+\norm{\boldsymbol{x}_k})^2,
    \end{align*}
    where $C'_d := 2\max\{C_r^2, L_\Phi^2 G_0^2(1+B_u)^2\}$. Therefore:
    \begin{equation*}
        \norm{\boldsymbol{E}_{spatial}} \le \frac{H_\Phi C'_d}{2} \delta_k^2 (1+\norm{\boldsymbol{x}_k})^2 + \frac{H_\Phi C'_d}{2} c_{\zeta}^2 \delta_k^2 (1+\norm{\boldsymbol{x}_k})^2.
    \end{equation*}

    For the temporal part $\boldsymbol{T}_k$, by Lemma~\ref{lem:temporal} and Assumption~\ref{ass:time_grid}(b) gives
    \begin{equation*}
        \norm{\boldsymbol{T}_k} = \norm{\Phi_{t_{k+1}}(\boldsymbol{x}_k) - \Phi_{t_k}(\boldsymbol{x}_k)} \le C_t(1+\norm{\boldsymbol{x}_k})\Delta t_k \le C_t(1+\norm{\boldsymbol{x}_k}) c_{\Delta} \delta_k.
    \end{equation*}

    Combine all terms, we have:
    \begin{align*}
        \mathbb{E}_k[\hat{\boldsymbol{x}}_1^{(k+1)}] - \hat{\boldsymbol{x}}_1^{(k)}
        &= J_{t_{k+1}}\boldsymbol{r}_k - c_{\zeta} \delta_k M_k\bar{\boldsymbol{g}}_k + \boldsymbol{E}_{spatial} + \boldsymbol{T}_k + \boldsymbol{B}_{k+1} \nonumber\\
        &= -c_{\zeta} \delta_k M_k\bar{\boldsymbol{g}}_k + \boldsymbol{E}_k,
    \end{align*}
    where $\boldsymbol{E}_k := J_{t_{k+1}}\boldsymbol{r}_k + \boldsymbol{E}_{spatial} + \boldsymbol{T}_k + \boldsymbol{B}_{k+1}$ satisfies
    \begin{equation*}
        \begin{aligned}
            \norm{\boldsymbol{E}_k}
            &\leq L_\Phi C_r\delta_k(1+\norm{\boldsymbol{x}_k}) + \frac{H_\Phi C'_d}{2}\delta_k^2(1+\norm{\boldsymbol{x}_k})^2 + \frac{H_\Phi C'_d}{2} c_{\zeta}^2 \delta_k^2 (1+\norm{\boldsymbol{x}_k})^2 \\
            &\quad + C_t(1+\norm{\boldsymbol{x}_k})c_{\Delta} \delta_{k} + \frac{H_\Phi d\delta_{k+1}^2}{2} + \frac{H_\Phi C'_d}{2} c_{\zeta}^2 \delta_k^2 (1+\norm{\boldsymbol{x}_k})^2 \\
            &\leq \frac{H_\Phi C'_d}{2} \delta_k^2 (1+\norm{\boldsymbol{x}_k})^2 + (L_\Phi C_r + c_\Delta C_t) \delta_{k}(1+\norm{\boldsymbol{x}_k}) \\
            &\quad + \frac{H_\Phi d}{2} \delta_{k}^2 (1+\norm{\boldsymbol{x}_k})^2 + \frac{H_\Phi C'_d}{2} c_{\zeta}^2 \delta_k^2 (1+\norm{\boldsymbol{x}_k})^2 \\
            &\leq \left( \frac{H_\Phi C'_d}{2} + \frac{H_\Phi d}{2} \right)\delta_k^2 (1+\norm{\boldsymbol{x}_k})^2 + \frac{H_\Phi C'_d}{2} c_{\zeta}^2 \delta_k^2 (1+\norm{\boldsymbol{x}_k})^2 \\
            &\quad + (L_\Phi C_r + c_\Delta C_t) \delta_{k}(1+\norm{\boldsymbol{x}_k}) \\
            &:= C_{E,1} \delta_k^2 (1+\norm{\boldsymbol{x}_k})^2 + C_{E,2} c_{\zeta}^2 \delta_k^2 (1+\norm{\boldsymbol{x}_k})^2 + C_{E,3} \delta_{k}(1+\norm{\boldsymbol{x}_k}).
        \end{aligned}
    \end{equation*}

    This completes the proof.
\end{proof}

We now state the main convergence result for the stochastic sampler.
\begin{theorem}[Stochastic Convergence Analysis with Adaptive Guidance]\label{thm:stoc}
    Under \Cref{ass:velocity,ass:dissipative,ass:loss,ass:jacobian,ass:time_grid}, let $\zeta_k = c_\zeta\delta_k$ with $c_\zeta$ satisfying the stability condition~\eqref{eq:czeta_stability}. Define $\bar\zeta := c_\zeta\delta_{\min}$, which is the terminal guidance strength. Define the second-order absorption threshold:
    \begin{equation*}
        c_{\zeta,1} := \frac{1}{\delta_{\max}}\min\left\{\frac{\lambda_{\min}}{16L_{\mathcal{L}} L_\Phi^4}, \frac{16}{15 \mu c_0 \lambda_{\min}}, \delta_{\max}\right\}, 
    \end{equation*}
    where $\delta_{\max} := \max_k \delta_k = \delta_0 = 1 - t_0$ is the initial noise level. Assume $c_\zeta \le c_{\zeta,1}$ and~\eqref{eq:czeta_stability} throughout. Set $\epsilon_s = c_0\bar\zeta$ with $c_0 > 2/c_\zeta$ (ensuring Phase~2 is non-empty; this is a constraint on $c_0$ alone) and $\bar\zeta \le \min\{1, \epsilon_0/c_0\}$. If the number of steps of Phase 2 $N_2 \ge \frac{1}{\bar\rho}\log \left( \frac{2M_{loss}}{\bar{\zeta}} \right)$, then
    \begin{equation*}
        V_N \le C_{final}\bar\zeta + C_{floor},
    \end{equation*}
    where
    \begin{equation*}
        C_{final} := \frac{1}{2} + \frac{16\tilde{C}_2 c_0^2}{15\mu\lambda_{\min}} + \frac{128 \hat{C}_a c_0^3}{225 \mu c_{\zeta} \lambda_{\min}^2}
    \end{equation*}
    depends only on problem parameters, and
    \begin{equation*}
        C_{floor} := \frac{512C_{E,3}^2(1+M_2)c_0}{225\mu c_\zeta\lambda_{\min}^2}
    \end{equation*}
    is an irreducible constant reflecting the tension between $O(\delta_k)$ first-order discretization error and $O(\delta_k)$ contraction. For target accuracy $\varepsilon > C_{floor}$, the complexity is $N = O(\varepsilon^{-2}\log(1/\varepsilon))$.

\end{theorem}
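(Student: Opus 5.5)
Split the run at $K_{trans}$. For Phase~1 invoke Corollary~\ref{cor:derived}(c), so that $V_{K_{trans}} \le M_{loss}$. All the work then goes into a one-step recursion for $V_k = \mathbb{E}[\mathcal{L}(\hat{\boldsymbol{x}}_1^{(k)})]$ on Phase~2, where $\delta_k \le \epsilon_s = c_0\bar\zeta \le \epsilon_0$. There Assumption~\ref{ass:jacobian} applies to $M_k = J_{t_{k+1}}J_{t_k}^\top$.

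\textbf{Descent step.} Apply $L_{\mathcal{L}}$-smoothness at $\hat{\boldsymbol{x}}_1^{(k)}$:
\begin{equation*}
\mathbb{E}_k[\mathcal{L}(\hat{\boldsymbol{x}}_1^{(k+1)})] \le \mathcal{L}(\hat{\boldsymbol{x}}_1^{(k)}) + \inner{\bar{\boldsymbol{g}}_k}{\mathbb{E}_k[\hat{\boldsymbol{x}}_1^{(k+1)}] - \hat{\boldsymbol{x}}_1^{(k)}} + \frac{L_{\mathcal{L}}}{2}\mathbb{E}_k\norm{\hat{\boldsymbol{x}}_1^{(k+1)} - \hat{\boldsymbol{x}}_1^{(k)}}^2 .
\end{equation*}
Substitute Lemma~\ref{lem:onestep} into the linear term. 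Because $\bar{\boldsymbol{g}}_k^\top M_k \bar{\boldsymbol{g}}_k = \bar{\boldsymbol{g}}_k^\top M_k^{sym}\bar{\boldsymbol{g}}_k \ge \lambda_{\min}\norm{\bar{\boldsymbol{g}}_k}^2$, the linear term contributes $-c_\zeta\delta_k\lambda_{\min}\norm{\bar{\boldsymbol{g}}_k}^2 + \inner{\bar{\boldsymbol{g}}_k}{\boldsymbol{E}_k}$.

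\textbf{Error terms.}
\begin{itemize}
\item Split the cross term by Young: $\inner{\bar{\boldsymbol{g}}_k}{\boldsymbol{E}_k} \le \tfrac{c_\zeta\delta_k\lambda_{\min}}{4}\norm{\bar{\boldsymbol{g}}_k}^2 + \tfrac{1}{c_\zeta\delta_k\lambda_{\min}}\norm{\boldsymbol{E}_k}^2$.
\item Use \eqref{eq:error_bound_random} on $\norm{\boldsymbol{E}_k}^2$. Its $C_{E,3}\delta_k(1+\norm{\boldsymbol{x}_k})$ part gives $\tfrac{C_{E,3}^2\delta_k}{c_\zeta\lambda_{\min}}(1+\norm{\boldsymbol{x}_k})^2$. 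This is $O(\delta_k)$ and has expectation at most $\tfrac{2C_{E,3}^2(1+M_2)}{c_\zeta\lambda_{\min}}\delta_k$ by Lemma~\ref{lem:moment}; it is the origin of $C_{floor}$.
\item The $\delta_k^2$ parts give $O(\delta_k^3)$ terms whose expectations are controlled by $M_4$ (Lemma~\ref{lem:fourth_moment}); these feed $\hat C_a$.
\item For the quadratic term, write $\hat{\boldsymbol{x}}_1^{(k+1)} - \hat{\boldsymbol{x}}_1^{(k)}$ using $\Phi_{t_{k+1}}(\boldsymbol{x}_{k+1}) - \Phi_{t_{k+1}}(\boldsymbol{d}_{k+1})$ (noise, second moment $\le L_\Phi^2 d\,\delta_{k+1}^2$) plus the mean displacement. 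This gives $L_{\mathcal{L}}L_\Phi^4 c_\zeta^2\delta_k^2\norm{\bar{\boldsymbol{g}}_k}^2 + \tilde C_2\delta_k^2$ after using the moment bounds.
\end{itemize}

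\textbf{Absorption and recursion.} The condition $c_\zeta\le c_{\zeta,1}$, in particular $c_\zeta\delta_{\max}\le\lambda_{\min}/(16L_{\mathcal{L}}L_\Phi^4)$, absorbs the $c_\zeta^2\delta_k^2$ gradient term. This leaves a net coefficient of at least $\tfrac{15}{32}$-ish times $c_\zeta\delta_k\lambda_{\min}$. Applying PL, $\norm{\bar{\boldsymbol{g}}_k}^2\ge 2\mu\mathcal{L}$, and taking full expectations gives
\begin{equation*}
V_{k+1}\le(1-\rho_k)V_k + \tilde C_2\delta_k^2 + \hat C_a\delta_k^3/c_\zeta + \tfrac{2C_{E,3}^2(1+M_2)}{c_\zeta\lambda_{\min}}\delta_k ,
\end{equation*}
with $\rho_k \asymp \mu c_\zeta\lambda_{\min}\delta_k$. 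The second absorption constraint in $c_{\zeta,1}$ keeps $\rho_k<1$.

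\textbf{Unrolling.} Bound $\delta_k\le c_0\bar\zeta$ in the forcing terms and $\rho_k\ge\bar\rho:=\rho$ evaluated at $\delta_{\min}$. Unrolling over Phase~2 gives
\begin{equation*}
V_N\le e^{-\bar\rho N_2}M_{loss} + \frac{\text{forcing}}{\bar\rho}.
\end{equation*}
The hypothesis on $N_2$ makes the first term at most $\bar\zeta/2$, which explains the $\tfrac12$ in $C_{final}$. Dividing the forcing terms by $\bar\rho\asymp\mu c_\zeta\lambda_{\min}\bar\zeta$ turns $\delta^2$ into $c_0^2\bar\zeta$ and $\delta^3$ into $c_0^3\bar\zeta^2\le c_0^3\bar\zeta$. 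The $\delta$ term becomes a $\bar\zeta$-free constant, matching $C_{floor}$ up to numerical factors. The ratio $\max\delta_k/\min\delta_k\le c_0\bar\zeta/\delta_{\min}=c_0c_\zeta$ is what produces the $c_0$ powers.

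\textbf{Complexity.} Take $\bar\zeta\asymp\varepsilon$. Phase~2 needs $N_2\asymp \bar\rho^{-1}\log(1/\varepsilon)$ steps. With $\Delta t_k\le c_\Delta\delta_k$ and a uniform grid of width $\lesssim\delta_{\min}\asymp\varepsilon$, covering $[0,1-\delta_{\min}]$ costs $O(\varepsilon^{-1})$ steps, and $\bar\rho^{-1}=O(\varepsilon^{-1})$. Combined, the total is $O(\varepsilon^{-2}\log(1/\varepsilon))$.

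\textbf{Main obstacle.} I expect the hardest part to be the bookkeeping of the first-order error $C_{E,3}\delta_k$. It is of the same order as the contraction, so it cannot be absorbed, and the honest outcome is the floor $C_{floor}$. Care is also needed in passing from $\mathbb{E}[\norm{\boldsymbol{E}_k}^2]$ to moments $M_2,M_4$ uniformly in $\delta_{\min}$, and in handling $\rho_k$ varying with $\delta_k$. I would first try the crude uniform bounds $\delta_{\min}\le\delta_k\le c_0\bar\zeta$ described above, and refine only if the constants fail to match.
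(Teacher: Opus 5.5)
Your proposal follows the paper's proof essentially step for step: Phase~1 via Corollary~\ref{cor:derived}(c), then smoothness at $\hat{\boldsymbol{x}}_1^{(k)}$ with Lemma~\ref{lem:onestep} and Assumption~\ref{ass:jacobian}, Young on $\inner{\bar{\boldsymbol{g}}_k}{\boldsymbol{E}_k}$, a noise-plus-mean split of the quadratic term, absorption via $c_\zeta\le c_{\zeta,1}$, PL, and unrolling with $\delta_k\le c_0\bar\zeta$ and $\rho_k\ge\bar\rho$. The exact stated constants come from the Young weight $\tfrac{15}{32}c_\zeta\delta_k\lambda_{\min}$, which gives $\rho_k=\tfrac{15}{16}\mu c_\zeta\lambda_{\min}\delta_k$ and $\bar\rho=\tfrac{15}{16}\mu\lambda_{\min}\bar\zeta$; your $\bar\rho\asymp\mu c_\zeta\lambda_{\min}\bar\zeta$ has a spurious $c_\zeta$ and would give a $c_\zeta^{-2}$ floor.

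One slip is in the complexity count. The $\varepsilon^{-2}$ arises because $N_2\asymp\varepsilon^{-1}\log(1/\varepsilon)$ steps must fit into the Phase~2 window $\delta\in[\delta_{\min},c_0c_\zeta\delta_{\min}]$, whose width is $O(\varepsilon)$. On a uniform grid this forces a step of size $O(\varepsilon^2/\log(1/\varepsilon))$, hence $N=O(\varepsilon^{-2}\log(1/\varepsilon))$. A grid of width $\asymp\delta_{\min}$, as you describe, would put only $O(1)$ steps in Phase~2, so your two counts cannot simply be multiplied.
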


\begin{proof}
    Phase 1 is bounded by Corollary~\ref{cor:derived}(c):
    \begin{equation*}
        V_{K_{trans}} \le M_{loss}.
    \end{equation*}

    We then analyze the Phase 2 refined bound. By Lemma~\ref{lem:moment}, $\mathbb{E}[\norm{\boldsymbol{x}_k}^2] \le M_2$ for all $k$, under the stability condition~\eqref{eq:czeta_stability} on $c_\zeta$. The fourth moment bound gives $M_4$. Furthermore, Phase 2 requires $\epsilon_s > \delta_{\min}$, thus, $c_0 c_\zeta \delta_{\min} > \delta_{\min}$, i.e., $c_0 c_\zeta > 1$. The Phase~2 step count requires at least half the phase interval, giving $c_0 c_\zeta > 2$. For $k \ge K_{trans}$, we have $\delta_{k+1} \le \epsilon_s = c_0\bar\zeta = c_0 c_\zeta \delta_{\min}$. By $L_{\mathcal{L}}$-smoothness (\Cref{ass:loss}(a)):
    \begin{equation*}
        \begin{aligned}
            \mathbb{E}_k[\mathcal{L}(\hat{\boldsymbol{x}}_1^{(k+1)})] &\le \mathcal{L}(\hat{\boldsymbol{x}}_1^{(k)}) + \inner{\bar{\boldsymbol{g}}_k}{\mathbb{E}_k[\hat{\boldsymbol{x}}_1^{(k+1)} - \hat{\boldsymbol{x}}_1^{(k)}]} + \frac{L_{\mathcal{L}}}{2}\mathbb{E}_k[\norm{\hat{\boldsymbol{x}}_1^{(k+1)} - \hat{\boldsymbol{x}}_1^{(k)}}^2] \\
            &:= \mathcal{L}(\hat{\boldsymbol{x}}_1^{(k)}) + \inner{\bar{\boldsymbol{g}}_k}{\mathbb{E}_k[\Delta\hat{\boldsymbol{x}}_1]} + \frac{L_{\mathcal{L}}}{2}\mathbb{E}_k[\norm{\Delta\hat{\boldsymbol{x}}_1}^2].
        \end{aligned}
    \end{equation*}

    Note that $\mathbb{E}_k[\Delta\hat{\boldsymbol{x}}_1] = \mathbb{E}_k[\hat{\boldsymbol{x}}_1^{(k+1)}] - \hat{\boldsymbol{x}}_1^{(k)}$, then by Lemma~\ref{lem:onestep}, we have
    \begin{equation*}
        \inner{\bar{\boldsymbol{g}}_k}{\mathbb{E}_k[\Delta\hat{\boldsymbol{x}}_1]} = -c_\zeta \delta_k\bar{\boldsymbol{g}}_k^\top M_k\bar{\boldsymbol{g}}_k + \inner{\bar{\boldsymbol{g}}_k}{\boldsymbol{E}_k}.
    \end{equation*}

    By Assumption~\ref{ass:jacobian}, since $k \ge K_{trans}$, $t_k, t_{k+1} \in [1-\epsilon, 1] \subseteq [1-\epsilon_0, 1]$, we have
    %
    \begin{equation*}
        -c_\zeta \delta_k\bar{\boldsymbol{g}}_k^\top M_k\bar{\boldsymbol{g}}_k \le -c_\zeta \delta_k\lambda_{\min}\norm{\bar{\boldsymbol{g}}_k}^2.
    \end{equation*}

    Taking full expectations yields
    \begin{align}
        V_{k+1} &\le V_k - c_{\zeta} \delta_k \lambda_{\min}\mathbb{E}[\norm{\bar{\boldsymbol{g}}_k}^2] + \mathbb{E}[|\inner{\bar{\boldsymbol{g}}_k}{\boldsymbol{E}_k}|] + \frac{L_{\mathcal{L}}}{2}\mathbb{E}[\norm{\Delta\hat{\boldsymbol{x}}_1}^2]. \label{eq:Vk_recursion}
    \end{align}

    Applying Cauchy--Schwarz and Young's inequality with parameter $\alpha = \frac{15 c_\zeta\delta_k\lambda_{\min}}{32}$, the error--gradient interaction satisfies
    \begin{align*}
        \mathbb{E}[|\inner{\bar{\boldsymbol{g}}_k}{\boldsymbol{E}_k}|]
        &\le \frac{15 c_\zeta\delta_k\lambda_{\min}}{32}\mathbb{E}[\norm{\bar{\boldsymbol{g}}_k}^2] + \frac{8\hat{C}_a\delta_k^3}{15\lambda_{\min} c_\zeta} + \frac{8\hat{C}_b\delta_{k}}{15 c_\zeta \lambda_{\min}}.
    \end{align*}

    By the variance decomposition:
    \begin{equation*}
        \mathbb{E}_k[\norm{\Delta\hat{\boldsymbol{x}}_1}^2] = \mathbb{E}_k[\norm{\Delta\hat{\boldsymbol{x}}_1 - \mathbb{E}_k[\Delta\hat{\boldsymbol{x}}_1]}^2] + \norm{\mathbb{E}_k[\Delta\hat{\boldsymbol{x}}_1]}^2.
    \end{equation*}

    For the variance term:
    \begin{equation*}
        \begin{aligned}
            \mathbb{E}_k[\norm{\Delta\hat{\boldsymbol{x}}_1 - \mathbb{E}_k[\Delta\hat{\boldsymbol{x}}_1]}^2]
            &= \mathbb{E}_k[\norm{\hat{\boldsymbol{x}}_1^{(k+1)} - \mathbb{E}_k[\hat{\boldsymbol{x}}_1^{(k+1)}]}^2] \\
            &\le L_\Phi^2 \mathbb{E}_k[\norm{\boldsymbol{x}_{k+1} - \boldsymbol{d}_{k+1}}^2] = L_\Phi^2 d\delta_{k+1}^2.
        \end{aligned}
    \end{equation*}

    For the squared mean term:
    \begin{align*}
        \norm{\mathbb{E}_k[\Delta\hat{\boldsymbol{x}}_1]}^2
        &= \norm{-c_{\zeta} M_k\bar{\boldsymbol{g}}_k + \boldsymbol{E}_k}^2 \le 2 c_{\zeta}^2 \delta_k^2 L_\Phi^4\norm{\bar{\boldsymbol{g}}_k}^2 + 2\norm{\boldsymbol{E}_k}^2.
    \end{align*}

    From~\eqref{eq:error_bound_random}:
    \begin{equation*}
        \begin{aligned}
            \norm{\boldsymbol{E}_k}
            &\le (C_{E,1} + C_{E,2} c_{\zeta}^2) \delta_k^2(1+\norm{\boldsymbol{x}_k})^2 + C_{E,3} \delta_{k}(1+\norm{\boldsymbol{x}_k}) \\
            &:= \tilde{C}_a \delta_k^2(1+\norm{\boldsymbol{x}_k})^2 + \tilde{C}_b \delta_{k}(1+\norm{\boldsymbol{x}_k}).
        \end{aligned}
    \end{equation*}

    Taking expectations using Corollary~\ref{cor:derived}(d):
    \begin{equation*}
        \mathbb{E}[\norm{\boldsymbol{E}_k}^2] \le 2 \tilde{C}_{a}^2 \delta_k^4 \cdot 8(1+M_4) + 2\tilde{C}_{b}^2 \delta_{k}^2 \cdot 2(1+M_2) =: \hat{C}_a \delta_k^4 + \hat{C}_b \delta_{k}^2,
    \end{equation*}
    where $\hat{C}_a = 16 \tilde{C}_a^2 (1+M_4)$, $\hat{C}_b = 4\tilde{C}_{b}^2(1+M_2)$. Thus:
    \begin{equation*}
        \frac{L_{\mathcal{L}}}{2}\mathbb{E}[\norm{\Delta\hat{\boldsymbol{x}}_1}^2] \le \frac{L_{\mathcal{L}} L_\Phi^2 d}{2}\delta_{k+1}^2 + L_{\mathcal{L}}c_{\zeta}^2 \delta_k^2 L_\Phi^4\mathbb{E}[\norm{\bar{\boldsymbol{g}}_k}^2] + L_{\mathcal{L}} \left( \hat{C}_a \delta_k^4 + \hat{C}_b \delta_k^2 \right)
    \end{equation*}

    By the condition $c_{\zeta} \leq \frac{\lambda_{\min}}{16L_{\mathcal{L}} L_\Phi^4}$, we have $L_{\mathcal{L}} c_{\zeta}^2 L_\Phi^4 \le L_{\mathcal{L}} \cdot c_{\zeta} \cdot \frac{\lambda_{\min}}{16L_{\mathcal{L}} L_\Phi^4} \cdot L_\Phi^4 =   \frac{c_{\zeta} \lambda_{\min}}{16}$. Substituting into Equation~\eqref{eq:Vk_recursion}:
    \begin{equation*}
        \begin{aligned}
            V_{k+1} 
            &\le V_k - c_{\zeta} \delta_k \lambda_{\min}\mathbb{E}[\norm{\bar{\boldsymbol{g}}_k}^2] + \frac{15 c_\zeta\delta_k\lambda_{\min}}{32}\mathbb{E}[\norm{\bar{\boldsymbol{g}}_k}^2] + \frac{8\hat{C}_a\delta_k^3}{15\lambda_{\min} c_\zeta} + \frac{8\hat{C}_b\delta_{k}}{15 c_\zeta \lambda_{\min}} \\
            &\quad + \frac{L_{\mathcal{L}} L_\Phi^2 d}{2}\delta_{k+1}^2 + \frac{c_{\zeta}\lambda_{\min}}{16}\mathbb{E}[\norm{\bar{\boldsymbol{g}}_k}^2] + L_{\mathcal{L}} \left( \hat{C}_a \delta_k^4 + \hat{C}_b \delta_k^2 \right) \\
            &\leq V_k - \dfrac{15 c_{\zeta} \delta_k \lambda_{\min}}{32} \mathbb{E}[\|\bar{\boldsymbol{g}}_k\|^2] + \frac{8\hat{C}_a\delta_k^3}{15\lambda_{\min} c_\zeta} + \frac{8\hat{C}_b\delta_{k}}{15 c_\zeta \lambda_{\min}} + \tilde{C}_2 \delta_k^2,
        \end{aligned}
    \end{equation*}
    where
    \begin{equation*}
        \tilde{C}_2 := \dfrac{L_{\mathcal{L}} L_{\Phi}^2 d}{2} + L_{\mathcal{L}} \left( 16 (C_{E,1} + C_{E,2} c_{\zeta, 1})^2 (1+M_4) + \hat{C}_b^2 \right).
    \end{equation*}

    By the PL condition:
    \begin{equation}\label{eq:V_recursion_parta}
        V_{k+1} \le (1-\rho_k)V_k + \frac{8\hat{C}_b\delta_k}{15 c_\zeta\lambda_{\min}} + \tilde{C}_2\delta_k^2 + \frac{8\hat{C}_a\delta_k^3}{15 c_\zeta\lambda_{\min}},
    \end{equation}
    where $\rho_k = \frac{15\mu c_\zeta\delta_k\lambda_{\min}}{16}$. Note that $\rho_k < 1$ is guaranteed for all Phase~2 steps: since $\delta_k \le \epsilon_s = c_0\bar\zeta \le c_0$ and $\bar\zeta \le 1$, we have $\rho_k \le \frac{15\mu c_\zeta c_0\lambda_{\min}}{16} < 1$ whenever $c_\zeta c_0 < \frac{16}{15\mu\lambda_{\min}}$. This condition is compatible with the stability condition~\eqref{eq:czeta_stability}: both impose upper bounds on $c_\zeta$, and we require the parameter regime to satisfy both simultaneously.

    In Phase 2, $\delta_k \le \epsilon_s = c_0\bar\zeta$. Summing the geometric series:
    \begin{align*}
        V_{K_{trans}+N_2}
        &\le (1-\bar\rho)^{N_2} M_{loss} + \frac{128\hat{C}_b c_0}{225\mu c_\zeta\lambda_{\min}^2} + \frac{16\tilde{C}_2 c_0^2\bar\zeta}{15\mu\lambda_{\min}} + \frac{128\hat{C}_a c_0^3 \bar{\zeta}^2}{225\mu c_\zeta\lambda_{\min}^2},
    \end{align*}
    where $\bar\rho = \frac{15\mu\bar\zeta\lambda_{\min}}{16}$. Setting $N_2 \ge \frac{1}{\bar\rho}\log \left( \frac{2M_{loss}}{\bar{\zeta}} \right)$:
    \begin{align*}
        V_N &\le \frac{\bar\zeta}{2} + \frac{16\tilde{C}_2 c_0^2\bar\zeta}{15\mu\lambda_{\min}} + \frac{128\hat{C}_a c_0^3 \bar{\zeta}^2}{225\mu c_\zeta\lambda_{\min}^2} + \frac{128\hat{C}_b c_0}{225\mu c_\zeta\lambda_{\min}^2} \le C_{final} \bar\zeta + C_{floor},
    \end{align*}
    where $C_{final} := \frac{1}{2} + \frac{16 \tilde{C}_2 c_0^2}{15 \mu \lambda_{\min}} + \frac{128 \hat{C}_a c_0^3}{225 \mu c_{\zeta} \lambda_{min}^2}$ and $C_{floor} := \frac{128\hat{C}_b c_0}{225\mu c_\zeta\lambda_{\min}^2}$.

    For any target accuracy $\varepsilon > C_{floor}$, setting $\bar\zeta = (\varepsilon - C_{floor})/C_{final}'$ yields
    \begin{equation*}
        N = O \left(\frac{\log(1/\varepsilon)}{\varepsilon^2}\right).
    \end{equation*}
\end{proof}

\begin{remark}[Trade-off in Guidance Strength $c_\zeta$]\label{rem:cfloor_czeta_tradeoff}
    The floor constant $C_{floor}$ depends inversely on $c_{\zeta}$: smaller guidance strength increases the floor. This reflects a fundamental trade-off:
    \begin{itemize}[topsep=2pt,itemsep=1pt]
        \item Smaller $c_\zeta$: Weaker guidance reduces the systematic bias from guidance-induced drift, but increases the relative impact of first-order discretization error $O(\delta_k)$, which cannot be absorbed by the weaker contraction.
        \item Larger $c_\zeta$: Stronger guidance provides faster contraction (larger $\rho_k$), reducing the floor, but must satisfy the stability condition~\eqref{eq:czeta_stability} to prevent moment blowup.
    \end{itemize}
\end{remark}

The optimal choice of $c_\zeta$ balances these effects. In practice, $c_\zeta$ should be chosen as large as possible subject to the stability condition~\eqref{eq:czeta_stability}, which depends on problem parameters $(\kappa', L_\Phi, G_0, B_u, \bar\delta_{ratio})$.

Importantly, the total error $V_N \le C_{final}\bar\zeta + C_{floor}$ depends on $c_\zeta$ through both terms:
\begin{equation*}
    V_N \le C_{final} c_\zeta\delta_{\min} + \frac{C'}{c_\zeta},
\end{equation*}
where $C' := (512C_{E,3}^2(1+M_2)c_0)/(225\mu\lambda_{\min}^2)$ is independent of $c_\zeta$. Since $C_{floor} = C'/c_\zeta$, the floor diverges as $c_\zeta \to 0$: weaker guidance cannot compensate for the discretization error. Conversely, the guidance term $C_{final} c_\zeta\delta_{\min}$ grows with $c_\zeta$. The minimum of $V_N$ over $c_\zeta$ (ignoring the stability constraint) is achieved at $c_\zeta^* = \sqrt{C'/(C_{final} \delta_{\min})}$, yielding $V_N^* = 2\sqrt{C_{final} C'\delta_{\min}}$. In practice, $c_\zeta$ is bounded above by the stability condition~\eqref{eq:czeta_stability}, so one should choose $c_\zeta$ as large as permitted.

Furthermore, under a prediction consistency condition (Remark~\ref{rem:prediction_consistency_reduction}), the floor reduces to $O(\epsilon_{pc}^2/c_\zeta)$, retaining the same $1/c_\zeta$ dependence but with the smaller coefficient $\epsilon_{pc}^2$ replacing $4C_{E,3}^2(1+M_2)$. In particular, the floor vanishes when $\epsilon_{pc} = 0$ regardless of $c_\zeta$.

\begin{remark}[Reduced Floor under Prediction Consistency]\label{rem:prediction_consistency_reduction}
    The floor $C_{floor}$ in Theorem~\ref{thm:stoc} arises from the generic first-order prediction error bound $\norm{\boldsymbol{E}_k} \le C_{E,3}\delta_{k+1}(1+\norm{\boldsymbol{x}_k})$, which contributes $\hat{C}_b = 4C_{E,3}^2(1+M_2)$ to the recursion~\eqref{eq:V_recursion_parta}. If the prediction network satisfies a prediction consistency condition---namely, $\mathbb{E}[\norm{\boldsymbol{E}_k}^2 \mid \mathcal{F}_k] \le \epsilon_{pc}^2\delta_{k+1}^2$ for some $\epsilon_{pc} \ge 0$---then $\hat{C}_b$ is replaced by $\epsilon_{pc}^2$ in the recursion, and the floor reduces to:
    \begin{equation*}
        C_{floor}^{(pc)} = \frac{128\epsilon_{pc}^2 c_0}{225\mu c_\zeta\lambda_{\min}^2}.
    \end{equation*}
    When $\epsilon_{pc} = 0$ (perfectly consistent flow), $C_{floor}^{(pc)} = 0$ and the bound becomes $V_N \le C_{final}\bar\zeta$ with no constant floor. The proof is identical to that of Theorem~\ref{thm:stoc}, with $\hat{C}_b$ replaced by $\epsilon_{pc}^2$; all other steps, constants, and the complexity $O(\varepsilon^{-2}\log(1/\varepsilon))$ remain unchanged. This parallels the modular ``algorithm error + approximation error" decomposition in diffusion model convergence theory \citep{chen2022sampling,benton2023nearly}. 
\end{remark}

It's worth noting that in the above analysis, we analyze the expected guidance loss of $\boldsymbol{x}_1^{(k)}$, while the algorithm outputs $\boldsymbol{x}_{k+1}$ at each step. We can establish a relationship between the average losses of the two. From the algorithm update $\boldsymbol{x}_{k+1} = \delta_{k+1}\xi_k + \boldsymbol{m}_k$ where $\boldsymbol{m}_k = t_{k+1}\hat{\boldsymbol{x}}_1^{(k)} - c_{\zeta} \delta_k\boldsymbol{g}_k$ with $\boldsymbol{g}_k = J_{t_k}^{\top} \bar{\boldsymbol{g}}_k$ and the $L_{\mathcal{L}}$-smoothness of the guided loss, we have,
\begin{equation*}
    \begin{aligned}
        \mathbb{E}_k [\mathcal{L}( \boldsymbol{x}_{k+1})] &\le \mathcal L(\boldsymbol{m}_k) + \mathbb{E}_k [\langle \nabla \mathcal{L} (\boldsymbol{m}_k), \delta_{k+1}\xi_k \rangle] + \frac{L_{\mathcal L}}{2} \mathbb E_k [\|\delta_{k+1}\xi_k\|]^2 \\
        & \leq \mathcal L(\boldsymbol{m}_k)  + \frac{L_{\mathcal L}}{2} \delta_{k+1}^2 d  \\
        & \leq \mathcal L(\hat{\boldsymbol{x}}_1^{(k)}) +\langle \bar{\boldsymbol{g}}_k,\boldsymbol{m}_k-\hat{\boldsymbol{x}}_1^{(k)}\rangle +\frac{L_{\mathcal L}}{2}\|\boldsymbol{m}_k-\hat{\boldsymbol{x}}_1^{(k)}\|^2 + \frac{L_{\mathcal L}}{2} \delta_{k+1}^2 d \\
        &= \mathcal L(\hat{\boldsymbol{x}}_1^{(k)}) - \langle \bar{\boldsymbol{g}}_k, \delta_{k+1}\hat{\boldsymbol{x}}_1^{(k)} + c_{\zeta} \delta_k\boldsymbol{g}_k\rangle + \frac{L_{\mathcal L}}{2}\|\delta_{k+1}\hat{\boldsymbol{x}}_1^{(k)} + c_{\zeta} \delta_k\boldsymbol{g}_k\|^2 + \frac{L_{\mathcal L}}{2} \delta_{k+1}^2 d \\
        &\leq \mathcal L(\hat{\boldsymbol{x}}_1^{(k)}) + \delta_{k+1} \|\bar{\boldsymbol{g}}_k\| \|\hat{\boldsymbol{x}}_1^{(k)}\| + c_{\zeta} \delta_k \|J_{t_k}\| \|\bar{\boldsymbol{g}}_k\|^2 + L_{\mathcal{L}} \delta_{k+1}^2 \|\hat{\boldsymbol{x}}_1^{(k)}\|^2 \\
        &\quad + L_{\mathcal{L}} c_{\zeta}^2 \delta_k^2 \|J_{t_k}\|^2 \|\bar{\boldsymbol{g}}_k\|^2 + \frac{L_{\mathcal L}}{2} \delta_{k+1}^2 d.
    \end{aligned}
\end{equation*}

Taking the full expectation, the upper bound in Phase 1 is established by \Cref{ass:loss} and \Cref{cor:derived}; whereas in Phase 2, the bound is governed by the condition $\delta_k < \epsilon_s = c_0 \bar{\zeta}$, in conjunction with \Cref{ass:loss} and \Cref{cor:derived}. Thus, we finally obtain
\begin{equation*}
    \mathbb{E} [\mathcal{L}( \boldsymbol{x}_{N})] \leq V_{N-1} + C_{final}^{\prime} \bar{\zeta}.
\end{equation*}
with $C_{final}^{\prime}$ depending on $G_0, L_{\Phi}, c_0, M_{\hat{x},2}, c_{\zeta}$. This derivation does not introduce an additional irreducible floor.

The upper bound from Theorem~\ref{thm:stoc} features an irreducible floor $C_{floor} > 0$. While the adaptive guidance design mitigates this issue, a natural question is whether any algorithm using constant guidance strength can avoid a positive bias. We further investigate this issue in Appendix~\ref{apx:lower_bound} through a representative one-dimensional example. The analysis shows that incorporating guidance with constant strength is insufficient to fundamentally alter the lower-bound behavior.

\subsection{Hybrid Sampler Analysis}
The stochastic sampler and deterministic optimizer have complementary strengths. This section combines them into a hybrid framework that leverages the best of both.

\begin{theorem}[Hybrid Framework Convergence]\label{thm:hybrid}
    Consider Algorithm~\ref{alg:fm4pde} with the Deterministic $\to$ Stochastic ordering: deterministic phase for $t \in [\epsilon, t^*]$ where $t^* \in [t_*, 1-\epsilon_0]$, followed by stochastic phase for $t \in [t^*, 1 - \delta_{\min}]$ with adaptive guidance $\zeta_k = c_\zeta\delta_k$. Under Assumptions~\ref{ass:velocity}--\ref{ass:initial}, suppose:
    \begin{enumerate}[label=(\roman*)]
        \item The deterministic phase uses admissible step sizes (Definition~\ref{def:det_step_size}) and $G_c \ge \tilde{G}_0(1+R_{disc})$ (Theorem~\ref{thm:det_discrete}); the stochastic phase grid satisfies Assumption~\ref{ass:time_grid};
        \item The adaptive guidance coefficient $c_\zeta$ satisfies the stability condition~\eqref{eq:czeta_stability};
        \item $t^* \le 1 - \epsilon_0$ and $t^* < 1 - \delta_{\min}$ so that the stochastic phase is non-empty and the deterministic grid does not extend into the Assumption~\ref{ass:time_grid}(b) refinement region;
        \item $N_1 = O(\log(1/\epsilon))$ steps for the deterministic phase and $N_2$ sufficient steps for the stochastic phase to achieve the contraction specified in Theorem~\ref{thm:stoc}. Here $M_{loss}^{(hyb)}$ is the loss bound from Corollary~\ref{cor:derived}(c) with the modified moment constant $M_2^{(hyb)} := \max\{R_{disc}^2, M_2\}$, where $R_{disc}$ is the deterministic trajectory bound from Lemma~\ref{lem:det_discrete_traj} and $M_2$ is from Lemma~\ref{lem:moment}.
    \end{enumerate}
    Then the expected loss of the final endpoint prediction satisfies:
    \begin{equation}
    V_N := \mathbb{E}[\mathcal{L}(\hat{\boldsymbol{x}}_1^{(N)})] \le C_{final}^{(hyb)}\bar\zeta + C_{floor}^{(hyb)},
    \end{equation}
    where $\bar\zeta = c_\zeta\delta_{\min}$ and $C_{final}^{(hyb)}$, $C_{floor}^{(hyb)}$ depend on problem parameters (including $R_{disc}$ from Lemma~\ref{lem:det_discrete_traj}). The deterministic phase provides fast initial convergence to a loss basin, while the stochastic phase (using adaptive guidance) provides the final accuracy. Under a prediction consistency condition (Remark~\ref{rem:prediction_consistency_reduction}), the floor reduces to $O(\epsilon_{pc}^2)$.

    With total complexity:
    \begin{equation}
    N = O\left(\log(1/\epsilon) + \frac{1}{\bar\zeta^2}\log\frac{1}{\bar\zeta}\right).
    \end{equation}
    Setting $\bar\zeta = \varepsilon$ yields $N = O(\varepsilon^{-2}\log(1/\varepsilon))$.
\end{theorem}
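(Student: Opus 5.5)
The proof chains the two earlier analyses. The output of the deterministic phase becomes the initialization of the stochastic phase, and the stochastic recursion of Theorem~\ref{thm:stoc} is then rerun with moment constants that account for this modified initialization. Concretely, let $k_h$ be the first index with $t_{k_h}\ge t^*$.

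\textbf{Step 1: the deterministic phase.} On $[\epsilon,t^*]$ the grid is geometric and admissible (Definition~\ref{def:det_step_size}), and $G_c\ge\tilde G_0(1+R_{disc})$. Lemma~\ref{lem:det_discrete_traj} then gives $\|\boldsymbol{x}_k\|\le R_{disc}$ for all $k\le k_h$, so clipping is inactive. Since $t^*\ge t_*$, the Phase~A/Phase~B argument of Theorem~\ref{thm:det_discrete} applies verbatim on this sub-interval. Together with Assumption~\ref{ass:initial}, it gives
\[
\mathcal{L}(\boldsymbol{x}_{k_h})\le C_{det}\epsilon^{2\mu}V_0+C'_{det}
\]
after $N_1=O(\log(1/\epsilon))$ steps. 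In particular $\|\boldsymbol{x}_{k_h}\|^2\le R_{disc}^2$ deterministically, so $\mathbb{E}\|\boldsymbol{x}_{k_h}\|^2\le R_{disc}^2$ and $\mathbb{E}\|\boldsymbol{x}_{k_h}\|^4\le R_{disc}^4$.

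\textbf{Step 2: moment bounds for the stochastic phase.} For $k\ge k_h$ I would rerun the invariant-region arguments of Lemmas~\ref{lem:moment} and~\ref{lem:fourth_moment}, started at $\boldsymbol{x}_{k_h}$ instead of $\boldsymbol{x}_0$. The one-step recursion
\[
\mathbb{E}_k\|\boldsymbol{x}_{k+1}\|^2\le(1-\delta_k\kappa'/2)\|\boldsymbol{x}_k\|^2+C'_{loc}\delta_k
\]
uses only Assumptions~\ref{ass:velocity},~\ref{ass:dissipative},~\ref{ass:time_grid} and the stability condition~\eqref{eq:czeta_stability}, none of which refer to the initial law. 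Replacing $W_0$ by $R_{disc}^2$ therefore yields a uniform bound $M_2^{(hyb)}=\max\{R_{disc}^2,M_2\}$. Similarly one gets $M_4^{(hyb)}=\max\{R_{disc}^4,2\tilde C_4/\kappa'\}$, where $\tilde C_4$ is recomputed with $M_2^{(hyb)}$. The bound $\bar\delta_{ratio}\ge1/2$ still holds because the stochastic grid satisfies Assumption~\ref{ass:time_grid}. Corollary~\ref{cor:derived} then gives $V_k\le M_{loss}^{(hyb)}$ for all $k\ge k_h$. One could also use the sharper bound from Step~1 at $k=k_h$, namely $\mathcal{L}(\hat{\boldsymbol{x}}_1^{(k_h)})$ controlled via smoothness by $\mathcal{L}(\boldsymbol{x}_{k_h})$ plus $O(1)$, but this is not needed for the stated rate.

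\textbf{Step 3: the descent phase.} Condition (iii) gives $t^*\le1-\epsilon_0<1-\epsilon_s$, so the whole Phase~2 of Theorem~\ref{thm:stoc} (where $t_k\ge1-\epsilon_s$) lies inside the stochastic portion, and Assumption~\ref{ass:jacobian} applies there. Lemma~\ref{lem:onestep} depends only on the current state and the grid, so it holds unchanged. I then repeat the Phase~2 recursion~\eqref{eq:V_recursion_parta}: PL, Young with parameter $15c_\zeta\delta_k\lambda_{\min}/32$, absorption via $c_\zeta\le c_{\zeta,1}$. All constants $\hat C_a,\hat C_b,\tilde C_2$ are now computed with $M_2^{(hyb)},M_4^{(hyb)}$. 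Unrolling from $V_{K_{trans}}\le M_{loss}^{(hyb)}$ with
\[
N_2\ge\bar\rho^{-1}\log(2M_{loss}^{(hyb)}/\bar\zeta)
\]
gives $V_N\le C^{(hyb)}_{final}\bar\zeta+C^{(hyb)}_{floor}$. Under prediction consistency, Remark~\ref{rem:prediction_consistency_reduction} replaces $\hat C_b$ by $\epsilon_{pc}^2$, giving the $O(\epsilon_{pc}^2)$ floor.

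\textbf{Step 4: complexity.} Add the step counts: $N_1=\log(t^*/\epsilon)/\log(1+\eta)=O(\log(1/\epsilon))$. The stochastic Phase~1 has $O(1)$ steps for a fixed grid. For Phase~2, $\bar\rho\propto\bar\zeta$, and the refinement $\Delta t_k\le c_\Delta\delta_k$ near $1-\delta_{\min}$ gives $N_2=O(\bar\zeta^{-2}\log(1/\bar\zeta))$, counted as in Theorem~\ref{thm:stoc}. Setting $\bar\zeta=\varepsilon$ gives the stated $N=O(\varepsilon^{-2}\log(1/\varepsilon))$.

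The main obstacle is Step~2. The stochastic sampler's moment lemmas were proved from a Gaussian start, and the hand-off must keep the constants independent of $\delta_{\min}$. The crux is checking that every constant in Lemmas~\ref{lem:moment}--\ref{lem:onestep} enters only through an upper bound on the initial moments. I would verify this line by line, confirming that $W_0$ and $W_0^{(4)}$ appear only in the max defining $M_2$ and $M_4$, so they can be replaced by $R_{disc}^2$ and $R_{disc}^4$.
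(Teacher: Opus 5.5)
Your proposal is correct and follows essentially the same route as the paper. You apply Theorem~\ref{thm:det_discrete} on $[\epsilon,t^*]$ to get the trajectory bound $\|\boldsymbol{x}_{K^*}\|\le R_{disc}$, restart the invariant-region argument of Lemma~\ref{lem:moment} from that state to obtain $M_2^{(hyb)}=\max\{R_{disc}^2,M_2\}$, run the Phase~2 recursion of Theorem~\ref{thm:stoc} unchanged from $V_{K_{trans}}\le M_{loss}^{(hyb)}$ without using the deterministic loss value, and add the step counts. Your extra checks (the fourth-moment analogue $M_4^{(hyb)}$, and that Phase~2 lies inside the stochastic portion because $t^*\le 1-\epsilon_0<1-\epsilon_s$) are details the paper leaves implicit. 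The one caveat, which the paper shares, is that under $\boldsymbol{x}_0\sim\mathcal{N}(\mathbf{0},I)$ the quantity $R_{disc}=\max\{R,\|\boldsymbol{x}_0\|\}$ is random, so the hand-off bound should read $\mathbb{E}\|\boldsymbol{x}_{K^*}\|^2\le R^2+d$ rather than hold ``deterministically.''
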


\begin{proof}
    Phase I (Deterministic, $t \in [\epsilon, t^*]$). Applying the analysis of Theorem~\ref{thm:det_discrete} to the time interval $[\epsilon, t^*]$:
    \begin{itemize}
        \item Phase A ($t \in [\epsilon, t_*]$): Exponential contraction with rate $\epsilon^{2\mu}$.
        \item Phase B ($t \in [t_*, t^*]$): Linear accumulation.
    \end{itemize}

    Let $K^*$ denote the step index at which $t_{K^*} = t^*$. By the two-phase analysis of Theorem~\ref{thm:det_discrete}:
    \begin{equation}
        \mathcal{L}(\boldsymbol{x}_{K^*}) \le C_A \cdot \epsilon^{2\mu} \cdot \mathcal{L}(\boldsymbol{x}_0) + C'_A + C'_B,
    \end{equation}
    where $C'_A$ and $C'_B$ account for Phase A residual and Phase B linear accumulation, respectively.

    Phase II (Stochastic with adaptive guidance, $t \in [t^*, 1-\delta_{\min}]$). Starting from the deterministic output $\boldsymbol{x}_{K^*}$ with $\|\boldsymbol{x}_{K^*}\| \le R_{disc}$, we apply the stochastic analysis with adaptive guidance $\zeta_k = c_\zeta\delta_k$.

    Note that the deterministic phase measures $\mathcal{L}(\boldsymbol{x}_k)$ (current state loss), while the stochastic phase measures $V_k = \mathbb{E}[\mathcal{L}(\hat{\boldsymbol{x}}_1^{(k)})]$ (endpoint prediction loss). This transition requires no explicit conversion: the stochastic analysis uses only the trajectory bound $\|\boldsymbol{x}_{K^*}\| \le R_{disc}$ as its initial condition, and the Phase~1 bound (Corollary~\ref{cor:derived}(c)) provides $V_{K_{trans}} \le M_{loss}^{(hyb)}$ independently of the deterministic loss value.

    The moment bounds from Lemma~\ref{lem:moment} apply with the modified initial condition $\|\boldsymbol{x}_{K^*}\|^2 \le R_{disc}^2$, yielding $M_2^{(hyb)} := \max\{R_{disc}^2, M_2\}$. Crucially, these bounds are independent of $\delta_{\min}$ due to adaptive guidance.

    The convergence analysis of Theorem~\ref{thm:stoc} then applies directly to the stochastic phase. By Theorem~\ref{thm:stoc}:
    \begin{equation}
    V_N \le C_{final}^{(hyb)}\bar\zeta + C_{floor}^{(hyb)},
    \end{equation}
    where the constants $C_{final}^{(hyb)}$ and $C_{floor}^{(hyb)}$ are as in Theorem~\ref{thm:stoc} but with the hybrid moment constants replacing the standard ones. Under a prediction consistency condition (Remark~\ref{rem:prediction_consistency_reduction}), the floor reduces to $O(\epsilon_{pc}^2)$.

    For Complexity, the deterministic phase requires $N_1 = O(\log(1/\epsilon))$ steps (geometric grid from $\epsilon$ to $t^*$). The stochastic phase requires $N_2 = O(\bar\zeta^{-2}\log(1/\bar\zeta))$ steps by Theorem~\ref{thm:stoc}. Total: $N = N_1 + N_2 = O(\log(1/\epsilon) + \bar\zeta^{-2}\log(1/\bar\zeta))$. To achieve $V_N \le \varepsilon$, set $\bar\zeta = (\varepsilon - C_{floor}^{(hyb)})/C_{final}^{(hyb)}$, giving $N = O(\varepsilon^{-2}\log(1/\varepsilon))$.
\end{proof}

\begin{remark}[Optimal Transition Point]
    The choice of $t^*$ balances:
    \begin{itemize}
        \item Running the deterministic phase long enough to include the full Phase A region $[\epsilon, t_*]$, ensuring exponential convergence $\epsilon^{2\mu}$;
        \item Leaving room for the stochastic phase to provide exploration and moment bounds.
    \end{itemize}
    A natural choice is $t^* = t_*$ or slightly larger. This captures all of Phase A contraction, providing fast initial convergence, while maximizing the stochastic exploration phase. Optimizing $t^*$ for specific problem instances is an interesting direction.
\end{remark}

\begin{remark}[Alternative Orderings and Unified Framework]\label{rem:alternative_orderings}
    All hybrid variants can be described using a single transition point $t^* \in [\epsilon, T]$ where $T := 1 - \delta_{\min}$. Recall from the critical time that Phase A (contraction) occurs for $t \in [\epsilon, t_*]$ where $b_t > \beta_1$, and Phase B (accumulation) occurs for $t \in [t_*, T]$ where $b_t \le \beta_1$.

    \textbf{Ordering 1: Deterministic $\to$ Stochastic.}
    \begin{itemize}
        \item Deterministic phase: $t \in [\epsilon, t^*]$; Stochastic phase: $t \in [t^*, T]$.
        \item If $t^* \ge t_*$: The deterministic phase covers all of Phase A. Result: $\mathbb{E}[\mathcal{L}(\boldsymbol{x}_N)] \le C_{final}^{(hyb)}\bar\zeta + C_{floor}^{(hyb)}$ (Theorem~\ref{thm:hybrid}).
        \item If $t^* < t_*$: The contraction phase has been reduced, yielding a weaker rate.
        \item If $t^* = \epsilon$: Pure stochastic. Result: $C_{final}\bar\zeta + C_{floor}$ (Theorem~\ref{thm:stoc}).
        \item If $t^* = T$: Pure deterministic. Result: $O(\epsilon^{2\mu})$ (Theorem~\ref{thm:det_discrete}).
    \end{itemize}

    \textbf{Ordering 2: Stochastic $\to$ Deterministic.}
    \begin{itemize}
        \item Stochastic phase: $t \in [\epsilon, t^*]$; Deterministic phase: $t \in [t^*, T]$.
        \item The deterministic phase starts at $t = t^* > \epsilon$, so it cannot exploit the Phase A region $[\epsilon, t_*]$ where $b_t > \beta_1$.
        \item If $t^* \ge t_*$: The deterministic phase is entirely in Phase B. By Lemma~\ref{lem:det_one_step}(b), only linear accumulation occurs. The exponential rate $\epsilon^{2\mu}$ is not achieved.
        \item If $t^* < t_*$: The contraction integral is only on $(t^*, t_*)$, giving a contraction factor $(t^*/t_*)^{2\mu} = O(1)$, which is a constant, since $t^*$ is bounded away from $0$, instead of the vanishing factor $\epsilon^{2\mu}$.
    \end{itemize}
\end{remark}

\begin{remark}[When to Use Which]
    Based on the above analysis, regarding strategy selection, the decision should be tailored to the specific objective: Pure Stochastic methods are suitable for scenarios requiring distributional sampling or multi-modal exploration; conversely, when seeking a single mode under the conditions of PL, coercivity, and $\beta_1 < 1$, Pure Deterministic methods are preferred due to their logarithmic convergence complexity. The Hybrid strategy (Deterministic $\to$ Stochastic) combines the strengths of both approaches, utilizing a deterministic Phase A to rapidly locate the convergence basin within $O(\log(1/\varepsilon))$ steps, before switching to stochastic mode to enhance subsequent exploration and robustness.
\end{remark}

\section{Experiments}\label{sec:experiments}

In this section, we empirically demonstrate the efficacy of our proposed framework through a series of numerical experiments. We benchmark our method against established operator learning paradigms and contemporary generative models, evaluating performance across both forward and inverse problem settings. To ensure reproducibility, the complete source code—covering data generation, model training, and sampling algorithms—is publicly available on GitHub \url{https://github.com/Astringency/FM4PDE}. All experiments were conducted using dual NVIDIA A100 GPUs (80GB).

\subsection{Data Preparation and Problem Settings}

\textbf{Darcy Flow.} We consider the static Darcy Flow with homogeneous Dirichlet boundary conditions on $\partial \Omega$:
\begin{equation*}
    \begin{aligned}
        - \nabla \cdot (\boldsymbol {a} (\boldsymbol {c}) \nabla \boldsymbol{u} (\boldsymbol {c})) & = q (\boldsymbol {c}), \quad \boldsymbol {c} \in \Omega \\
    \boldsymbol{u} (\boldsymbol {c}) & = 0, \quad \boldsymbol {c} \in \partial \Omega
    \end{aligned}
\end{equation*}
We set $q(\boldsymbol{c}) = 1$, and the coefficient $\boldsymbol{a}$ takes binary values of either 12 or 4.

\textbf{Helmholtz Equations and Poisson Equations.} We consider the inhomogeneous Helmholtz Equations with homogeneous Dirichlet boundary conditions on $\partial \Omega$, which describes wave propagation:
\begin{equation}\label{eq:inhom_helmholtz}
    \begin{aligned}
        \nabla^2 \boldsymbol{u}(\boldsymbol{c}) + k^2 \boldsymbol{u}(\boldsymbol{c}) &= \boldsymbol{a}(\boldsymbol{c}), \quad \boldsymbol{c} \in \Omega, \\
        \boldsymbol{u}(\boldsymbol{c}) &= 0, \quad \boldsymbol{c} \in \partial \Omega.
    \end{aligned}
\end{equation}
In this setting, $\boldsymbol{a}$ is a piecewise constant function. The constant $k$ serves as a control parameter and we fix $k=1$ for Helmholtz case, When $k$ vanishes, the system reduces to a Poisson equation.

\textbf{Non-bounded Navier-Stokes Equations.} We consider the vorticity formulation of the incompressible Navier-Stokes equations in an unbounded domain:
\begin{equation*}
    \begin{aligned}
        \partial_t w(\boldsymbol{c}, \tau) + v(\boldsymbol{c}, \tau) \cdot \nabla w(\boldsymbol{c}, \tau) & = \nu \Delta w(\boldsymbol{c}, \tau) + q(\boldsymbol{c}), \\
        \nabla \cdot v(\boldsymbol{c}, \tau) & = 0, \quad \boldsymbol{c} \in \Omega, \tau \in (0, \mathcal{T}].
    \end{aligned}
\end{equation*}
Here, $w = \nabla \times v$ denotes the vorticity, $v(\boldsymbol{c}, \tau)$ is the velocity field, and $q(\boldsymbol{c})$ represents the external forcing. The viscosity is set to $\nu = 10^{-3}$, corresponding to a Reynolds number of $R_e = 1000$. Our study focuses on the joint distribution of the initial state $w_0$ and the terminal state $w_{\mathcal{T}}$, with the time horizon set to $\mathcal{T} = 10$. The structural PDE guidance is imposed as $\nabla \cdot w = \nabla \cdot (\nabla \times v)$.


\textbf{Shallow-Water Equations.} Derived from the Navier-Stokes equations, the Shallow-Water Equations (SWE) provide a robust framework for modeling free-surface flows. In two dimensions, this system of hyperbolic PDEs is governed by:
\begin{equation*}
    \begin{aligned}
        \partial_t h + \nabla \cdot (h \boldsymbol{u}) & = 0, \\
        \partial_t (h \boldsymbol{u}) + \nabla \cdot \left( \boldsymbol{u}^2 h + \dfrac{1}{2} g_r h^2 \right) & = - g_r h \nabla b,
    \end{aligned}
\end{equation*}
where $\boldsymbol{u}=(u,v)$ represents the velocity vector, $h$ is the water depth, and $b$ denotes the spatially varying bathymetry. The term $h \boldsymbol{u}$ corresponds to the momentum flux, with $g_r$ representing gravitational acceleration. 


\textbf{Reaction-Diffusion Equations.} We consider a two-component reaction-diffusion system in a 2D domain, governed by the FitzHugh-Nagumo model. The system describes the interaction between an activator $u(\boldsymbol{c}, \tau)$ and an inhibitor $v(\boldsymbol{c}, \tau)$ through the following coupled PDEs:
\begin{equation*}
    \begin{aligned}
        \partial_{t} u & = D_u \Delta u + R_u(u, v), \\
        \partial_{t} v & = D_v \Delta v + R_v(u, v),
    \end{aligned}
\end{equation*}
where $D_u = 1 \times 10^{-3}$ and $D_v = 5 \times 10^{-3}$ are the diffusion coefficients. The nonlinear reaction kinetics are defined as:
\begin{equation*}
    \begin{aligned}
        R_u(u, v) & = u - u^3 - k - v, \\
        R_v(u, v) & = u - v,
    \end{aligned}
\end{equation*}
with the parameter set to $k = 5 \times 10^{-3}$.


\textbf{Burgers’ Equations.} We study the viscous Burgers’ equation on a one-dimensional unit domain $\Omega = (0, 1)$ with periodic boundary conditions. The governing equation is given by:
\begin{equation*}
    \begin{aligned}
        \partial_t u (\boldsymbol{c}, \tau) + u \partial_{\boldsymbol{c}} u (\boldsymbol{c}, \tau) & = \nu \partial^2_{\boldsymbol{c}} u (\boldsymbol{c}, \tau), \\
        u(\boldsymbol{c}, 0) & = u_0(\boldsymbol{c}), \quad \boldsymbol{c} \in \Omega, \tau \in (0, \mathcal{T}],
    \end{aligned}
\end{equation*}
where we set the viscosity to $\nu = 0.01$. For the numerical experiment, we discretize the spatial domain with 128 points ($u_0 \in \mathbb{R}^{128}$). The system is evolved for 127 subsequent time steps to produce a full spatiotemporal trajectory $u_{0:\mathcal{T}}$ with dimensions $128 \times 128$.


Coefficient fields for static equations and initial conditions for time-dependent problems are sampled from Gaussian Random Fields (GRFs), with ground truth trajectories computed via standard numerical techniques like finite difference methods. To strictly evaluate model generalization, we employ different GRF settings for training and testing. In particular, the test samples are generated from a smoother distribution compared to the training set. To train the flow matching model, we generate 50000 training samples for each PDE.

\subsection{Baseline Approaches and Evaluation Metrics}

We compare the proposed method with several representative baselines across different PDE-solving scenarios. First, to evaluate the model’s capability in conventional PDE-solving tasks, we provide full observation data and compare FM4PDE with FNO and DeepONet. We implement both the forward and inverse problems of FNO using the PDEBench \citep{takamoto2022pdebench} framework, while DeepONet is trained and inferred using the DeepXDE \citep{lu2021deepxde} library.

Specifically, the inverse problem of the FNO is solved using a gradient-based optimization approach \citep{takamoto2022pdebench}, where the solution is obtained by minimizing the following prediction loss $\mathcal{L}_{\mathrm{inverse}}(\boldsymbol{u}, \hat{\boldsymbol{u}} (\hat{\boldsymbol{a}}))$, where $\hat{\boldsymbol{a}}$ denotes the inferred inverse solution, and $\hat{\boldsymbol{u}} (\hat{\boldsymbol{a}})$ represents the predicted forward solution obtained by feeding $\hat{\boldsymbol{a}}$ into the FNO forward model. Subsequently, to assess the performance of our approach in sparse reconstruction tasks, we compare it against DiffusionPDE.

To evaluate the model’s ability to solve PDEs and reconstruct the global solution from sparse observations, we adopt the relative error as the evaluation metric, which is defined as follows. Let $\boldsymbol{x}_{\text{pred}}$ denote the predicted result and $\boldsymbol{x}_{\text{true}}$ denote the ground-truth result. The relative error is given by
\begin{equation*}
    \epsilon_{\text{rel}} = \frac{\|\boldsymbol{x}_{\text{pred}} - \boldsymbol{x}_{\text{true}}\|_2}{\|\boldsymbol{x}_{\text{true}}\|_2},
\end{equation*}
where $\|\cdot\|_2$ denotes the $L^2$ norm. We evaluate the model on a test set that is distinct from the training data. To ensure robustness, we independently generate 20 test samples for each evaluation, and the final metric is reported as the average over these 20 runs.

\subsection{Experimental Results}

We evaluate the capabilities of FM4PDE through a variety of simulation experiments. Since FNO and DeepONet primarily focus on solving PDEs under fully observed settings, we first compare FM4PDE with these two models using complete observation data. Experiments are conducted on Darcy flow, the Poisson equation, the Helmholtz equation, and the Navier–Stokes equations. The results are summarized in \Cref{tab:full_eval}.

\begin{table}[H]
    \footnotesize
    \centering
    \caption{Comparative analysis of relative errors between FM4PDE, DeepONet, and FNO for forward and inverse problems across various PDEs under full observations. The number of steps for the FM4PDE algorithm is set to 1000.}
    \label{tab:full_eval}
    \begin{tabular}{llrrr}
        \toprule
        \textbf{PDE} & \textbf{Problem} & \textbf{FM4PDE} & \textbf{DeepONet} & \textbf{FNO} \\
        \midrule
        Darcy Flow & Forward & 1.33\% & 39.58\% & 3.62\% \\
         & Inverse & 12.73\% & 5.21\% & 91.20\% \\
        \midrule
        Poisson & Forward & 1.65\% & 45.38\% & 54.69\% \\
         & Inverse & 17.02\% & 76.86\% & 59.32\% \\
        \midrule
        Helmholtz & Forward & 13.85\% & 41.06\% & 51.64\% \\
         & Inverse & 24.78\% & 40.26\% & 65.16\% \\
        \midrule
        Navier-Stokes & Forward & 0.32\% & 26.29\% & 35.61\% \\
         & Inverse & 6.39\% & 33.33\% & 19.59\% \\
        \bottomrule
    \end{tabular}
\end{table}

For the sparse reconstruction task, we compare the performance of FM4PDE and DiffusionPDE for both forward and inverse problems of Darcy flow, the Poisson equation, the Helmholtz equation, and the Navier–Stokes equations. For the forward problem, we provided 500 randomly selected sparse observations of either the coefficients (for static equations) or the initial state (for time-dependent equations); for the inverse problem, we provided 500 observations of either the solution (for static equations) or the final state (for time-dependent equations). The relative errors of the reconstruction results are shown in \Cref{tab:sparse_eval}.

\begin{table}[H]
    \footnotesize
    \centering
    \caption{Comparative analysis of relative errors between FM4PDE and DiffusionPDE for forward and inverse problems across various PDEs under sparse observations. The number of steps for both algorithms is set to 1000.}
    \label{tab:sparse_eval}
        \begin{tabular}{llrr}
        \toprule
        \textbf{PDE} & \textbf{Problem} & \textbf{FM4PDE} & \textbf{DiffusionPDE} \\
        \midrule
        Darcy Flow & Forward & 1.27\% & 13.44\% \\
         & Inverse & 17.76\% & 35.83\% \\
        \midrule
        Poisson & Forward & 3.02\% & 15.32\% \\
         & Inverse & 24.82\% & 56.54\% \\
        \midrule
        Helmholtz & Forward & 7.48\% & 22.02\% \\
         & Inverse & 40.82\% & 44.50\% \\
        \midrule
        Navier-Stokes & Forward & 5.60\% & 2.71\% \\
         & Inverse & 8.16\% & 9.84\% \\
        \bottomrule
    \end{tabular}
\end{table}

For the Burgers equation, we evaluate two different sparse recovery modes: one is similar to the other two-dimensional equations, where 500 observation points are randomly assigned across 128 temporal and 128 spatial grids, and we attempt to recover solutions across all spatiotemporal grids; the other is to assign spatial solutions at 5 of the 128 time points and consider recovering spatial solutions across all time points. The recovery results are shown in \Cref{fig:burgers_time_interval}. 

\begin{figure}[H]
    \centering
    \includegraphics[width=0.8\linewidth]{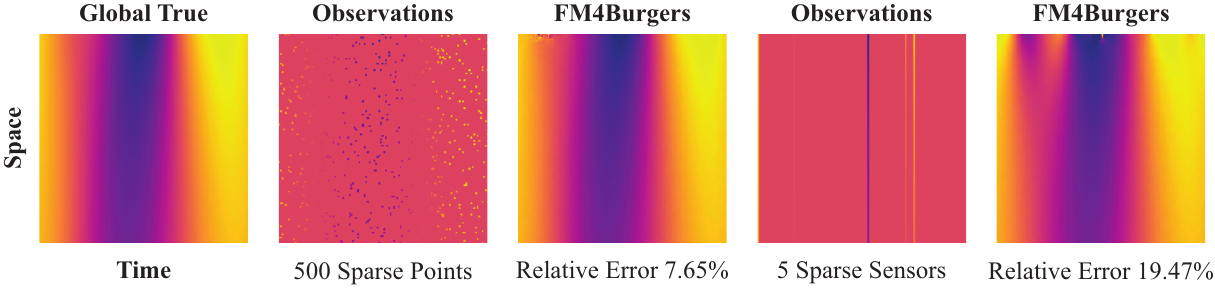}
    \caption{Performance of FM4PDE in reconstructing the full temporal evolution of the Burgers' equation under two sparse observation patterns.}
    \label{fig:burgers_time_interval}
\end{figure}

For the Reaction-Diffusion equation and the Shallow-Water equation, the number of physical quantities required to formulate the PDE loss is greater than one, and the PDE loss cannot be simplified in the same manner as the Navier-Stokes equation. Therefore, we design a network architecture that accepts input and produces output with more channels. Based on this architecture, we perform a global solution reconstruction given 500 sparse observation points. The results are presented in \Cref{fig:reaction_diffusion} and \Cref{fig:shallow_water}. 

\begin{figure}[H]
    \centering
    \includegraphics[width=0.75\linewidth]{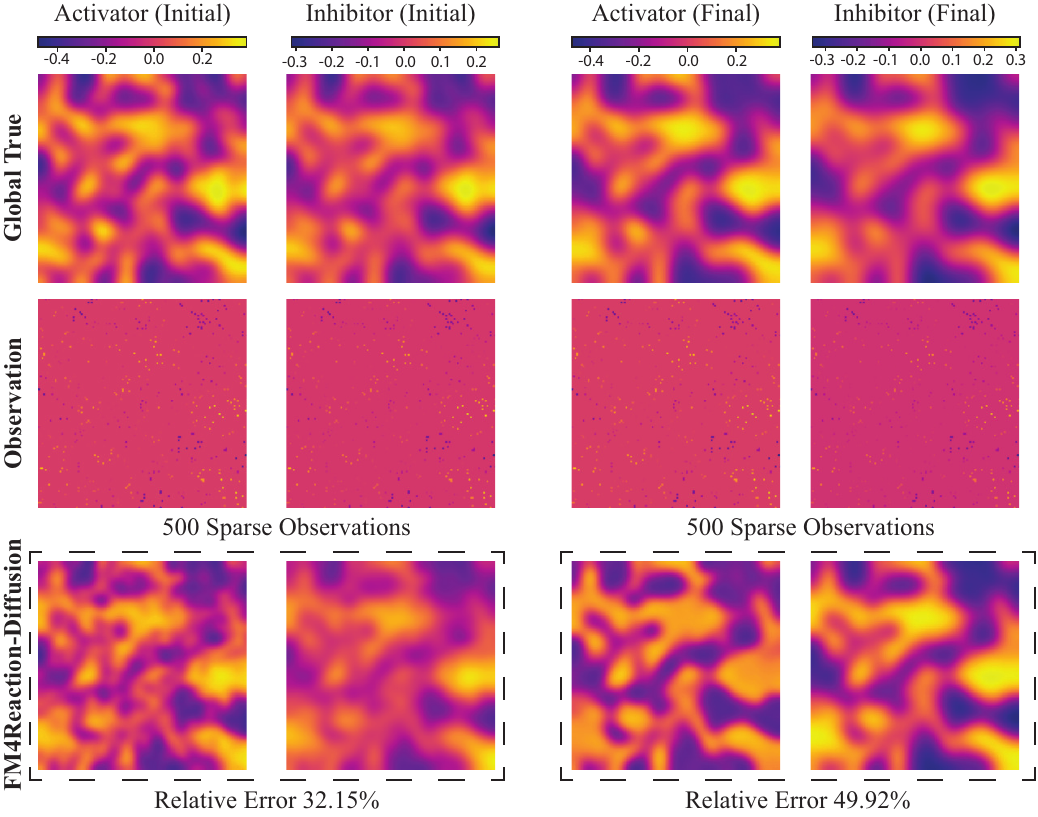}
    \caption{Reconstruction of the reaction-diffusion equation from 500 sparse observation points using FM4PDE.}
    \label{fig:reaction_diffusion}
\end{figure}

\begin{figure}[H]
    \centering
    \includegraphics[width=0.8\linewidth]{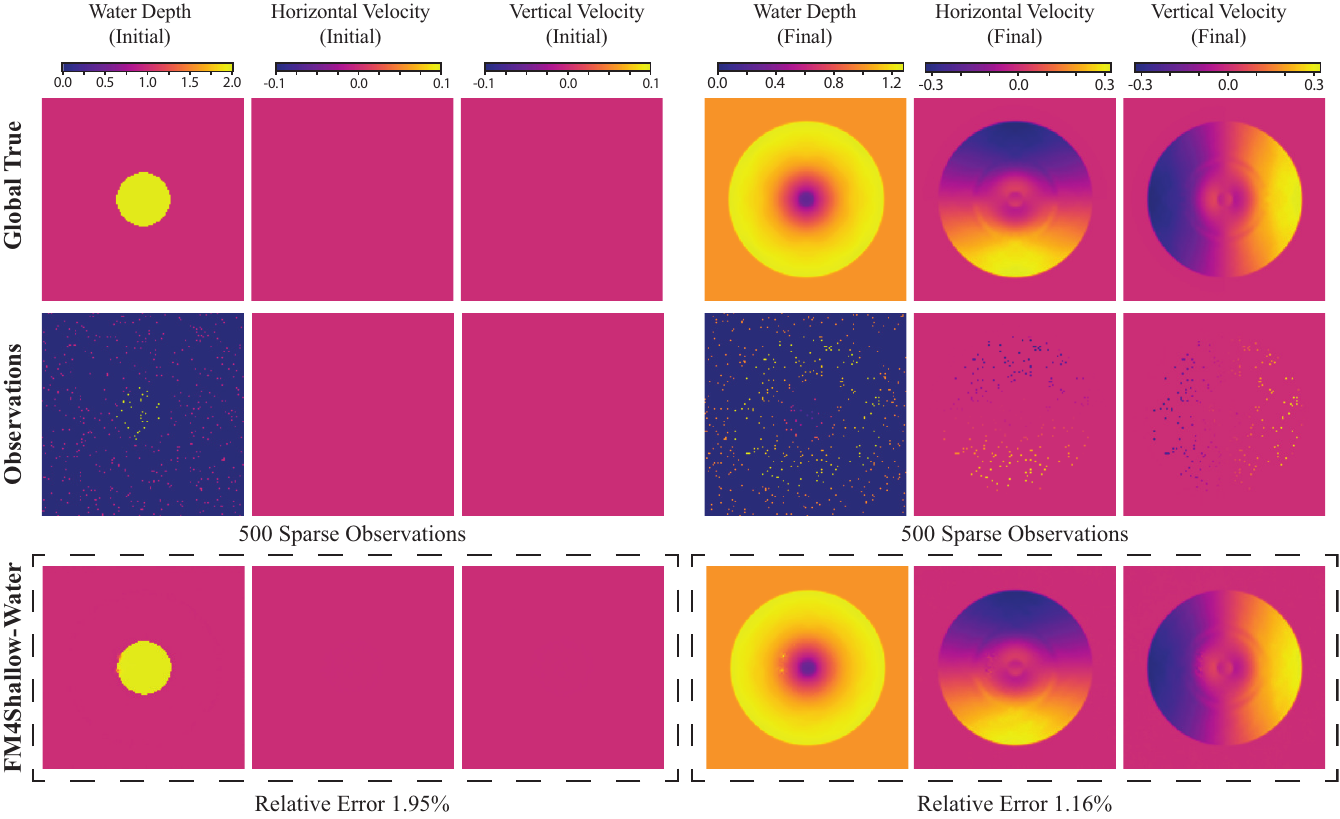}
    \caption{Reconstruction of the shallow-water equations from 500 sparse observation points using FM4PDE.}
    \label{fig:shallow_water}
\end{figure}

We also compare the sampling times of FM4PDE and DiffusionPDE across different tasks. As shown in \Cref{fig:time_eval_forward} and \Cref{fig:time_eval_inverse}, FM4PDE achieves comparable or faster sampling times across all tested equations.

\begin{figure}[H]
    \centering
    \includegraphics[width=0.8\linewidth]{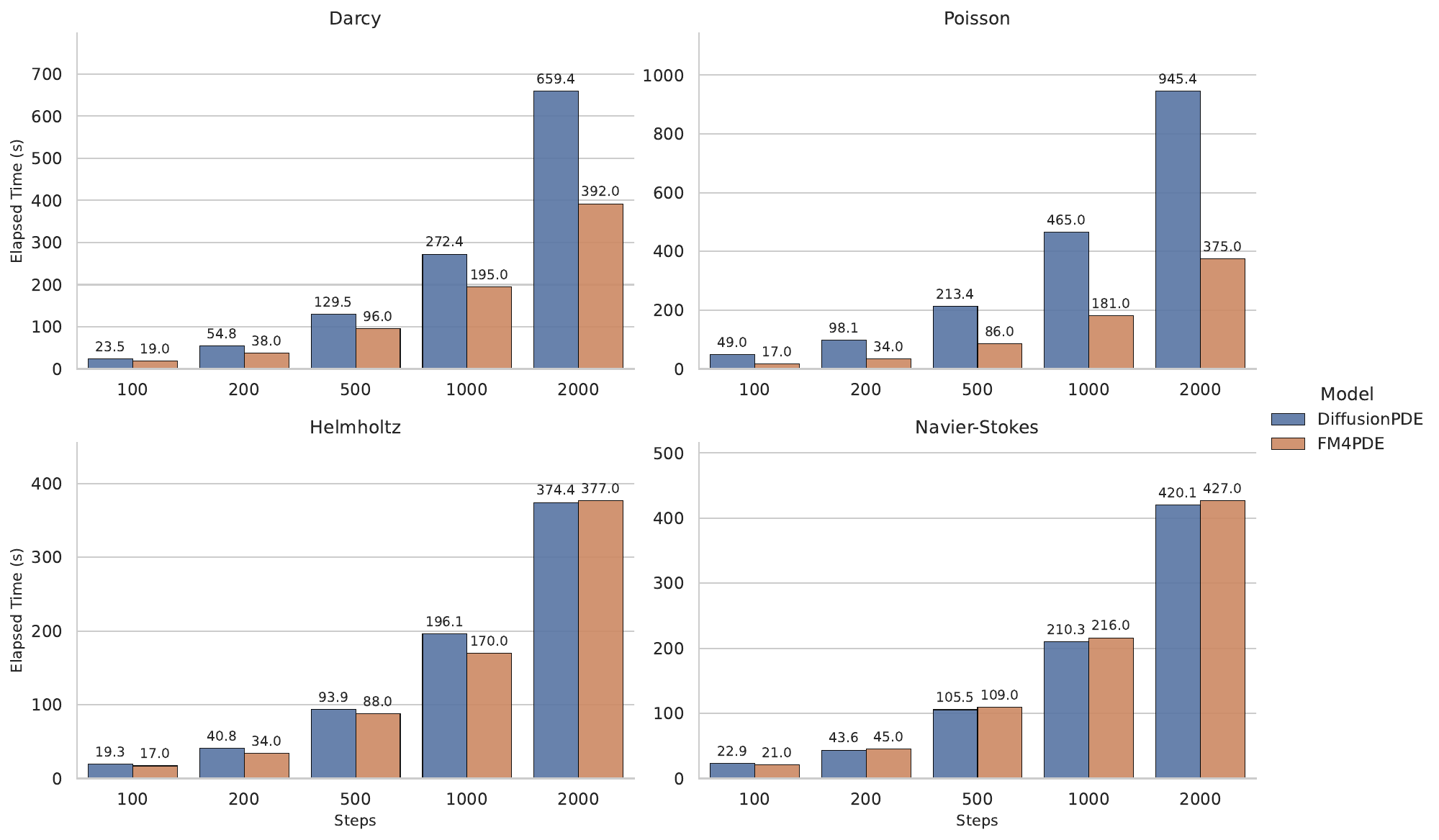}
    \caption{Comparison of sampling times between FM4PDE and DiffusionPDE for solving forward problems of different equations.}
    \label{fig:time_eval_forward}
\end{figure}

\begin{figure}[H]
    \centering
    \includegraphics[width=0.8\linewidth]{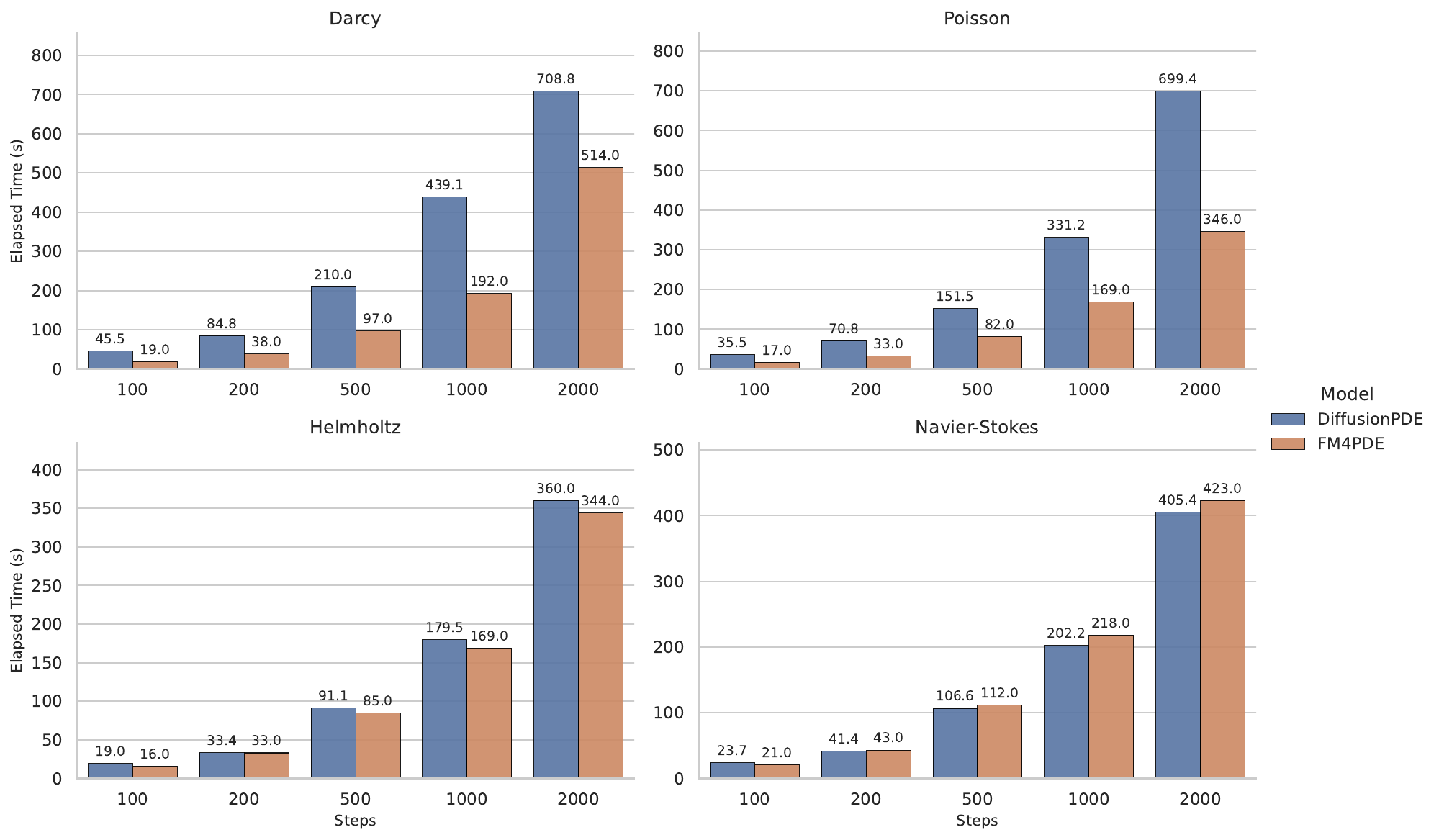}
    \caption{Comparison of sampling times between FM4PDE and DiffusionPDE for solving inverse problems of different equations.}
    \label{fig:time_eval_inverse}
\end{figure}

\section{Discussion}\label{sec:discussion}

\subsection{Impact of Observation Sparsity}

We evaluate the impact of different sparsity levels on the generation performance of the FM4PDE algorithm. \Cref{fig:burgers_diffsensors} shows the sparse recovery results of the Burgers equation given solutions for different time segments.

\begin{figure}[H]
    \centering
    \includegraphics[width=0.8\linewidth]{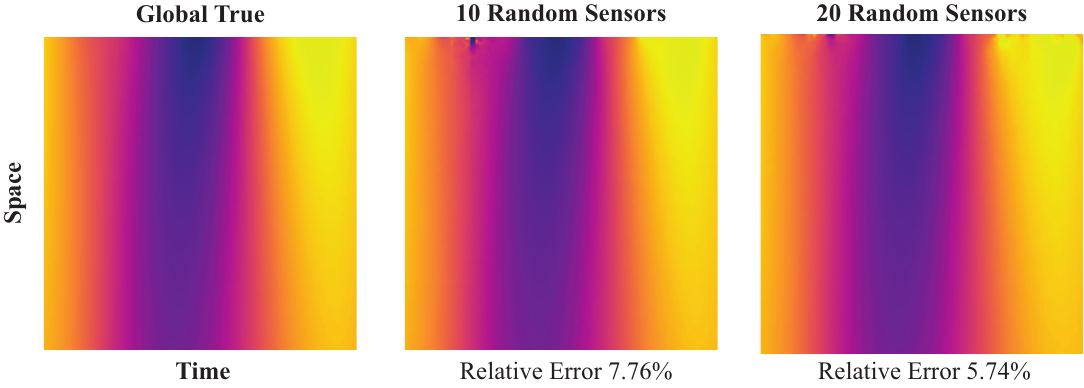}
    \caption{Reconstruction results of FM4PDE on the Burgers equation given 10 and 20 time segments.}
    \label{fig:burgers_diffsensors}
\end{figure}

\subsection{Impact of Step Sizes}

We examine the performance of the FM4PDE algorithm under different sampling step sizes and the number of steps. \Cref{fig:poisson_diffstepsize} shows the reconstruction of the Poisson equation. The reconstruction results tend to improve as the step size becomes finer.

\begin{figure}[H]
    \centering
    \includegraphics[width=0.9\linewidth]{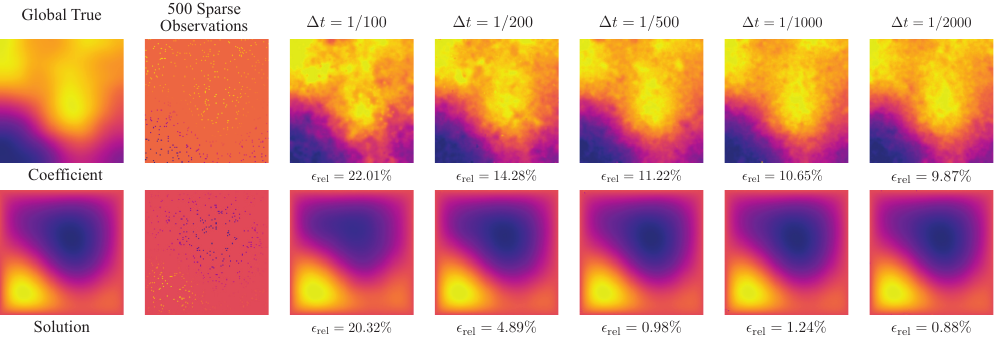}
    \caption{The sampling results of solving the Poisson equation using FM4PDE at different step sizes.}
    \label{fig:poisson_diffstepsize}
\end{figure}

\subsection{Comparison of Sampling Strategies}

We also examine the performance of different sampling schemes for FM4PDE, including purely deterministic, purely stochastic, and various hybrid configurations. In the notation below, ``D'' denotes the deterministic sampler and ``S'' denotes the stochastic sampler; for example, 0.2D+0.8S indicates that the deterministic sampler is used in the first 20\% of the sampling steps and the stochastic sampler in the remaining 80\%. \Cref{tab:sampler_eval} reports the global error, observation loss, and PDE loss obtained by different sampling schemes in the sparse recovery task of the Poisson equation. Global error is measured as relative error, while observation loss and PDE loss are measured as mean squared error.

\begin{table}[htbp]
    \footnotesize
    \centering
    \caption{Global relative error (Loss), observation mean squared error (ObsLoss), and PDE mean squared error (PDELoss) for various sampling strategy mixing ratios on the Poisson equation sparse recovery task.}
    \label{tab:sampler_eval}
    \begin{tabular}{cccccc}
        \toprule
        \textbf{Sampler} & \textbf{Loss(Coef)} & \textbf{Loss(Sol)} & \textbf{ObsLoss(Coef)} & \textbf{ObsLoss(Sol)} & \textbf{PDELoss} \\
        \midrule
        Pure D & 84.02\% & 62.76\% & 0.0125 & 0.0002 & 0.0083 \\
        Pure S & 7.81\% & 0.73\% & 0.0027 & $< 0.0001$ & 0.0078 \\
        0.7S+0.3D & 25.36\% & 6.67\% & 0.0036 & $< 0.0001$ & 0.0092 \\
        0.8S+0.2D & 19.67\% & 4.56\% & 0.0026 & $< 0.0001$ & 0.0021 \\
        0.9S+0.1D & 11.67\% & 3.13\% & 0.0015 & $< 0.0001$ & 0.0015 \\
        0.1D+0.9S & 8.14\% & 1.48\% & 0.0027 & $< 0.0001$ & 0.0077 \\
        0.2D+0.8S & 7.84\% & 1.02\% & 0.0027 & $< 0.0001$ & 0.0073 \\
        0.3D+0.7S & 8.31\% & 1.78\% & 0.0027 & $< 0.0001$ & 0.0067 \\
        0.8D+0.2S & 22.69\% & 29.41\% & 0.0026 & $< 0.0001$ & 0.0088 \\
        \bottomrule
    \end{tabular}
\end{table}

In our experiments, the purely stochastic scheme consistently achieves the best performance, followed by the hybrid strategy that transitions from deterministic to stochastic updates (Det$\to$Stoch). These findings are consistent with the theoretical analysis. \Cref{thm:stoc} shows that the purely stochastic sampler with adaptive guidance achieves $V_N = O(\bar\zeta)$, while \Cref{thm:hybrid} establishes the same asymptotic bound for the Det$\to$Stoch hybrid. The advantage of the hybrid is structural: the deterministic Phase~I provides fast initial convergence to a loss basin in $O(\log(1/\varepsilon))$ steps via Phase~A contraction, while the stochastic Phase~II provides the final accuracy through adaptive guidance. The empirical results confirm that a short deterministic initialization (e.g., 0.2D+0.8S) slightly improves over purely stochastic sampling, consistent with this structural benefit.

Conversely, Stoch$\to$Det orderings (e.g., 0.8S+0.2D) perform worse because the deterministic phase starts at $t = t^* > \epsilon$, missing the Phase~A contraction region where $b_t = 1/t - 1 \gg 1$. As shown in \Cref{rem:alternative_orderings}, this yields only $O(1)$ contraction instead of the vanishing factor $\epsilon^{2\mu}$. Similarly, the purely deterministic method performs poorly, as the error accumulation inherent to Phase~B dominates and degrades convergence.

\section{Conclusion}\label{sec:conclusion}

In this work, we introduced FM4PDE, a generative modeling framework based on flow matching, for reconstructing global solutions of partial differential equations from sparse observations. By learning a continuous transformation from a noise distribution to the joint distribution of PDE solutions and their coefficients, FM4PDE captures the underlying dynamics and statistical structure of the solution space. Our method integrates sparse observational guidance and PDE residual constraints into the sampling process, enabling physically consistent and high-fidelity reconstructions. The proposed sampler achieves faster inference than existing generative approaches, and we provide theoretical guarantees to support its convergence and reliability. FM4PDE demonstrates the potential of combining generative priors with physics-informed constraints for solving forward and inverse PDE problems under limited data conditions.

Despite its strengths, the framework has limitations that merit future exploration. Currently, FM4PDE's treatment of time-dependent systems is restricted to initial and final states for high-dimensional PDEs. Integrating flow-matching temporal evolution with the intrinsic dynamics of PDEs represents a promising path toward more physically faithful modeling. Furthermore, moving beyond residual-guided sampling to develop efficient, data-driven methods for hyperparameter inference remains an essential task for practical deployment.


\acks{This work is supported by the National Key R\&D Program of China 2025YFA1018700,  the Beijing Outstanding Young Scientist Program (No.JWZQ20240101027), and the Beijing Natural Science Foundation (No. JR25003).}


\newpage

\appendix

\section{Lower Bound of Stochastic Sampler with Constant Guidance}\label{apx:lower_bound}

This section establishes a mechanism-level lower bound for the stochastic sampler with constant and adaptive guidance: we prove that the steady-state loss $V_{ss}$ is bounded away from zero for any constant guidance strength $\zeta$, independent of the number of steps or the grid refinement. The lower bound is demonstrated on a canonical one-dimensional (1D) linear instance ($u_t(x) = -x$, $\mathcal{L}(x) = x^2/2$) that satisfies all standing assumptions, representing the best-case scenario for constant guidance: a perfectly learned velocity field with a strongly convex loss. For more complex PDE-constrained problems, the actual floor may be larger due to additional sources of error, but the mechanism-level floor $V_{ss} \ge \delta_{\min}/40$ established here is unavoidable under constant guidance. 

\begin{remark}[Scope Compatibility of Upper and Lower Bounds]\label{rem:scope_compatibility}
    The 1D instance used below ($u_t(x) = -x$, $\kappa = B_u = 1$) satisfies the small gain condition $B_u^2 = 1 < 2 = 2\kappa$, so it falls within the scope of both the upper bound (Theorem~\ref{thm:stoc}) and the lower bound (Theorem~\ref{thm:lower_bound}). 
\end{remark}

\begin{theorem}[Lower Bound for Constant Guidance]\label{thm:lower_bound}
Consider the 1D instance of Algorithm~\ref{alg:fm4pde_stoc_sampler} with constant guidance $\zeta \in (0,1)$:
\begin{itemize}[topsep=2pt,itemsep=1pt]
    \item Velocity field $u_t(x) = -x$ (dissipativity constant $\kappa = 1$),
    \item Loss function $\mathcal{L}(x) = x^2/2$ (PL constant $\mu = 1$, smoothness $L_{\mathcal{L}} = 1$),
    \item Endpoint prediction $\Phi_t(x) = x + (1-t)(-x) = tx$ (so $J_t = t$).
\end{itemize}
Under the constant-guidance variant of Algorithm~\ref{alg:fm4pde_stoc_sampler} with a uniform grid satisfying $t_k = t_N$ for all Phase~2 steps (i.e., representing repeated evaluation at the terminal noise level $\delta_{\min}$), the exact second moment recursion gives $\mathbb{E}[x_{k+1}^2] = \delta_{\min}^2 + a^2 \cdot \mathbb{E}[x_k^2]$ with $a := (1-\zeta)t_N^2$ and $t_N := 1 - \delta_{\min}$. In the steady-state regime, the expected loss satisfies:
\begin{equation}\label{eq:Vss_lower}
    V_{ss} = \frac{t_N^2}{2} \cdot \frac{\delta_{\min}^2}{1 - (1-\zeta)^2 t_N^4} \ge \frac{\delta_{\min}}{40}
\end{equation}
for all $\zeta \le \delta_{\min}/2$ and $\delta_{\min} \le 1/4$. In particular, for any fixed $\delta_{\min} > 0$, the bias $V_{ss} \ge \delta_{\min}/40 > 0$ is bounded away from zero for all $\zeta \le \delta_{\min}/2$, confirming that a positive bias is unavoidable when constant guidance is used with a fixed noise floor.
\end{theorem}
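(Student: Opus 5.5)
The plan is to compute everything in closed form, since the 1D instance is linear and Gaussian.

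\textbf{Exact recursion.} With $u_t(x)=-x$ we have $\hat x_1^{(k)}=\Phi_{t_N}(x_k)=t_N x_k$. The endpoint gradient is $\bar g_k=\mathcal{L}'(\hat x_1^{(k)})=t_N x_k$, and the backpropagated gradient is $g_k=J_{t_N}\bar g_k=t_N^2x_k$. The constant-guidance update therefore reads
\begin{equation*}
x_{k+1}=\delta_{\min}\xi_k+t_N\hat x_1^{(k)}-\zeta g_k=\delta_{\min}\xi_k+(1-\zeta)t_N^2x_k=\delta_{\min}\xi_k+a x_k .
\end{equation*}
Because $\xi_k$ is independent of $\mathcal{F}_k$ and has mean zero (the same computation as in Lemma~\ref{lem:conditional}), the cross term vanishes. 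This gives exactly $\mathbb{E}[x_{k+1}^2]=\delta_{\min}^2+a^2\mathbb{E}[x_k^2]$.

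\textbf{Steady state.} We have $0<a<1$, since $\zeta\in(0,1)$ and $t_N<1$. The affine recursion is then a contraction, and $\mathbb{E}[x_k^2]$ converges geometrically, for any initialization, to its fixed point $W_{ss}=\delta_{\min}^2/(1-a^2)$. The loss we track is that of the endpoint prediction, $V=\mathbb{E}[\mathcal{L}(t_Nx)]=\tfrac{t_N^2}{2}\mathbb{E}[x^2]$. Substituting $W_{ss}$ gives the displayed identity for $V_{ss}$ in~\eqref{eq:Vss_lower}.

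\textbf{Lower bound.} This is the only step needing care: we must bound the denominator from above by a multiple of $\delta_{\min}$. We use two elementary inequalities:
\begin{itemize}
\item $(1-\zeta)^2\ge 1-2\zeta$;
\item $t_N^4=(1-\delta_{\min})^4\ge 1-4\delta_{\min}$, by Bernoulli's inequality.
\end{itemize}
Together they give $(1-\zeta)^2t_N^4\ge 1-2\zeta-4\delta_{\min}$. Hence
\begin{equation*}
1-(1-\zeta)^2t_N^4\le 2\zeta+4\delta_{\min}\le 5\delta_{\min},
\end{equation*}
using $\zeta\le\delta_{\min}/2$. Therefore $V_{ss}\ge t_N^2\delta_{\min}^2/(10\delta_{\min})=t_N^2\delta_{\min}/10$.

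Finally, $\delta_{\min}\le 1/4$ gives $t_N\ge 3/4$, so $t_N^2\ge 9/16$. Then $V_{ss}\ge \tfrac{9}{160}\delta_{\min}\ge\delta_{\min}/40$. This bound does not depend on $N$ or on the grid, which yields the claimed unavoidable positive bias.
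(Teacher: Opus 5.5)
Your proposal is correct and follows the paper's proof essentially step for step. It uses the same closed-form update $x_{k+1}=\delta_{\min}\xi_k+(1-\zeta)t_N^2x_k$, the same fixed point $W_{ss}=\delta_{\min}^2/(1-a^2)$, and the same denominator bound $1-(1-\zeta)^2t_N^4\le 2\zeta+4\delta_{\min}\le 5\delta_{\min}$. The differences are cosmetic. You use $t_N^2\ge 9/16$, giving $9\delta_{\min}/160$, where the paper uses the cruder $t_N^2\ge 1/4$. You also justify geometric convergence to the steady state explicitly. One small point: when you combine the two elementary inequalities, state that $1-2\zeta\ge 0$, which holds since $\zeta\le\delta_{\min}/2\le 1/8$, so the lower bounds may be multiplied.
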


\begin{proof}

    For this instance, $\hat{x}_1^{(k)} = \Phi_{t_k}(x_k) = t_k x_k$, and $\nabla\mathcal{L}(\hat{x}_1^{(k)}) = t_k x_k$. The backpropagated gradient is $g_k = J_{t_k}^\top \nabla\mathcal{L}(\hat{x}_1^{(k)}) = t_k \cdot t_k x_k = t_k^2 x_k$. The algorithm update becomes:
    \begin{equation*}
        x_{k+1} = \delta_{k+1}\xi_k + t_{k+1} \cdot t_k x_k - \zeta t_k^2 x_k = \delta_{k+1}\xi_k + t_k(t_{k+1} - \zeta t_k) x_k.
    \end{equation*}
    Since $\xi_k \sim \mathcal{N}(0,1)$ is independent of $\mathcal{F}_k$, we have $\mathbb{E}[x_{k+1}^2] = \delta_{k+1}^2 + [t_k(t_{k+1} - \zeta t_k)]^2 \mathbb{E}[x_k^2]$. When $\delta_k = \delta_{\min}$ for all Phase~2 steps, we have $t_k = t_N = 1 - \delta_{\min}$ and the contraction coefficient is $a = (1-\zeta)t_N^2$. The recursion becomes:
    \begin{equation*}
        W_{k+1} = \delta_{\min}^2 + (1-\zeta)^2 t_N^4 \cdot W_k,
    \end{equation*}
    where $W_k := \mathbb{E}[x_k^2]$. The contraction factor $(1-\zeta)^2 t_N^4 < 1$ for $\zeta > 0$ and $t_N < 1$, so the unique steady state is:
    \begin{equation*}
        W_{ss} = \frac{\delta_{\min}^2}{1 - (1-\zeta)^2 t_N^4}.
    \end{equation*}

    Since $V_k = \frac{t_k^2}{2}\mathbb{E}[x_k^2]$ for $\mathcal{L}(x) = x^2/2$ and $\hat{x}_1^{(k)} = t_k x_k$:
    \begin{equation*}
        V_{ss} = \frac{t_N^2}{2} W_{ss} = \frac{t_N^2 \delta_{\min}^2}{2(1 - (1-\zeta)^2 t_N^4)}.
    \end{equation*}

    Since $\zeta^2 \le \zeta$ and $(1-\delta_{\min})^4 \ge 1 - 4\delta_{\min}$ for $\delta_{\min} \le 1/4$:
    \begin{align*}
        1 - (1-\zeta)^2 t_N^4 = 1 - (1-2\zeta+\zeta^2)(1-\delta_{\min})^4 \nonumber \le 2\zeta + 4\delta_{\min}.
    \end{align*}
    Also, $t_N^2 = (1-\delta_{\min})^2 \ge 1/4$ for $\delta_{\min} \le 1/2$. Therefore:
    \begin{equation*}
        V_{ss} \ge \frac{(1/4)\delta_{\min}^2}{2(2\zeta + 4\delta_{\min})} = \frac{\delta_{\min}^2}{8(2\zeta + 4\delta_{\min})}.
    \end{equation*}
    For $\zeta \le \delta_{\min}/2$, we have $2\zeta + 4\delta_{\min} \le 5\delta_{\min}$, giving:
    \begin{equation*}
        V_{ss} \ge \frac{\delta_{\min}^2}{8 \cdot 5\delta_{\min}} = \frac{\delta_{\min}}{40} > 0.
    \end{equation*}
\end{proof}


\begin{proposition}[Structural Floor under Adaptive Guidance]\label{prop:adaptive_floor}
    Consider the same 1D instance as Theorem~\ref{thm:lower_bound} ($u_t(x) = -x$, $\mathcal{L}(x) = x^2/2$), but now with adaptive guidance $\zeta_k = c_\zeta\delta_k$ and a geometric grid $\delta_{k+1} = (1-c_\Delta)\delta_k$ in Phase~2. At step $k$, the exact one-step recursion gives $W_{k+1} = \delta_{k+1}^2 + (1-c_\zeta\delta_k)^2 t_k^4\, W_k$. Iterating from $\delta_0 = \epsilon_s$ to $\delta_N = \delta_{\min}$, the terminal second moment satisfies:
    \begin{equation*}
        W_N \ge \frac{\delta_{\min}^2}{1 - (1-c_\zeta\delta_{\min})^2(1-\delta_{\min})^4} = O(\delta_{\min}/c_\zeta),
    \end{equation*}
    and the terminal expected loss is $V_N = \frac{t_N^2}{2}W_N = O(\delta_{\min}/c_\zeta) = O(\bar\zeta/c_\zeta^2)$. Thus, with adaptive guidance, $V_N = O(\bar\zeta)$---the loss scales \emph{linearly} with the terminal guidance strength, consistent with the upper bound $V_N \le C_{final} \bar\zeta + C_{floor}$ from Theorem~\ref{thm:stoc}. The floor $C_{floor}$ reflects the generic $O(\delta_k(1+\norm{\boldsymbol{x}_k}))$ prediction error, which in this 1D instance reduces to $O(\delta_k |x_k|)$. In the steady state, $|x_k| = O(1)$, so the error is $O(\delta_k)$ and the floor is $O(1)$.
\end{proposition}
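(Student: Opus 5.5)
The plan is to reuse the exact one-dimensional computation from the proof of Theorem~\ref{thm:lower_bound}, now with the step-dependent coefficients, and then extract a lower bound from the last step alone.

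\textbf{Step 1: exact recursion.} For $u_t(x)=-x$ and $\mathcal{L}(x)=x^2/2$ we have $\hat{x}_1^{(k)}=t_kx_k$ and $g_k=t_k^2x_k$. With $\zeta_k=c_\zeta\delta_k$ the update of Algorithm~\ref{alg:fm4pde_stoc_sampler} is
\begin{equation*}
x_{k+1}=\delta_{k+1}\xi_k+t_k\bigl(t_{k+1}-c_\zeta\delta_k t_k\bigr)x_k .
\end{equation*}
Taking second moments and using the independence of $\xi_k$ from $\mathcal{F}_k$ gives $W_{k+1}=\delta_{k+1}^2+a_k^2W_k$ with $a_k=t_k(t_{k+1}-c_\zeta\delta_kt_k)$.

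In the statement this is written as $(1-c_\zeta\delta_k)t_k^2$. That form corresponds to evaluating $t_{k+1}\approx t_k$, as in the constant-guidance theorem. I will adopt the statement's form, noting that $t_{k+1}\ge t_k$ makes the true $a_k$ only larger, which helps a lower bound.

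\textbf{Step 2: lower bound from the last step.} Since all terms are nonnegative, $W_N\ge\delta_N^2=\delta_{\min}^2$ trivially. To obtain the displayed quantity, I would compare the nonautonomous recursion with the autonomous one frozen at the terminal level. Along the grid, $\delta_k\ge\delta_{\min}$, so $t_k\le t_N$ and $c_\zeta\delta_k\ge c_\zeta\delta_{\min}$. Two effects therefore pull in opposite directions:
\begin{itemize}
\item the injected noise $\delta_{k+1}^2$ is at least $\delta_{\min}^2$;
\item the contraction $a_k^2$ is at most the frozen value $a_*^2=(1-c_\zeta\delta_{\min})^2t_N^4$.
\end{itemize}
A monotone comparison therefore does not directly give $W_N\ge W_{ss}$. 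This is the main obstacle.

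What I would try first is to use the geometric grid. The last $m\approx 1/(c_\Delta)\cdot\log 2$ steps all have $\delta_k\le 2\delta_{\min}$, so on that window $a_k^2\ge(1-2c_\zeta\delta_{\min})^2(1-2\delta_{\min})^4$. Unrolling the recursion over this window gives
\begin{equation*}
W_N\ge\delta_{\min}^2\sum_{j<m}\prod a^2 .
\end{equation*}
This is a truncated geometric series of order $\min\{m,\,1/(c_\zeta\delta_{\min}+\delta_{\min})\}\,\delta_{\min}^2$. The full sum $\delta_{\min}^2/(1-a_*^2)$ is then recovered up to constants whenever the window length $m$ is comparable to $1/(\delta_{\min}(c_\zeta+4))$.

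Honestly, the statement's ``$\ge$'' with the exact constant should be read as an order-of-magnitude relation, consistent with ``$=O(\cdot)$''. I would prove it as $W_N\gtrsim\delta_{\min}^2/(1-a_*^2)$ with an explicit constant.

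\textbf{Step 3: asymptotics.} Expand $1-(1-c_\zeta\delta_{\min})^2(1-\delta_{\min})^4\le 2c_\zeta\delta_{\min}+4\delta_{\min}$, exactly as in Theorem~\ref{thm:lower_bound}. This gives $W_N$ of order $\delta_{\min}/(2c_\zeta+4)$, which is $O(\delta_{\min}/c_\zeta)$ for $c_\zeta$ bounded below.

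Then $V_N=\tfrac{t_N^2}{2}W_N$ with $t_N^2\ge 1/4$, and substituting $\delta_{\min}=\bar\zeta/c_\zeta$ gives $V_N=\Theta(\bar\zeta/c_\zeta^2)$. This is linear in $\bar\zeta$, matching the $C_{final}\bar\zeta$ term of Theorem~\ref{thm:stoc}.

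\textbf{Step 4: the floor remark.} Here $E_k$ reduces to $O(\delta_k|x_k|)$, and $|x_k|=O(1)$ in mean square by Lemma~\ref{lem:moment}. This confirms that $C_{floor}$ is the generic $O(1)$ term and is not forced by this instance.
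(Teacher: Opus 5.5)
Your Step~1 is sound. The statement's coefficient $(1-c_\zeta\delta_k)t_k^2$ is indeed not exact, and the true $a_k=t_k(t_{k+1}-c_\zeta\delta_kt_k)$ dominates it, so lower bounds transfer. (The paper gives no derivation here beyond borrowing the frozen steady state of Theorem~\ref{thm:lower_bound}.)

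The gap is Step~2: the terminal window cannot deliver the claimed order. On the geometric grid $\delta_{N-j}=\delta_{\min}(1-c_\Delta)^{-j}$, the window $\{\delta_k\le 2\delta_{\min}\}$ contains at most about $\log 2/c_\Delta$ steps, a number independent of $\delta_{\min}$. Unrolling over it therefore yields at best $W_N\ge m\,\delta_{\min}^2=O(\delta_{\min}^2/c_\Delta)$. This falls short of $\delta_{\min}^2/(1-a_*^2)\asymp\delta_{\min}/(2c_\zeta+4)$ by a factor of order $c_\Delta/\delta_{\min}\to\infty$. Your proviso that $m$ be comparable to $1/(\delta_{\min}(c_\zeta+4))$ would force $c_\Delta\propto\delta_{\min}$, contradicting the fixed $c_\Delta$.

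Widening the window does not rescue it. With $\epsilon_s=c_0c_\zeta\delta_{\min}$ as in Theorem~\ref{thm:stoc}, all of Phase~2 injects at most $\sum_k\delta_{k+1}^2\le\epsilon_s^2/(1-(1-c_\Delta)^2)=O(\delta_{\min}^2)$. If $W=0$ at $\delta=\epsilon_s$, the claimed inequality is simply false for small $\delta_{\min}$. So the order-$\delta_{\min}$ moment must be carried into Phase~2, and you need a hypothesis on $W_{K_{trans}}$, about which the statement is silent.

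The missing idea is to compare with the moving fixed point rather than the recursion frozen at $\delta_{\min}$. Put $h(\delta):=1-(1-c_\zeta\delta)^2(1-\delta)^4$, and let $W_k^*:=\delta_{k+1}^2/h(\delta_k)$ be the fixed point of the increasing map $F_k(W)=\delta_{k+1}^2+(1-h(\delta_k))W$.
\begin{itemize}
\item If $W_k\ge W_k^*$, then $W_{k+1}=F_k(W_k)\ge F_k(W_k^*)=W_k^*$.
\item $W_k^*\ge W_{k+1}^*$, because $h$ is concave on $\{c_\zeta\delta\le 1\}$ with $h(0)=0$, so $h((1-c_\Delta)\delta)\ge(1-c_\Delta)h(\delta)\ge(1-c_\Delta)^2h(\delta)$.
\end{itemize}
This resolves the tension you identified: the fixed point $\delta^2/h(\delta)$ increases with $\delta$, so larger early noise outweighs stronger early contraction. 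Hence $\{W_k\ge W_k^*\}$ is invariant. Provided $W_{K_{trans}}\ge W_{K_{trans}}^*$, you get
\[
W_N\ge\delta_{\min}^2/h(\delta_{\min}/(1-c_\Delta))\ge(1-c_\Delta)\,\delta_{\min}^2/(1-a_*^2).
\]
Alternatively, $\prod_{k\ge K_{trans}}a_k^2\ge\exp(-C(c_\zeta+2)\epsilon_s/c_\Delta)$ on the geometric grid, so $W_N$ keeps a constant fraction of whatever moment Phase~1 hands over.

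A smaller point: since $1-a_*^2\asymp(2c_\zeta+4)\delta_{\min}$, the displayed quantity gives $V_N\asymp\bar\zeta/(c_\zeta(c_\zeta+2))$. In the paper's regime $c_\zeta\le 1$, your ``$V_N=\Theta(\bar\zeta/c_\zeta^2)$'' therefore holds only as an upper bound.
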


\begin{table}[H]
    \footnotesize
    \centering
    \caption{Comparison of achievable accuracy under different guidance strategies.}
    \label{tab:lower_bound_summary}
    \begin{tabular}{lccc}
        \toprule
        \textbf{Guidance strategy} & \textbf{Accuracy} & \textbf{Floor} & \textbf{Reference} \\
        \midrule
        Constant $\zeta$, fixed $\delta_{\min}$ & $\alpha\zeta + \beta$ & $\beta \ge \delta_{\min}/40 > 0$ & \Cref{thm:lower_bound} \\
        Adaptive $\zeta_k = c_\zeta\delta_k$ & $C_{final}\bar\zeta + C_{floor}$ & $C_{floor} > 0$ & \Cref{thm:stoc}, \Cref{prop:adaptive_floor} \\
        Adaptive + pred.\ consistency & $C_{final}\bar\zeta + C_{floor}^{(pc)}$ & $C_{floor}^{(pc)} \propto \epsilon_{pc}^2$ & \Cref{rem:prediction_consistency_reduction} \\
        Adaptive, $\epsilon_{pc} = 0$ & $C_{final}\bar\zeta$ & $0$ & \Cref{rem:prediction_consistency_reduction} \\
        \bottomrule
    \end{tabular}
\end{table}

\vskip 0.2in
\bibliography{references}

@inproceedings{zhang2023controlnet,
  title={Adding conditional control to text-to-image diffusion models},
  author={Zhang, Lvmin and Rao, Anyi and Agrawala, Maneesh},
  booktitle={Proceedings of the IEEE/CVF international conference on computer vision},
  pages={3836--3847},
  year={2023}
}

@article{raissi2019pinns,
  title={Physics-informed neural networks: A deep learning framework for solving forward and inverse problems involving nonlinear partial differential equations},
  author={Raissi, Maziar and Perdikaris, Paris and Karniadakis, George E},
  journal={Journal of Computational physics},
  volume={378},
  pages={686--707},
  year={2019},
  publisher={Elsevier}
}

@article{li2020fno,
  title={Fourier neural operator for parametric partial differential equations},
  author={Li, Zongyi and Kovachki, Nikola and Azizzadenesheli, Kamyar and Liu, Burigede and Bhattacharya, Kaushik and Stuart, Andrew and Anandkumar, Anima},
  journal={arXiv preprint arXiv:2010.08895},
  year={2020}
}

@article{lu2021deeponet,
  title={Learning nonlinear operators via DeepONet based on the universal approximation theorem of operators},
  author={Lu, Lu and Jin, Pengzhan and Pang, Guofei and Zhang, Zhongqiang and Karniadakis, George Em},
  journal={Nature machine intelligence},
  volume={3},
  number={3},
  pages={218--229},
  year={2021},
  publisher={Nature Publishing Group UK London}
}

@article{huang2024diffusionpde,
  title={DiffusionPDE: Generative PDE-solving under partial observation},
  author={Huang, Jiahe and Yang, Guandao and Wang, Zichen and Park, Jeong Joon},
  journal={Advances in Neural Information Processing Systems},
  volume={37},
  pages={130291--130323},
  year={2024}
}

@article{jacobsen2025cocogen,
  title={Cocogen: Physically consistent and conditioned score-based generative models for forward and inverse problems},
  author={Jacobsen, Christian and Zhuang, Yilin and Duraisamy, Karthik},
  journal={SIAM Journal on Scientific Computing},
  volume={47},
  number={2},
  pages={C399--C425},
  year={2025},
  publisher={SIAM}
}

@article{lu2021deepxde,
  title={DeepXDE: A deep learning library for solving differential equations},
  author={Lu, Lu and Meng, Xuhui and Mao, Zhiping and Karniadakis, George Em},
  journal={SIAM review},
  volume={63},
  number={1},
  pages={208--228},
  year={2021},
  publisher={SIAM}
}

@article{takamoto2022pdebench,
  title={Pdebench: An extensive benchmark for scientific machine learning},
  author={Takamoto, Makoto and Praditia, Timothy and Leiteritz, Raphael and MacKinlay, Daniel and Alesiani, Francesco and Pfl{\"u}ger, Dirk and Niepert, Mathias},
  journal={Advances in Neural Information Processing Systems},
  volume={35},
  pages={1596--1611},
  year={2022}
}

@article{utkarsh2025PCFM,
  title={Physics-Constrained Flow Matching: Sampling Generative Models with Hard Constraints},
  author={Utkarsh, Utkarsh and Cai, Pengfei and Edelman, Alan and Gomez-Bombarelli, Rafael and Rackauckas, Christopher Vincent},
  journal={arXiv preprint arXiv:2506.04171},
  year={2025}
}

@article{baldan2025PBFM,
  title={Flow Matching Meets PDEs: A Unified Framework for Physics-Constrained Generation},
  author={Baldan, Giacomo and Liu, Qiang and Guardone, Alberto and Thuerey, Nils},
  journal={arXiv preprint arXiv:2506.08604},
  year={2025}
}

@inproceedings{karimi2016linear,
  title={Linear convergence of gradient and proximal-gradient methods under the polyak-{\l}ojasiewicz condition},
  author={Karimi, Hamed and Nutini, Julie and Schmidt, Mark},
  booktitle={Joint European conference on machine learning and knowledge discovery in databases},
  pages={795--811},
  year={2016},
  organization={Springer}
}

@article{lipman2022flow,
  title={Flow matching for generative modeling},
  author={Lipman, Yaron and Chen, Ricky TQ and Ben-Hamu, Heli and Nickel, Maximilian and Le, Matt},
  journal={arXiv preprint arXiv:2210.02747},
  year={2022}
}

@article{lipman2024flow,
  title={Flow matching guide and code},
  author={Lipman, Yaron and Havasi, Marton and Holderrieth, Peter and Shaul, Neta and Le, Matt and Karrer, Brian and Chen, Ricky TQ and Lopez-Paz, David and Ben-Hamu, Heli and Gat, Itai},
  journal={arXiv preprint arXiv:2412.06264},
  year={2024}
}

@article{singh2024stochastic,
  title={Stochastic sampling from deterministic flow models},
  author={Singh, Saurabh and Fischer, Ian},
  journal={arXiv preprint arXiv:2410.02217},
  year={2024}
}

@article{lai2025principles,
  title={The principles of diffusion models},
  author={Lai, Chieh-Hsin and Song, Yang and Kim, Dongjun and Mitsufuji, Yuki and Ermon, Stefano},
  journal={arXiv preprint arXiv:2510.21890},
  year={2025}
}

@book{leveque2007finite,
  title={Finite difference methods for ordinary and partial differential equations: steady-state and time-dependent problems},
  author={LeVeque, Randall J},
  year={2007},
  publisher={SIAM}
}

@article{chen2022sampling,
  title={Sampling is as easy as learning the score: theory for diffusion models with minimal data assumptions},
  author={Chen, Sitan and Chewi, Sinho and Li, Jerry and Li, Yuanzhi and Salim, Adil and Zhang, Anru R},
  journal={arXiv preprint arXiv:2209.11215},
  year={2022}
}

@article{benton2023nearly,
  title={Nearly $ d $-linear convergence bounds for diffusion models via stochastic localization},
  author={Benton, Joe and De Bortoli, Valentin and Doucet, Arnaud and Deligiannidis, George},
  journal={arXiv preprint arXiv:2308.03686},
  year={2023}
}

@book{strauss2007partial,
  title={Partial Differential Equations: An Introduction},
  author={Strauss, Walter A},
  year={2007},
  edition={2nd},
  publisher={John Wiley \& Sons}
}

@book{evans2022partial,
  title={Partial Differential Equations},
  author={Evans, Lawrence C},
  year={2010},
  edition={2nd},
  series={Graduate Studies in Mathematics},
  volume={19},
  publisher={American Mathematical Society}
}

@inproceedings{ho2020denoising,
  title={Denoising diffusion probabilistic models},
  author={Ho, Jonathan and Jain, Ajay and Abbeel, Pieter},
  booktitle={Advances in Neural Information Processing Systems},
  volume={33},
  pages={6840--6851},
  year={2020}
}

@inproceedings{song2020score,
  title={Score-based generative modeling through stochastic differential equations},
  author={Song, Yang and Sohl-Dickstein, Jascha and Kingma, Diederik P and Kumar, Abhishek and Ermon, Stefano and Poole, Ben},
  booktitle={International Conference on Learning Representations},
  year={2021}
}

@article{ho2022classifier,
  title={Classifier-free diffusion guidance},
  author={Ho, Jonathan and Salimans, Tim},
  journal={arXiv preprint arXiv:2207.12598},
  year={2022}
}

@inproceedings{liu2022flow,
  title={Flow straight and fast: Learning to generate and transfer data with rectified flow},
  author={Liu, Xingchao and Gong, Chengyue and Liu, Qiang},
  booktitle={International Conference on Learning Representations},
  year={2023}
}

@article{kovachki2023neur,
  title={Neural operator: Learning maps between function spaces with applications to {PDE}s},
  author={Kovachki, Nikola and Li, Zongyi and Liu, Burigede and Azizzadenesheli, Kamyar and Bhattacharya, Kaushik and Stuart, Andrew and Anandkumar, Anima},
  journal={Journal of Machine Learning Research},
  volume={24},
  number={89},
  pages={1--97},
  year={2023}
}

@article{boulle2024operator,
  title={Operator learning without the adjoint},
  author={Boull{\'e}, Nicolas and Halikias, Diana and Otto, Samuel E and Townsend, Alex},
  journal={Journal of Machine Learning Research},
  volume={25},
  number={364},
  pages={1--54},
  year={2024}
}

@article{lanthaler2023operator,
  title={Operator learning with {PCA-Net}: upper and lower complexity bounds},
  author={Lanthaler, Samuel},
  journal={Journal of Machine Learning Research},
  volume={24},
  number={318},
  pages={1--67},
  year={2023}
}

@article{stepaniants2023learning,
  title={Learning partial differential equations in reproducing kernel {H}ilbert spaces},
  author={Stepaniants, George},
  journal={Journal of Machine Learning Research},
  volume={24},
  number={86},
  pages={1--72},
  year={2023}
}

@article{dalton2024boundary,
  title={Boundary constrained {G}aussian processes for robust physics-informed machine learning of linear partial differential equations},
  author={Dalton, David and Lazarus, Alan and Gao, Hao and Husmeier, Dirk},
  journal={Journal of Machine Learning Research},
  volume={25},
  number={272},
  pages={1--61},
  year={2024}
}

\end{document}